%% file: main.tex
\documentclass[]{uiuc_template}
\input{math_commands.tex}

 \usepackage{graphicx} 
\usepackage{url}

\usepackage{amsmath}
\usepackage{amssymb}
\usepackage{mathtools}
\usepackage{amsthm}
\usepackage{booktabs}
\usepackage{multirow}
\usepackage{makecell}
\usepackage[table]{xcolor}
\newcommand{\blue}{\color{blue}}
\usepackage{xspace}
\usepackage{amssymb} % For check mark symbols
\usepackage{multirow}
\usepackage{array}
\usepackage[most]{tcolorbox}
\usepackage{listings}
\newcolumntype{P}[1]{>{\centering\arraybackslash}p{#1}}

\usepackage{enumitem}

\usepackage{tcolorbox}
\usepackage{algorithm}
\usepackage{algpseudocode}

\usepackage{enumitem}
\usepackage{natbib} 

\usepackage{tikz}
\usepackage{wrapfig}
\usepackage{diagbox}
\usepackage{caption}
\newcommand{\updelta}[1]{\textcolor{green!60!black}{\ensuremath{\uparrow}\,\scriptsize(#1)}}
\newcommand{\downdelta}[1]{\textcolor{red!70!black}{\ensuremath{\downarrow}\,\scriptsize(#1)}}
\newtcbtheorem[number within=section]{theorem}{Theorem}{
  enhanced,
  breakable,
  colback=white,
  colframe=blue!70!black,
  coltitle=white,
  colbacktitle=blue!80!black,
  fonttitle=\bfseries,
  boxrule=0.9pt,
  arc=3pt,
  left=8pt,
  right=8pt,
  top=4pt,
  bottom=6pt,
  title filled=true,
  after title={\par\smallskip},  
}{thm}

\newtcbtheorem[number within=section]{lemma}{Lemma}{
  enhanced,
  breakable,
  colback=white,
  colframe=blue!70!black,
  coltitle=white,
  colbacktitle=blue!80!black,
  fonttitle=\bfseries,
  boxrule=0.9pt,
  arc=3pt,
  left=8pt,
  right=8pt,
  top=4pt,
  bottom=6pt,
  title filled=true,
  after title={\par\smallskip},  
}{lemma}

\newtcbtheorem[number within=section]{corollary}{Corollary}{
  enhanced,
  breakable,
  colback=white,
  colframe=blue!70!black,
  coltitle=white,
  colbacktitle=blue!80!black,
  fonttitle=\bfseries,
  boxrule=0.9pt,
  arc=3pt,
  left=8pt,
  right=8pt,
  top=4pt,
  bottom=6pt,
  title filled=true,
  after title={\par\smallskip},  
}{coro}

\newtcbtheorem[number within=section]{definition}{Definition}{
  enhanced,
  breakable,
  colback=white,
  colframe=blue!70!black,
  coltitle=white,
  colbacktitle=blue!80!black,
  fonttitle=\bfseries,
  boxrule=0.9pt,
  arc=3pt,
  left=8pt,
  right=8pt,
  top=4pt,
  bottom=6pt,
  title filled=true,
  after title={\par\smallskip},  
}{def}

\newtcbtheorem[number within=section]{example}{Example}{
  enhanced,
  breakable,
  colback=white,
  colframe=blue!70!black,
  coltitle=white,
  colbacktitle=blue!80!black,
  fonttitle=\bfseries,
  boxrule=0.9pt,
  arc=3pt,
  left=8pt,
  right=8pt,
  top=4pt,
  bottom=6pt,
  title filled=true,
  after title={\par\smallskip},  
}{exa}

\theoremstyle{definition}
\newtheorem{assumption}{Assumption}[section]

\newtcolorbox{promptbox}[1]{
    colback=blue!5!white,    % 背景颜色
    colframe=blue!75!black,  % 边框颜色
    fonttitle=\bfseries,     % 标题字体
    title=#1,                % 标题内容
    enhanced,                % 启用增强特性
    breakable,               % 允许跨页
    boxrule=0.8pt,           % 边框粗细
}

\input{inc/inc_packages}

\newtcolorbox{datanotebox}[1]{
    colback=white,
    colframe=gray!75!black,
    fonttitle=\bfseries,
    title=#1,
    arc=2pt,
    outer arc=2pt,
    boxrule=0.8pt,
    left=5pt,
    right=5pt,
    top=5pt,
    bottom=5pt,
    breakable
}

\definecolor{morandiGreen}{RGB}{80, 130, 100}
\definecolor{morandiYellow}{RGB}{200, 140, 80}
\definecolor{morandiBlue}{RGB}{70, 100, 140}
\definecolor{morandiPink}{RGB}{180, 80, 100}

\definecolor{airforceblue}{rgb}{0.36, 0.54, 0.66}
\definecolor{bluegray}{rgb}{0.4, 0.6, 0.8}
\definecolor{bleudefrance}{rgb}{0.19, 0.55, 0.91}
\hypersetup{colorlinks,linkcolor={airforceblue},citecolor={bleudefrance},urlcolor={bluegray}}
\usepackage{pifont}
\usepackage{fontawesome5}
\definecolor{beijingblue}{RGB}{220,235,247}
\newcommand{\xmark}{\ding{55}}%

\newcommand{\navoffline}{\textcolor{gray}{\faLocationArrow}}

\title{
GUI-Libra: Training Native GUI Agents to Reason and Act with Action-aware Supervision and Partially Verifiable RL 
}

\author[]{%
Rui Yang$^{1\dagger}$,\hspace{0.4em}
Qianhui Wu$^{2*}$,\hspace{0.4em}
Zhaoyang Wang$^{3\dagger}$,\hspace{0.4em}
Hanyang Chen$^{1}$,\hspace{0.4em}
Ke Yang$^{1\dagger}$,\hspace{0.4em}
Hao Cheng$^{2}$\\
Huaxiu Yao$^{3}$,\hspace{0.4em}
Baolin Peng$^{2}$,\hspace{0.4em} 
Huan Zhang$^{1}$,\hspace{0.4em}
Jianfeng Gao$^{2}$,\hspace{0.4em}
Tong Zhang$^{1}$%
}

\affiliation{
\includegraphics[height=4mm, trim=100 55 100 55, clip]{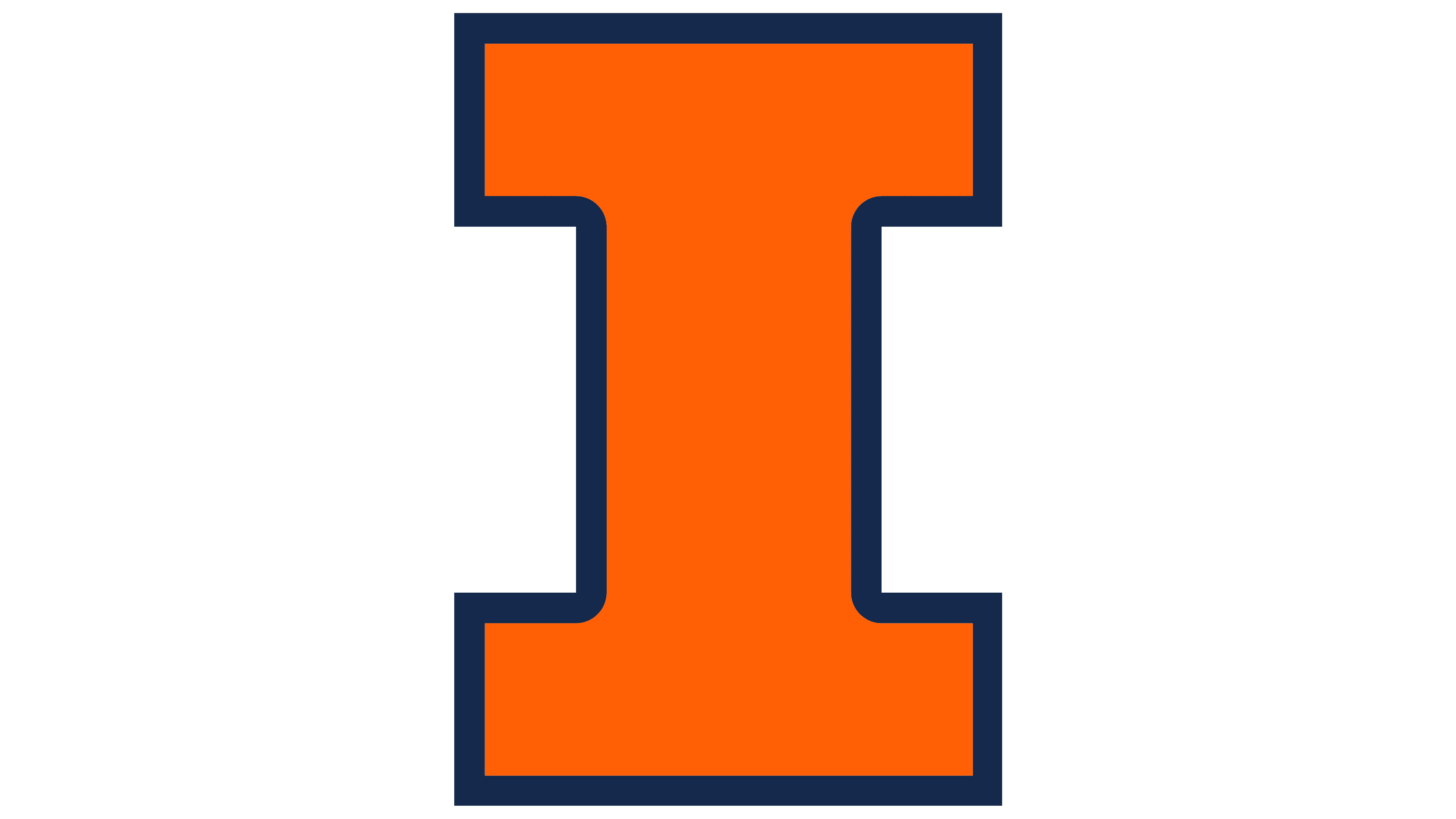} \hspace{-7pt} $^1$UIUC, \hspace{5pt}
\includegraphics[height=4mm, trim=55 55 55 55, clip]{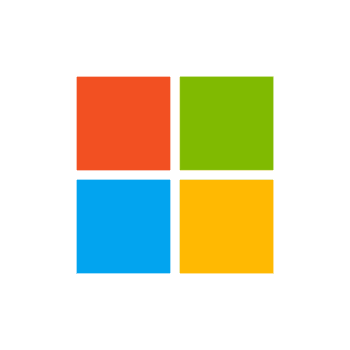} $^2$Microsoft, \hspace{2pt}
\raisebox{-0.5pt}{\includegraphics[height=4mm]{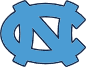}} \hspace{-2pt} $^3$UNC-Chapel Hill
\\
\vspace{6pt}
\href{https://gui-libra.github.io}{\faGlobe\ https://gui-libra.github.io}
}

\abstract{
Open-source native GUI agents have made rapid progress in visual grounding and low-level action execution, yet they still lag behind closed-source systems on long-horizon navigation tasks that demand both high-level reasoning and precise actions. This gap stems from two limitations in the open-source ecosystem: a shortage of high-quality, action-aligned reasoning data, and the direct adoption of generic post-training pipelines that overlook the unique challenges of GUI agents.
We identify two fundamental issues in these pipelines: (i) standard supervised fine-tuning (SFT) with long chain-of-thought (CoT) reasoning often hurts grounding accuracy, and (ii) step-wise RLVR-tyle training faces \emph{partial verifiability}, where multiple actions can be correct at a given state but only a single demonstrated action is used for verification. This causes reward ambiguity and makes offline step-wise metrics weak predictors of online task success during RL training.
In this work, we present \textbf{GUI-Libra}, a systematic study and tailored training recipe that addresses these challenges. First, to mitigate the scarcity of action-aligned reasoning data, we introduce a data construction and filtering pipeline and release a curated 81K GUI reasoning dataset. Second, to reconcile reasoning with grounding, we propose \emph{action-aware supervised fine-tuning} that mixes reasoning-then-action and direct-action supervision, and reweights tokens to emphasize action and grounding. Third, to stabilize RL under partial verifiability, we identify the overlooked importance of KL regularization and show, both theoretically and empirically, that a KL trust region is critical for improving offline-to-online predictability; we further introduce success-adaptive scaling to downweight unreliable negative gradients. 
Across diverse web and mobile benchmarks, GUI-Libra consistently improves both step-wise accuracy and end-to-end task completion, while strengthening the alignment between offline metrics and online performance. In particular, GUI-Libra-4B and GUI-Libra-8B improve their base models by +15.6\% and +12.2\% on AndroidWorld, +4.0\% and +8.7\% on Online-Mind2Web, and +12.5\% and +11.3\% on WebArena-Lite-v2, respectively. Our results suggest that carefully designed post-training and data curation can unlock significantly stronger task-solving capabilities without costly online data collection. We release our dataset, code, and models to facilitate further research on data-efficient post-training for reasoning-capable GUI agents.
}

\begin{document}

\maketitle

\vspace{-13pt}
\begin{figure}[ht]
    \centering
    \includegraphics[width=0.98\linewidth]{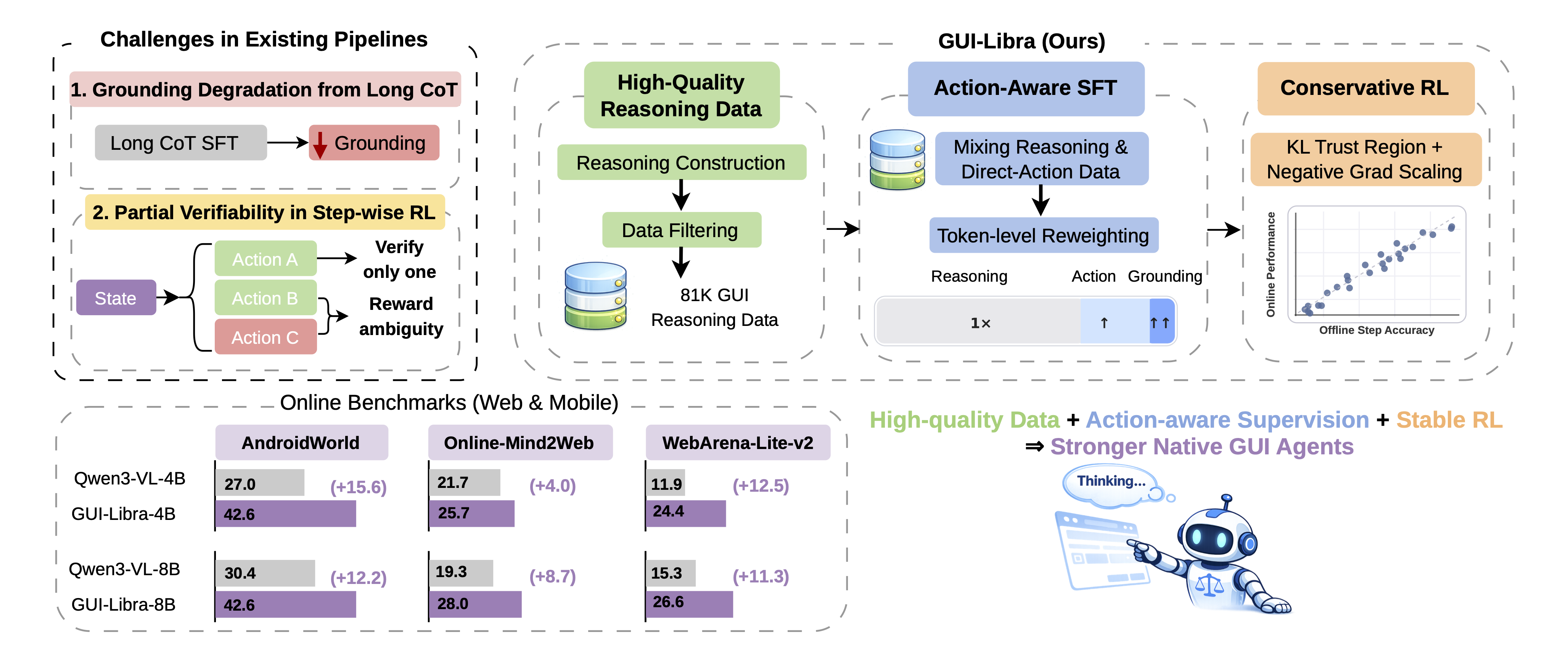}
    \vspace{-6pt}
    \caption{Overview of GUI-Libra. Using only a subset of existing open-source GUI trajectories, we tackle key limitations of prior training pipelines through action-aligned reasoning data curation, action-aware SFT, and conservative RL, yielding consistent gains on online benchmarks.}
    \vspace{-5pt}
    \label{fig:main_idea}
\end{figure}

\input{sections/intro}
\input{sections/related_work}

\input{sections/background}
\input{sections/method}

\input{sections/experiments}

\input{sections/conclusion}

\section*{Limitations}
Our work focuses on learning from existing open-source datasets. While this setting is meaningful and our results suggest that current trajectory corpora still have substantial untapped potential, we train on a relatively limited amount of data and do not explore how to extend the framework to fully online, interactive training. As more large-scale open-source GUI interaction data become available~\citep{he2025scalable,wang2025opencua,zhang2025tongui}, scaling our pipeline to incorporate broader and more diverse trajectories is a promising direction. In addition, fully online RL can be expensive, slow, and typically requires robust infrastructure and careful system design. We leave a systematic study of extending our framework to fully online scheme as future work.

\section*{Acknowledgments}
The authors thank Hao Bai, Chenlu Ye, and Xiao Yu for valuable discussions, and Boyu Gou and Yiheng Xu for guidance on benchmark and model evaluation.

\bibliography{arxiv}
\bibliographystyle{arxiv}

% \addcontentsline{toc}{section}{Appendix} % Add the appendix text to the document TOC
% \appendix
% \part{Appendix} % Start the appendix part
% \parttoc % Insert the appendix TOC
\appendix
\input{sections/appendix}

\end{document}

%% file: math_commands.tex
\usepackage{amsmath,amsfonts,bm}

\def\eqref#1{equation~\ref{#1}}
\def\1{\bm{1}}

\DeclareMathAlphabet{\mathsfit}{\encodingdefault}{\sfdefault}{m}{sl}
\SetMathAlphabet{\mathsfit}{bold}{\encodingdefault}{\sfdefault}{bx}{n}

%% file: inc/inc_packages.tex
\usepackage{makecell}
\usepackage{url}
\usepackage{breakurl}
\usepackage[breaklinks]{hyperref}

%% file: sections/intro.tex
\section{Introduction}
\label{sec:intro}

Large vision--language models (VLMs) have become a central building block for graphical user interface (GUI) agents \citep{qin2025uitars,xu2025aguvis,gou2025uground,wang2025uitars2,bai2025qwen3vltechnicalreport}, enabling autonomous systems to interpret visual interfaces and output executable actions to complete complex tasks across digital platforms. Among these approaches, native GUI agents \citep{qin2025uitars} refer to a single end-to-end model that directly maps user instructions and observations to executable actions, without relying on external planners or separate grounding modules. 
Recent open-source native agents have achieved substantial progress in visual grounding and low-level action execution \citep{xu2025aguvis,wang2025uitars2,liu2025scalecua,wang2025opencua}, significantly narrowing the gap with proprietary systems. Despite these advances, native GUI agents remain less effective at long-horizon decision making, where agents must reason over extended observation--action sequences and adapt their behavior reliably to achieve user-specified goals.

Advancing native GUI agents increasingly depends on effective post-training of GUI-centric VLMs, yet current approaches face two intertwined bottlenecks. The first is the scarcity of high-quality, action-aligned reasoning data. Existing GUI navigation datasets \citep{li2024androiedcontrol,zheng2024seeact,xu2025aguvis} often lack explicit rationales, contain only short or weakly grounded reasoning traces, or include noisy action labels, providing limited supervision for learning robust and interpretable policies. The second bottleneck is the widespread use of generic post-training recipes that do not fully account for the unique properties of GUI agents. Most existing open-source pipelines rely on either supervised fine-tuning (SFT) on brief rationales \citep{wu2025osatlas,xu2025aguvis} or reinforcement learning (RL) primarily targeting grounding accuracy \citep{luo2025guir1,lu2025uir1,zhou2025guig1,yang2025gta1}.
In practice, these approaches expose a persistent tension between reasoning and grounding: incorporating chain-of-thought (CoT) often degrades grounding performance, leading many methods to suppress explicit reasoning rather than addressing the underlying trade-off. Meanwhile, motivated by the success of RL from verifiable rewards (RLVR) in domains such as mathematical reasoning \citep{shao2024deepseekmath,yu2025dapo}, recent work \citep{hong2025glm,yang2025ferretuilite} has explored step-wise RL for GUI agents. However, these methods overlook a fundamental characteristic of GUI interaction, \textbf{partial verifiability: at each step, multiple actions may correctly advance the task, yet offline supervision verifies only a single demonstrated action}. As a result, alternative valid actions are ambiguously treated as failures, introducing biased gradients, destabilizing training, and weakening the connection between offline evaluation metrics and online task success.

To address these challenges, we propose GUI-Libra, a unified post-training framework designed to strengthen decision making in native GUI agents. GUI-Libra is driven by three insights. (i) High-quality rationales and careful data filtering are essential for data-efficient learning, especially because open-source GUI trajectories are often noisy and weakly annotated. (ii) During SFT, CoT tokens can dominate the training loss and interfere with grounding; effective learning therefore requires explicitly prioritizing action and grounding tokens, which directly determine execution. (iii) Under partially verifiable rewards, RL can become unstable without conservative constraints: unlike standard RLVR settings where dropping KL regularization often helps \citep{yu2025dapo,liu2025understanding,zhou2025guig1,yang2025gta1}, GUI agents benefit from moderate, KL-regularized RL that mitigates reward ambiguity and distribution shift, improving robustness and offline–online alignment.

Guided by these insights, GUI-Libra integrates action-aware supervised fine-tuning (ASFT) with conservative reinforcement learning. To alleviate the scarcity of high-quality reasoning data, we develop a scalable construction and filtering pipeline and release a curated 81K GUI reasoning dataset with improved alignment between reasoning traces and executable actions. In ASFT, GUI-Libra trains on a mixture of reasoning-then-action and direct-action supervision, and applies action-aware token reweighting to emphasize action and grounding tokens, reducing the grounding degradation caused by long CoT traces. In RL, GUI-Libra optimizes policies with GRPO \citep{shao2024deepseekmath} under moderate KL regularization, and further introduces a success-adaptive negative gradient scaling strategy to reduce bias from ambiguously ``negative'' outcomes. \textbf{Our pipeline has two practical advantages}: (1) it derives all training data from existing open-source resources, showing that \textbf{careful augmentation, filtering, and training method design can make modest open data competitive with closed-data systems}; and (2) \textbf{it avoids costly online environment interaction, making training scalable and accessible while strengthening the connection between offline metrics and online task success.}

Extensive experiments across web and mobile benchmarks show that the GUI-Libra series (3B–8B) consistently improves offline step-wise accuracy on standard offline benchmarks and boosts online task completion on AndroidWorld \citep{rawles2025androidworld}, WebArena-Lite-v2 \citep{liu2025scalecua}, and Online-Mind2Web \citep{online-mind2web}. Notably, GUI-Libra-4B and GUI-Libra-8B improve their base models by +15.6\% and +12.2\% on AndroidWorld, and +4.0\% and +8.7\% on Online-Mind2Web. Detailed ablations further confirm the roles of action-aware supervision and conservative regularization in mitigating grounding degradation, strengthening action prediction, and stabilizing learning under partially verifiable feedback. We also analyze the impact of data filtering and explicit reasoning, and study the trade-off between reasoning and grounding during RL training. We hope these findings and open-source resources will encourage future work on data-efficient and reliable post-training for native GUI agents.

Our main contributions are summarized as follows:
\begin{itemize}
\item We present \textbf{GUI-Libra}, a unified post-training framework for native GUI agents that tackles two key challenges: reasoning--grounding interference in SFT, addressed by action-aware SFT that emphasizes action and grounding tokens; and weak offline-to-online predictability in RL, addressed by conservative optimization that constrains policy drift and improves offline--online alignment.

\item We develop a scalable data construction and filtering pipeline and release a high-quality open-source 81K GUI reasoning dataset with improved action alignment.

\item We achieve consistent gains across representative offline and online web and mobile benchmarks, showing that smaller native VLMs trained on modest open-source data can match or even outperform much larger systems.
\end{itemize}

%% file: sections/related_work.tex
\section{Related Work}\label{sec:relatedwork}

\begin{table}[t]
\centering
\small
\caption{Comparison of existing training recipes from different perspectives. The task type column indicates the training target of the released models: \faSearch~denotes GUI grounding, \faLocationArrow~denotes GUI navigation (both online and offline), and \navoffline~denotes offline step-wise action prediction.}

\resizebox{1\textwidth}{!}{%
\begin{tabular}{lccccccc}
\toprule
Name & Task Type & Reasoning &  SFT & RL & Open Weights & Open Data & Open Code \\
\midrule
OS-Atlas~\citep{wu2025osatlas} & \faSearch \navoffline & \xmark  & \checkmark & \xmark & \checkmark &  \checkmark & \xmark \\
AGUVIS~\citep{xu2025aguvis} & \faSearch \faLocationArrow  & short   &  \checkmark & \xmark & \checkmark &  \checkmark & \checkmark \\
UGround~\citep{gou2025uground} & \faSearch  & \xmark  &  \checkmark & \xmark & \checkmark &  \checkmark & \checkmark \\
ScaleCUA~\citep{liu2025scalecua} &  \faSearch\faLocationArrow  & long  &  \checkmark & \xmark & \checkmark &  \checkmark & \checkmark \\  
OpenCUA~\citep{wang2025opencua}   & \faSearch \faLocationArrow  & long  &  \checkmark & \xmark & \checkmark &  \checkmark & \checkmark \\        
UI-TARS~\citep{qin2025uitars} & \faSearch  \faLocationArrow & short   &  \checkmark & \checkmark & \checkmark &  \xmark & \xmark \\
GLM-4.1-V~\citep{hong2025glm} & \faSearch \faLocationArrow & long   &  \checkmark & \checkmark & \checkmark &  \xmark & \xmark \\
Ferret-UI Lite~\citep{yang2025ferretuilite} & \faSearch \faLocationArrow & long   &  \checkmark & \checkmark & \xmark &  \xmark & \xmark \\
UI-R1~\citep{lu2025uir1} & \faSearch \navoffline  & short   &  \xmark & \checkmark & \checkmark &  \checkmark & \checkmark \\
GUI-R1~\citep{luo2025guir1}  & \faSearch \navoffline  & short  &  \xmark & \checkmark & \checkmark &  \checkmark & \checkmark \\
GTA1~\citep{yang2025gta1} & \faSearch   & \xmark  &  \xmark & \checkmark & \checkmark &  \checkmark & \checkmark \\
\midrule
GUI-Libra (Ours) & \faSearch \faLocationArrow  & long  &  \checkmark & \checkmark & \checkmark &  \checkmark & \checkmark \\
\bottomrule
\end{tabular}}
\label{tab:overall_training_receipe_comparison}
\end{table}

\subsection{Datasets for Training GUI Agents}
Recent progress in GUI agents has been propelled by a diverse ecosystem of datasets that target both visual perception and task execution. For robust visual grounding and screen parsing, datasets such as SeeClick~\citep{cheng2024seeclick}, UGround~\citep{gou2025uground}, GUIAct~\citep{chen2025guiact}, ScaleCUA~\citep{liu2025scalecua}, and \textsc{GUI-360}~\citep{mu2025gui360} provide large corpora of annotated screenshots and UI element supervision~\citep{deka2017RICO,li2020widgetCaptioning,li2020RICOSCA,bai2021uibert,wu2023webui,yang2025pixelweb,zheng2025agentstudio,wu2025osatlas,nayak2025uivision,luo2025guir1}.

Moving beyond single-step grounding, several large-scale context-aware and trajectory-based datasets capture multi-step interactions in realistic environments, enabling models to learn how UI state evolves over time. Examples include AITW~\citep{rawles2023aitw}, MM-Mind2Web~\citep{zheng2024seeact,deng2023mindweb}, AMEX~\citep{chai2025amex}, GUI Odyssey~\citep{lu2024guiodyssey}, and Aria-UI~\citep{yang2024ariaUI}.
In addition, datasets such as AndroidControl~\citep{li2024androiedcontrol} and \textsc{JEDI}~\citep{xie2025jedi} enrich interaction trajectories with low-level action descriptions, helping bridge high-level intent with executable operations and improving the learnability of fine-grained GUI manipulation policies.

To train agents to understand not only \emph{what} actions to take but also \emph{why}, recent efforts introduce natural-language rationales that explicitly inject observation interpretation and planning into step-by-step decision making AITZ~\citep{zhang2024aitz}, AgentTreck~\citep{xu2025agenttrek}, OS-Genesis~\citep{sun2024osgenesis}, Aguvis~\citep{xu2025aguvis}, GUI-Net-1M~\citep{zhang2025tongui}, and WebSTAR \citep{he2025scalable}. Despite their promise, such reasoning annotations are often short and noisy, limiting their effectiveness in reliably teaching long-horizon reasoning, error recovery, and strategy adaptation. AgentNet~\citep{wang2025opencua} takes a step further by synthesizing more detailed reasoning traces that include reflective thoughts, enabling agents to detect mistakes and recover mid-trajectory. However, AgentNet primarily focuses on desktop environments, and high-quality reasoning-rich data for mobile and web scenarios remains scarce, leaving open challenges for training robust, general-purpose GUI agents across platforms.

% might missed:
% webshop, miniWob, omniact, crab
% GUI-World: GUI Understanding, GUI-Oriented instruction-tuning (QA, action desc)

\subsection{VLM Post-training for GUI Agents}
Recent advances in GUI agents have been largely driven by post-training VLMs to align natural-language instructions with actionable UI interactions. Many GUI grounding-oriented methods primarily rely on SFT with curated interaction or annotation data, including representative efforts such as SeeClick~\citep{cheng2024seeclick}, OS-Atlas~\citep{wu2025osatlas}, Aria-UI~\citep{yang2024ariaUI}, and JEDI~\citep{xie2025jedi}. Beyond text-based coordinate prediction, GUI-Actor \citep{wu2025gui} applies an explicit attention mechanism to improve the generalization to out-of-distribution screenshots.
Instead of imitation-style learning, a growing body of work explores to improve grounding accuracy and robustness via reinforcement learning, including UI-R1~\citep{lu2025uir1}, GUI-R1~\citep{luo2025guir1}, GUI-G1~\citep{zhou2025guig1}, GUI-G2~\citep{tang2025guig2}, GTA1~\citep{yang2025gta1}, and InfiGUI-G1~\citep{liu2025infiguig1}. Hybrid pipelines that combine SFT+RL further push performance by leveraging high-quality demonstrations for initialization and RL for policy refinement, such as Phi-Ground~\citep{zhang2025phi} and UI-Ins~\citep{chen2025uiins}.

More recently, research increasingly targets unified native GUI models that jointly learn grounding, planning, and multi-step navigation in an end-to-end manner. Several works adopt SFT-only training on mixed trajectory data to obtain strong generalist computer-use models, including CogAgent~\citep{hong2023cogagent}, Aguvis~\citep{xu2025aguvis}, ScaleCUA~\citep{liu2025scalecua}, FARA~\citep{awadallah2025fara}, and OpenCUA~\citep{wang2025opencua}. To further equip agents with the ability to explore diverse strategies and improve long-horizon success via trial-and-error, other efforts incorporate RL-based post-training for better policy optimization and stability, such as DigiRL~\citep{bai2024digirl}, AutoGLM~\citep{liu2024autoglm}, UI-TARS~\citep{qin2025uitars,wang2025uitars2}, MAI-UI~\citep{zhou2025maiui}, UI-Venus~\citep{gu2025uivenus}, Ferret-UI-Lite~\citep{yang2025ferretuilite}, and WebGym \citep{bai2026webgym}.
Despite these advances, three limitations remain. First, encouraging free-form reasoning during RL can hurt grounding accuracy \citep{zhou2025guig1,yang2025gta1,tang2025guig2,lu2025uir1,chen2025uiins}, as models may prioritize high-level semantic logic over precise spatial execution. Second, online RL \citep{wang2025uitars2,zhou2025maiui,bai2026webgym} is expensive to scale, requiring costly environment interaction and robust infrastructure. Third, step-wise RLVR-style training \citep{yang2025ferretuilite,hong2025glm} in GUI settings faces partial verifiability, which makes rewards ambiguous and introduces noisy or biased learning signals.

In this paper, we focus on systematically understanding VLM post-training for GUI agents, and contribute open-source training recipes together with a high-quality, openly released dataset, enabling reproducible development of GUI agents with enhanced reasoning capability.

%% file: sections/background.tex
\section{Preliminaries}\label{sec:preliminaries}

\paragraph{\textbf{VLM-based GUI agents.}}
We formulate GUI interaction as a goal-conditioned partially observable Markov decision process (POMDP) with a natural-language instruction space $\mathcal{L}$,
specified by $(\mathcal{S}, \mathcal{A}, \mathcal{O}, \mathcal{T}, \mathcal{R}, \gamma)$.
Here, $\mathcal{S}$ denotes latent environment states (e.g., application/page context and UI layout), and $\mathcal{A}$ is an action space where each action consists of an operation and its arguments
(e.g., $\texttt{click}(x,y)$, $\texttt{type}(\mathrm{text})$, $\texttt{scroll}(\mathrm{direction})$).
State transitions follow $\mathcal{T}(s_{t+1}\mid s_t,a_t)$.
At the beginning of each episode, an instruction $\ell\in\mathcal{L}$ specifies the task goal.
At each step, the agent receives a partial observation $o_t\in\mathcal{O}$ derived from the underlying state $s_t$, typically a screenshot.
Due to partial observability, the agent conditions on the interaction history
$h_t=(o_0,a_0,\ldots,o_{t-1},a_{t-1})$ together with the current observation $o_t$,
and follows a VLM-parameterized policy $\pi_\theta(a_t\mid \ell, h_t, o_t)$.
The goal-conditioned reward is $r_t=\mathcal{R}(s_t,a_t;\ell)$, which is often sparse and success-only:
$r_t=1$ if the agent achieves the goal specified by $\ell$ (typically at termination), and $r_t=0$ otherwise.
The episode terminates upon success or after a maximum horizon $T$.
The objective is to maximize the expected return,
$\max_{\pi_\theta}\mathbb{E}\!\left[\sum_{t=0}^{T}\gamma^t r_t\right]$.
When $\gamma=1$, this objective is equivalent to maximizing expected task success.

\paragraph{\textbf{High-level vs.\ low-level GUI tasks.}}
We categorize GUI tasks by their temporal abstraction.
A \textit{low-level} task can be completed with a single atomic interaction, such as
``type \texttt{amazon.com} in the address bar'' or ``click the confirm button.''
In contrast, a \textit{high-level} task requires a multi-step interaction trajectory, such as
``buy a machine learning textbook on Amazon,''
which induces a sequence of low-level actions across multiple screens.
We view \textit{grounding} as a special case of low-level decision making, where the agent localizes the target UI element by predicting its interaction coordinates from the instruction and the current observation $(\ell, o_t)$. In our paper, we mainly focus on high-level navigation tasks.

\paragraph{\textbf{Post-training for GUI Models.}}
\emph{Supervised fine-tuning (SFT)} is a standard approach for post-training GUI models.
We assume a dataset of expert trajectories $D=\{\tau_i\}_{i=1}^{N}$, where each trajectory is
$\tau_i=\{(\ell^i, h_t^i, o_t^i, c_t^i, a_t^i)\}_{t=0}^{T_i-1}$.
At step $t$, the model conditions on the context $x_t^i=(\ell^i, h_t^i, o_t^i)$ and outputs a reasoning trace $c_t^i$ followed by an executable action $a_t^i$.
We concatenate reasoning and action into a single target sequence $y_t^i=[c_t^i; a_t^i]$ and minimize the negative log-likelihood:
\begin{equation}
\mathcal{L}_{\mathrm{SFT}}(\theta)
=
- \mathbb{E}_{(x_t, y_t)\sim D}
\log \pi_{\theta}(y_t \mid x_t),
\end{equation}
where the expectation is taken over all trajectories and time steps in $D$.

Beyond SFT, prior work \citep{luo2025guir1,lu2025uir1,zhou2025guig1,yang2025gta1}
adopts reinforcement learning from verifiable rewards (RLVR) to directly optimize step-wise action/coordinate correctness.
Given step contexts $x=(\ell,h_t,o_t)\sim D$, we sample a group of $G$ candidate actions
$\{a_k\}_{k=1}^{G}\sim \pi_{\theta_{\mathrm{old}}}(\cdot\mid x)$ and compute rewards
$r_k=\mathcal{R}(x,a_k)$,
where $\mathcal{R}$ is a weighted combination of rule-based matching scores on action type, value (text), and coordinates.
GRPO then updates the policy using a group-relative variant of policy gradient:
\begin{equation}\label{eq:grpo_kl}
\mathcal{L}_{\mathrm{GRPO}}(\theta)
=
- \mathbb{E}_{x \sim D, \{a_k\}_{k=1}^{G}\sim \pi_{\theta_{\rm old}}(\cdot\mid x)}\!\left[
\frac{1}{G}\sum_{k=1}^{G}
\min\!\Big(
\rho_k \hat{A}_k,\;
\mathrm{clip}(\rho_k, 1-\epsilon, 1+\epsilon)\hat{A}_k
\Big)
- \beta \cdot \mathrm{KL}\!\left(\pi_\theta(\cdot\mid x)\,\|\,\pi_{\mathrm{ref}}(\cdot\mid x)\right)
\right],
\end{equation}
where $\rho_k=\pi_\theta(a_k\mid x)/\pi_{\theta_{\mathrm{old}}}(a_k\mid x)$ and
$\hat{A}_k=(r_k-\mu_r)/(\sigma_r+\delta)$ is the group-normalized advantage.
Here, $\mu_r$ and $\sigma_r$ are the mean and standard deviation of $\{r_k\}_{k=1}^G$, and $\delta$ is a small constant for numerical stability.
The coefficient $\beta$ controls KL regularization toward a reference policy $\pi_{\mathrm{ref}}$.
In practice, \textbf{recent RLVR-style work often removes the explicit KL term (i.e., sets $\beta=0$) }\citep{yu2025dapo,liu2025understanding,zhou2025guig1,yang2025gta1}.
While RLVR provides convenient automatic supervision, step-wise rewards can be ambiguous for high-level GUI tasks: in the same state, multiple distinct actions may be valid and still make progress, making rule-based matching an imperfect proxy for correctness.
We analyze this discrepancy in Section~\ref{sec:RLPR}.

%% file: sections/method.tex
\section{Reasoning Data Curation for GUI Agents}\label{sec:data_curation}

To address the scarcity of high-quality reasoning data for GUI agents, we develop an automated pipeline to construct and filter a high-quality reasoning dataset, \textsc{GUI-Libra-81K}.

\begin{table}[t]
\centering
\caption{Comparison of our GUI reasoning dataset with previous open-source datasets in web and mobile domains. Token statistics are computed using the Qwen2.5-VL-3B-Instruct tokenizer.}
\small
\resizebox{0.9\textwidth}{!}{%
\begin{tabular}{lrrr}
\toprule
Dataset & Avg Thought Token Per Step & Total Steps & \#Traj \\
\midrule
MM-Mind2Web \citep{zheng2024seeact} & 0 & 8K   & 1K  \\ 
AndroidControl \citep{li2024androiedcontrol} & 11 & 75K & 14K  \\
GUIAct \citep{chen2025guiact} & 0 & 17K &  2.5k \\
AMEX \citep{chai2025amex} & 0 & 35K & 3K \\
GUI-Net-1M \citep{zhang2025tongui} & 37 & 4M  & 1M \\
ScaleCUA \citep{liu2025scalecua} & 0 & 170K &  19K \\
AGUVIS \citep{xu2025aguvis} Stage2 L2 (All sources)  & 56 & 300K & 35K \\
AGUVIS \citep{xu2025aguvis} Stage 2 L3 (All sources) & 85 & 300K & 35K   \\
\midrule
 \textsc{GUI-Libra-81K} (Ours) & 210 & 81K & 9K \\
\bottomrule
\end{tabular}}
\label{tab:overall_block_tokens_compare}
\end{table}

\begin{figure}[ht]
    \vspace{-0.0in}
    \centering
    \includegraphics[width=0.95\linewidth]{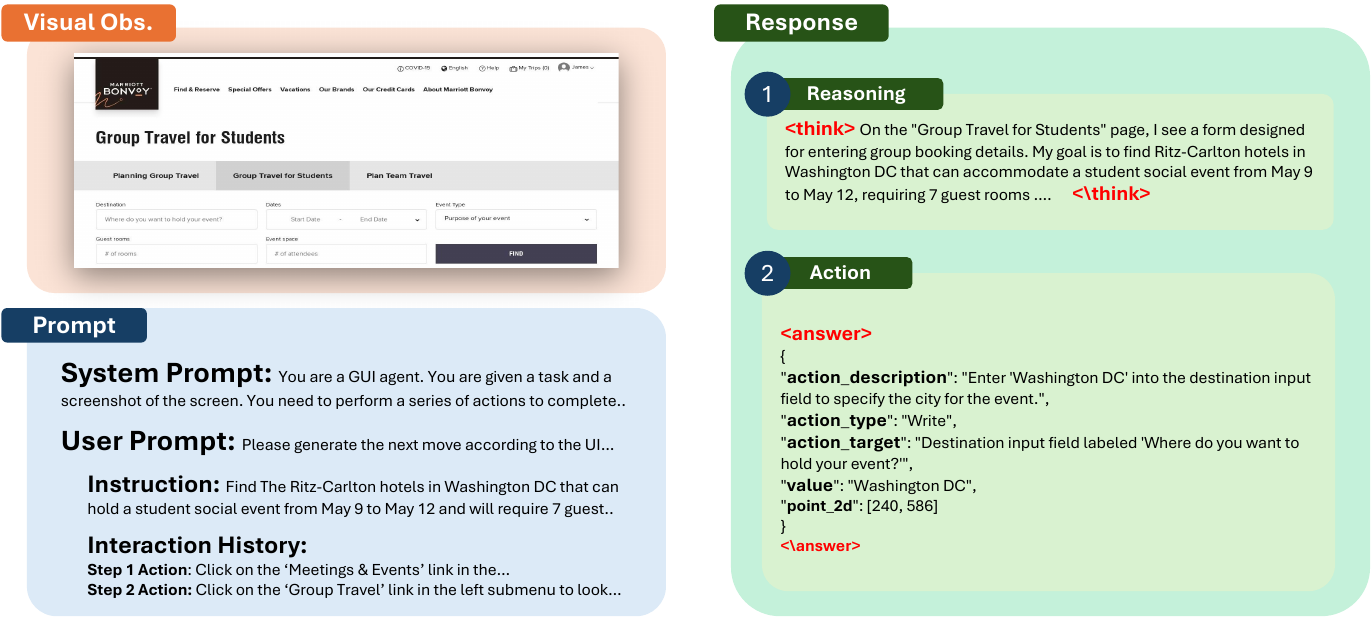}
    \caption{Example data format in \textsc{GUI-Libra-81K}. Each sample includes the current visual observation (screenshot) and textual context (system prompt, user instruction, and interaction history/previous actions). The model output is split into (1) a CoT reasoning trace and (2) a structured executable action (JSON), specifying the action type, a brief action description, the target element (if available), and action arguments such as text values or coordinates.}
    \label{fig:data_structure}
\end{figure}

\subsection{Data Curation and Filtering Pipeline}
As shown in Table~\ref{tab:overall_block_tokens_compare}, existing open-source datasets in web and mobile domains (e.g., the AGUVIS collection~\citep{xu2025aguvis}) typically provide only short rationales, often fewer than 100 thought tokens per step. Rather than collecting costly new data from online environments, \textbf{we aim to fully leverage the large volume of existing web and mobile data by augmenting it with richer CoT reasoning and filtering out low-quality samples that would otherwise induce reasoning--action mismatch.}

\subsubsection{Data Sources}
Although large-scale open-source datasets exist for GUI grounding \citep{gou2025uground,wu2025osatlas}, trajectory-based GUI navigation data remain relatively scarce due to the high cost of collecting multi-step interaction traces.
Following AGUVIS~\citep{xu2025aguvis}, we therefore aggregate trajectory data from multiple public sources that cover both web and mobile domains, including
GUI-Odyssey~\citep{lu2024guiodyssey}, AMEX~\citep{chai2025amex}, AndroidControl~\citep{li2024androiedcontrol}, AitZ~\citep{zhang2024aitz}, AitW~\citep{rawles2023aitw}, GUIAct~\citep{chen2025guiact}, and MM-Mind2Web~\citep{zheng2024seeact}.
Compared with the original AGUVIS collection, we additionally include the Chinese subset from GUIAct to broaden multilingual coverage and increase website diversity.
Overall, these datasets span diverse applications and websites, and include tasks with varying difficulty levels. 
We apply an initial cleaning stage to remove incomplete trajectories, extremely short or long traces (fewer than 3 steps or more than 50 steps), and steps containing compound actions that cannot be represented in our action space. After cleaning, we obtain 19K trajectories comprising 170K steps.

\subsubsection{Unified Structured Format}
Figure~\ref{fig:data_structure} summarizes our unified data format.
Each sample contains an input and an output.
The input includes a system prompt that enumerates available actions, the user instruction, the interaction history (previous actions), and the current screenshot.
The output contains (1) a reasoning trace enclosed by \texttt{<think>} \dots \texttt{</think>} and (2) a structured action enclosed by \texttt{<answer>} \dots \texttt{</answer>}.
The structured action is represented as a JSON object with an \texttt{action\_type} and corresponding arguments, such as \texttt{value} for text entry and \texttt{point\_2d} for click coordinates. We consider 13 common action types for web and mobile control: \texttt{Click}, \texttt{Write}, \texttt{Terminate}, \texttt{Swipe}, \texttt{Scroll}, \texttt{NavigateHome}, \texttt{Answer}, \texttt{Wait}, \texttt{OpenAPP}, \texttt{NavigateBack}, \texttt{KeyboardPress}, \texttt{LongPress}, and \texttt{Select}.
The action optionally includes an \texttt{action\_target}, a natural-language description of the UI element to interact with. In addition, we include an \texttt{action\_description} that succinctly states the intended operation, which is appended to the interaction history for subsequent steps and captures a brief step-level rationale. Retaining \texttt{action\_target} further supports downstream filtering by enabling consistency checks between the described target element and the coordinates from the original datasets. Details of the action space are provided in Appendix~\ref{ap:implementation_details}.

\subsubsection{Action-aligned Reasoning Augmentation}
Most existing GUI trajectory datasets lack detailed reasoning traces or only include short rationales. AGUVIS \citep{xu2025aguvis} augments trajectories by prompting GPT-4o with the instruction, previous actions, and the current action, then requesting a brief ``thought'' and a one-sentence action description.
We find two factors that limit the quality of such generated reasoning.
First, the prompt is not sufficiently informative.
We extend it with GUI-specific guidelines that encourage structured reasoning (observation description, reflection, and planning), enforce format constraints, and add action-related requirements.
Second, reasoning quality is sensitive to the choice of generator model.
We compare reasoning traces produced by GPT-4o, o4-mini, and GPT-4.1 and observe substantial differences across models (Figure~\ref{fig:dataset_comparison}).
Our structured output format also facilitates reliable parsing and downstream processing. Moreover, we do not force the generator to exactly follow the dataset action; instead, we treat the annotated action as a reference and allow the model to select a different action from the available set when it has sufficient justification. The full prompt template is provided in Appendix~\ref{ap:prompt_template}.

At each step, the generator produces the reasoning trace, \texttt{action\_description}, \texttt{action\_type}, \texttt{action\_target}, and \texttt{value}, while we reuse the coordinates from original dataset as \texttt{point\_2d}.
Because generation is conditioned on noisy original annotations, mismatches can still arise, for example, the dataset may contain incorrect coordinates, the generated \texttt{action\_target} may not match the provided coordinates, or the model may choose a different action than the annotation. We therefore add a dedicated filtering stage to improve action--reasoning alignment and overall data quality.

\begin{figure*}[t]
    \centering
    \includegraphics[width=1\linewidth,  trim=0pt 8pt 1pt 5pt,
  clip]{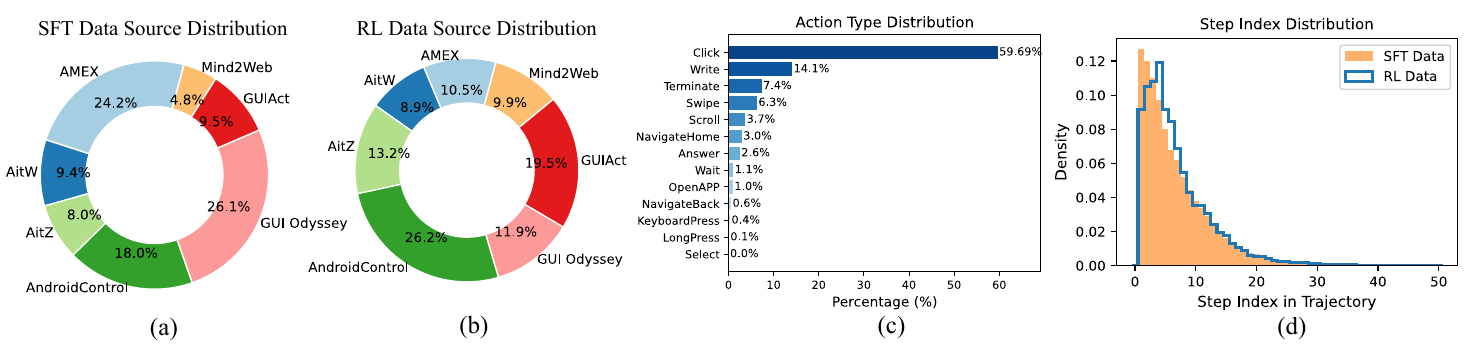}
  \vspace{-15pt}
    \caption{(a)(b) Data source distribution for SFT and RL. (c) Action type distribution of  \textsc{GUI-Libra-81K}.
(d) Comparison of step index distributions between our SFT and RL datasets.}
    \label{fig:data_statics}
\end{figure*}

\subsubsection{Data Filtering for SFT}\label{sec:sft_data_filter}
Human-collected and automatically labeled GUI trajectories are inevitably noisy \citep{yang2025gta1,xu2025aguvis}, including incorrect action types and inaccurate coordinates.
To improve data quality, we curate our SFT data using a two-step automatic filtering pipeline.

\textbf{(1) Agreement filtering via action re-prediction.}
We run Qwen3-VL-8B-Instruct for 10 stochastic runs on each input and measure how often the predicted action matches the annotation
(e.g., exact match on \texttt{action\_type} and coordinate proximity for \texttt{Click}-like actions).
We discard steps with re-prediction accuracy below $0.3$, which effectively removes uncertain or low-quality samples.

\textbf{(2) Coordinate alignment via bounding-box verification.}
We leverage \texttt{action\_target} to verify whether the original coordinate actually corresponds to the intended UI element, and to obtain bounding-box supervision for RL.
Concretely, we prompt Qwen3-VL-32B-Instruct to predict a bounding box given the current screenshot and \texttt{action\_target}, and keep a step only if the original \texttt{point\_2d} falls inside the predicted box. This filtering removes coordinate errors and reduces reasoning--action mismatch, while also providing reliable bounding-box annotations, often missing from prior datasets~\citep{xu2025aguvis}, for subsequent RL training.

\paragraph{\textbf{SFT Dataset Statistics.}}
After filtering, we obtain 81K SFT steps originating from 9K trajectories.
Figure~\ref{fig:data_statics}(a) shows the source distribution: most data comes from mobile datasets (e.g., AndroidControl, GUI-Odyssey, AMEX), while only 14.3\% comes from the web domain.
This reflects the current ecosystem where large-scale open mobile interaction data are more prevalent; scaling high-quality web trajectories remains an important direction.
Figure~\ref{fig:data_statics}(b) shows the action distribution: \texttt{Click} accounts for around 60\% of steps, followed by \texttt{Write}, \texttt{Terminate}, and \texttt{Swipe}, while \texttt{LongPress} and \texttt{Select} are rare.
This imbalance makes rare actions difficult to learn from SFT alone, motivating a subsequent RL stage.

\subsubsection{Data Filtering for RL}\label{sec:rl_data_filter}
For RL, we prioritize a more balanced training set by reducing biases in both step index and domain.
Specifically, we address two issues: (i) \textbf{early-step bias}, where many trajectories share similar initial screens and actions (e.g., mobile home screens or common web landing pages), and (ii) \textbf{domain imbalance}, where mobile trajectories dominate the pool. To mitigate these effects, we downsample early steps (small step indices) and further downsample mobile-domain trajectories, resulting in a \textbf{40K}-step dataset for RL training. Figure~\ref{fig:data_statics}(c) compares the step-index distributions, while Figure~\ref{fig:data_statics}(a) compares the domain distributions. Overall, the RL subset is substantially more balanced than the SFT dataset.

\section{GUI-Libra}\label{sec:gui_pivot_method}
In this section, we introduce GUI-Libra, a native GUI agent with enhanced reasoning capabilities. Using our curated dataset, we first conduct a systematic study of SFT with long CoT and its impact on grounding. We then analyze how step-wise RLVR-style training correlates with online performance in GUI navigation.

\begin{figure}[t]
    \centering
    \includegraphics[width=0.85\linewidth]{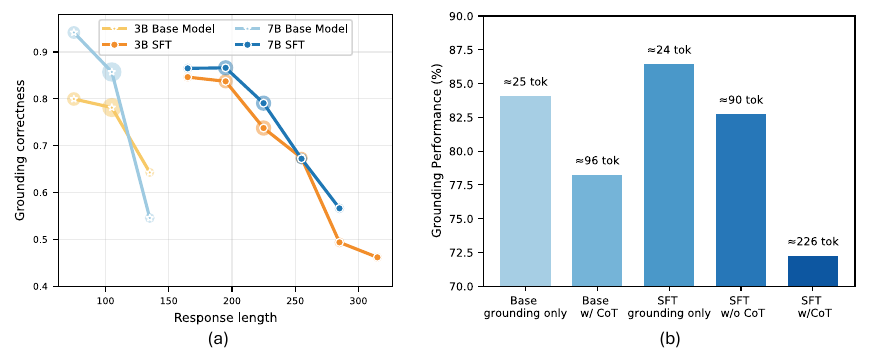}
    \vspace{-10pt}
    \caption{(a) Grounding accuracy on ScreenSpot-v2 versus response length for base models and CoT-SFT models, showing that overly long responses correlate with degraded grounding. (b) Average grounding accuracy under different SFT strategies, where excessively long reasoning traces lead to a substantial drop.}
    \label{fig:cot_sft_hurts_grounding}
\end{figure}

\subsection{SFT with Long CoT Hurts GUI Grounding}
Prior work \citep{lu2025uir1,luo2025guir1} has observed that removing CoT reasoning can improve grounding performance in GUI agents. However, systematic evidence and analysis of this effect, especially under \emph{long} CoT traces, remain limited, since most existing training data contain only short rationales.

Using \textsc{GUI-Libra-81K}, we investigate how response length correlates with grounding performance on ScreenSpot-v2 \citep{wu2025osatlas}. Specifically, we fine-tune Qwen2.5-VL-3/7B-Instruct base models and prompt them to generate reasoning before grounding, following our structured response format. To induce diverse response lengths, we sample outputs with a temperature of 1.0. We group responses into 30-token bins and discard bins with fewer than 20 samples for statistical reliability.
As shown in Figure~\ref{fig:cot_sft_hurts_grounding}(a), grounding accuracy exhibits a clear negative correlation with response length for both base and CoT-SFT models. Longer responses consistently lead to worse grounding performance. Moreover, CoT-based SFT substantially widens the length distribution, producing many responses longer than 250 tokens, which are associated with particularly severe performance drops.

To pinpoint the source of this degradation, we compare three SFT variants on Qwen2.5-VL-3B-Instruct using \textsc{GUI-Libra-81K}: (i) \textit{SFT with CoT}, which uses the full reasoning-then-action outputs; (ii) \textit{SFT without CoT}, which removes reasoning and keeps only the \texttt{<answer>...\texttt{</answer>}} action; and (iii) \textit{Grounding-only}, which predicts coordinates solely from the action-target description. Figure~\ref{fig:cot_sft_hurts_grounding}(b) shows that grounding-only SFT yields a modest gain, while SFT without CoT slightly degrades performance. In contrast, SFT with long CoT traces causes a substantial drop, indicating that the primary driver of grounding degradation is excessively long reasoning sequences.

\subsection{Action-Aware Supervised Fine-Tuning}\label{sec:action_aware_SFT}
Our goal is to build native GUI agents that can both reason and act within a single model. Rather than discarding reasoning traces, we seek to preserve reasoning ability while mitigating the grounding degradation caused by long CoT sequences. To this end, we propose \emph{action-aware supervised fine-tuning} (ASFT), a unified training framework that combines mixed data supervision with token-level reweighting to balance reasoning, action prediction, and grounding.

\paragraph{\textbf{Mixed reasoning and direct-action supervision.}}
ASFT trains on a mixture of data with and without explicit reasoning traces. To construct the direct-action data, we remove the reasoning traces and keep only the structured action output between \texttt{<answer>} and \texttt{</answer>}.
Training on both data variants provides two complementary supervision modes: (i) reasoning-then-action and (ii) direct action prediction.
Similar data mixtures were used for GUI models such as OpenCUA~\citep{wang2025opencua}, as well as LLM agent training~\citep{wang2025think,zhang2025agent}, but \textbf{their role in mitigating grounding degradation has not been clearly demonstrated.}
This dual-mode supervision serves two purposes.
First, it increases the amount of action-centric learning signal, strengthening action prediction that is essential for interactive agents.
Second, it reduces reliance on verbose intermediate reasoning, alleviating grounding degradation induced by long CoT traces.
As a result, the model can flexibly produce either concise direct actions or reasoning-then-action outputs at inference time, improving both grounding accuracy and response efficiency.

\paragraph{\textbf{Action-aware reweighting.}}
In addition to mixed supervision, ASFT further assigns higher weights to action and grounding tokens.
Although grounding is part of the action output in our formulation, the action sequence contains both semantic components (e.g., action description, action type, and value) and spatial components (coordinates), which can be weighted differently at the token level.

Concretely, we treat tokens inside \texttt{<answer>}...\texttt{</answer>} as the action output, and further split them into \emph{action tokens} (all tokens excluding the \texttt{point\_2d} field) and \emph{grounding tokens} (tokens associated with \texttt{point\_2d} field).
Let $c_t$, $a_t$, and $g_t$ denote the reasoning, action, and grounding tokens at step $t$, respectively, and let $x_t=(\ell,h_t,o_t)$ denote the conditioning context.
We denote the mixed training set by $D_{\rm mix}$, which contains both reasoning-then-action and direct-action samples; for direct-action samples we set $c_t$ to an empty sequence.
Under this unified representation, the ASFT objective is
\begin{equation}
\mathcal{L}_{\text{ASFT}}(\theta)
=
- \mathbb{E}_{(x_t, c_t, a_t, g_t) \sim D_{\rm mix}}
\frac{
\log \pi_\theta(c_t \mid x_t)
+ \alpha_a \log \pi_\theta(a_t \mid x_t, c_t)
+ \alpha_g \log \pi_\theta(g_t \mid x_t, c_t, a_t)
}{
|c_t| + \alpha_a |a_t| + \alpha_g |g_t|
},
\end{equation}
where $\alpha_a$ and $\alpha_g$ control the relative importance of action and grounding tokens. By adjusting these coefficients, ASFT recovers several common training strategies as special cases: 
(i) $\alpha_a=\alpha_g=1$ reduces to standard SFT;
(ii) $\alpha_a=\alpha_g\!\gg\!1$ emphasizes action/grounding tokens and approximates CoT-free SFT;
and (iii) $\alpha_g\!\gg\!\alpha_a$ with $\alpha_g\!\gg\!1$ further reduces to grounding-only SFT.
Overall, ASFT provides a flexible mechanism for balancing reasoning, action, and grounding during supervised fine-tuning of native GUI agents.

\subsection{Reinforcement Learning from Partial Verifiable Rewards}
\label{sec:RLPR}

Applying RLVR-style training to optimize step-wise action or coordinate correctness for GUI agents has been explored in prior work \citep{luo2025guir1,lu2025uir1,zhou2025guig1,yang2025gta1}. However, multi-step GUI navigation differs from standard RLVR settings in two key ways. (i) \emph{Errors accumulate and induce distribution shift}: small step mistakes compound over time and change the distribution of states the agent visits. (ii) \emph{Rewards are partially verifiable}: at each step, multiple actions can correctly advance the task, yet offline supervision typically provides and verifies only a single demonstrated action. We show that both factors are crucial for understanding when offline step-wise metrics can reliably predict online task success.

\paragraph{Setup.}
For ease of analysis, we adopt a finite-horizon MDP that is consistent with our earlier goal-conditioned POMDP formulation.
Specifically, we consider a goal-conditioned MDP $\mathcal{M}=(\mathcal{S},\mathcal{A},P,H)$, where the instruction $\ell$ is included in the state, and partial observability is handled by treating the agent’s history (or belief state) as the effective state.
A policy $\pi$ induces a sequence of state visitation distributions $\{d_{\pi,t}\}_{t=1}^{H}$, where $d_{\pi,t}(s)$ is the probability of visiting state $s$ at step $t$.
For each state $s$, let $\mathcal{A}^*(s)\subseteq\mathcal{A}$ denote the set of \emph{valid} actions that can correctly advance the task.
Offline supervision provides only a single demonstrated action $\tilde a(s)\in \mathcal{A}^*(s)$.

\begin{definition}{Partially Verifiable Reward}{}
\label{def:pvr}
For each state $s$, the offline dataset provides a single demonstrated action $\tilde a(s)\in \mathcal{A}^*(s)$.
The step-wise reward induced by offline verification is
\[
\tilde r(s,a) \triangleq \mathbf{1}\{a=\tilde a(s)\}.
\]
We call $\tilde r$ \emph{partially verifiable} if
\[
\tilde r(s,a)=1 \Rightarrow a\in \mathcal{A}^*(s)
\quad\text{but}\quad
\tilde r(s,a)=0 \not\Rightarrow a\notin \mathcal{A}^*(s),
\]
i.e., positive feedback is reliable, while negative feedback is ambiguous because $\mathcal{A}^*(s)$ often contains valid actions beyond $\tilde a(s)$.
\end{definition}

\paragraph{Offline vs.\ online metrics.}
Offline evaluation typically measures one-step action matching on a fixed state distribution $d_\mu$, the marginal induced by an expert dataset $D_\mu$.
We define the offline score as
\begin{equation}
\label{eq:offline_metric}
M_{\text{off}}(\pi)
\triangleq
\mathbb{E}_{s\sim d_\mu}\big[\pi(\tilde a(s)\mid s)\big]
=
\mathbb{E}_{(s,\tilde a)\sim D_\mu}\;\mathbb{E}_{a\sim\pi(\cdot\mid s)}\big[\mathbf{1}\{a=\tilde a\}\big].
\end{equation}
Online evaluation measures trajectory-level task success under closed-loop interaction. We define the probability that $\pi$ completes the task within horizon $H$ as: $J(\pi)\triangleq \Pr_{\tau\sim \pi,P}\!\big(\text{success}(\tau)=1\big)$.

\paragraph{Two quantities controlling predictability.}
We characterize when $S_{\text{off}}(\pi)$ is predictive of $J(\pi)$ using two factors:
(i) \emph{occupancy mismatch} between the online state distribution induced by $\pi$ and the offline distribution $d_\mu$, and
(ii) \emph{step-wise ambiguity} due to partial verifiability.
Formally, define the occupancy mismatch coefficient
\begin{equation}
\label{eq:occupancy_alpha_defs_main}
C(\pi)\triangleq \max_{t\in[H]}\sup_{s:\,d_\mu(s)>0}\frac{d_{\pi,t}(s)}{d_\mu(s)},
\end{equation}
and define the \emph{off-demo validity mass} of $\pi$ at state $s$ as
\begin{equation}
\eta_\pi(s)\triangleq \pi\!\left(\mathcal{A}^*(s)\setminus\{\tilde a(s)\}\mid s\right),
\qquad
\bar\eta_\pi\triangleq \mathbb{E}_{s\sim d_\mu}[\eta_\pi(s)].
\end{equation}
Note that the true step-wise validity probability satisfies
\[
\pi\!\left(\mathcal{A}^*(s)\mid s\right)=\pi\!\left(\tilde a(s)\mid s\right)+\eta_\pi(s),
\]
since $\mathcal{A}^*(s)$ may contain valid actions beyond the single demonstrated action $\tilde a(s)$, which are not credited by offline matching.

\begin{assumption}{}{}
\label{ass:failure}
If an episode fails under policy $\pi$, then there exists at least one step $t\le H$ such that $a_t\notin \mathcal{A}^*(s_t)$.
\end{assumption}

\begin{theorem}{Offline-to-online bound under partial verifiability}{off2on_main}
Assume Assumption~\ref{ass:failure} and that, for all $t\in[H]$, $d_{\pi,t}(s)>0$ implies $d_\mu(s)>0$ (i.e., $\mathrm{supp}(d_{\pi,t})\subseteq \mathrm{supp}(d_\mu)$).
This condition ensures the occupancy ratio $C(\pi)$ is well-defined.
Then the online success probability satisfies
\begin{equation}
\label{eq:main_bound_1}
J(\pi)\ \ge\
1\;-\;H\cdot C(\pi)\cdot\Big(1-M_{\mathrm{off}}(\pi)-\bar\eta_\pi\Big).
\end{equation}
In particular, if $C(\pi)$ is uniformly bounded over a policy class and $\bar\eta_\pi$ is small \emph{or stable across policies},
then $M_{\mathrm{off}}(\pi)$ becomes predictive of $J(\pi)$ through the affine lower bound in Eq~\ref{eq:main_bound_1}.
\end{theorem}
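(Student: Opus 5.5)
We follow the standard union-bound and change-of-measure route. By Assumption~\ref{ass:failure}, the failure event is contained in the union over $t\in[H]$ of the events $\{a_t\notin\mathcal{A}^*(s_t)\}$. A union bound therefore gives
\[
1-J(\pi)\;\le\;\sum_{t=1}^{H}\Pr_{\pi}\big(a_t\notin\mathcal{A}^*(s_t)\big)
=\sum_{t=1}^{H}\mathbb{E}_{s\sim d_{\pi,t}}\big[1-\pi(\mathcal{A}^*(s)\mid s)\big].
\]
The equality holds because, conditional on $s_t=s$, the action is drawn from $\pi(\cdot\mid s)$. We will write $e_\pi(s)\triangleq 1-\pi(\mathcal{A}^*(s)\mid s)\in[0,1]$ for the per-state invalid-action probability.

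Next we move each term from the online distribution to the offline one. Since $e_\pi\ge 0$ and $\mathrm{supp}(d_{\pi,t})\subseteq\mathrm{supp}(d_\mu)$, we have
\[
\mathbb{E}_{d_{\pi,t}}[e_\pi]=\sum_{s:\,d_\mu(s)>0}\frac{d_{\pi,t}(s)}{d_\mu(s)}\,d_\mu(s)\,e_\pi(s)\le C(\pi)\,\mathbb{E}_{s\sim d_\mu}[e_\pi(s)].
\]
Nonnegativity of $e_\pi$ is what lets the supremum ratio $C(\pi)$ be pulled out. We then apply the decomposition already noted before the theorem, $\pi(\mathcal{A}^*(s)\mid s)=\pi(\tilde a(s)\mid s)+\eta_\pi(s)$, which relies on $\tilde a(s)\in\mathcal{A}^*(s)$. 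Taking expectation under $d_\mu$ gives
\[
\mathbb{E}_{d_\mu}[e_\pi]=1-M_{\mathrm{off}}(\pi)-\bar\eta_\pi,
\]
using the definition of $M_{\mathrm{off}}$ in Eq.~\ref{eq:offline_metric} and the definition of $\bar\eta_\pi$.

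Summing the $H$ identical bounds yields $1-J(\pi)\le H\,C(\pi)\,(1-M_{\mathrm{off}}(\pi)-\bar\eta_\pi)$, and rearranging gives Eq.~\ref{eq:main_bound_1}. The predictability remark is then read off directly: for fixed bounds on $C$ and a stable $\bar\eta$, the right-hand side is affine and increasing in $M_{\mathrm{off}}$.

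The main point requiring care is the first step. We need the failure event to be covered by the per-step invalid-action events, and the probability of each such event to be expressible through the marginal $d_{\pi,t}$. For this, the history/belief must be treated as the Markov state, as in the Setup. The event at step $t$ should also be restricted to episodes that have not yet terminated; this only decreases the probability, so the upper bound still holds. The change-of-measure step is routine once the support condition is in place.
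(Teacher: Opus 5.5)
Your proposal is correct and follows the paper's proof essentially step for step. Both arguments take a union bound over the invalid-step events $\{a_t\notin\mathcal{A}^*(s_t)\}$ using Assumption~\ref{ass:failure}, condition on $s_t$, change measure via $C(\pi)$ (which needs nonnegativity of $1-\pi(\mathcal{A}^*(s)\mid s)$), split $\pi(\mathcal{A}^*(s)\mid s)=\pi(\tilde a(s)\mid s)+\eta_\pi(s)$, and sum over $t$. Your added remark about restricting each event to non-terminated episodes is a harmless refinement that the paper leaves implicit.
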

\paragraph{Takeaway.}
Theorem~\ref{thm:off2on_main} shows that offline-to-online predictability is governed by two factors:
(1) \emph{distribution shift} captured by $C(\pi)$ and
(2) \emph{non-identifiability under partial verifiability} captured by the unobserved off-demo validity mass $\bar\eta_\pi$.
Thus, offline one-step matching can be a poor proxy for online success when either the policy drifts to states outside the offline support or the probability mass over valid actions shifts from the demonstrated action to other valid alternatives, changing $M_{\mathrm{off}}(\pi)$ without reflecting true step validity.
A detailed proof and discussion are deferred to Appendix~\ref{app:RLPR}.

\subsubsection{Why Standard RLVR is Easier to Predict?}

\begin{corollary}{Fully verifiable, single-step RLVR}{}
\label{cor:rlvr_easy}
Suppose $H=1$ and the reward is fully verifiable, i.e., $\mathcal{A}^*(s)=\{\tilde a(s)\}$ for all $s$ (hence $\eta_\pi(s)\equiv 0$).
More generally, for $H=1$, Eq~\ref{eq:main_bound_1} reduces to
\[
J(\pi)\ge 1-C(\pi)\big(1-M_{\mathrm{off}}(\pi)\big),
\]
indicating substantially tighter offline-to-online alignment.
\end{corollary}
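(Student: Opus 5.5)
The corollary follows by specializing Theorem~\ref{thm:off2on_main} to $H=1$ and, where relevant, to the fully verifiable case.

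\textbf{Step 1: the case $H=1$.} Substituting $H=1$ into Eq~\ref{eq:main_bound_1} gives
\[
J(\pi)\ \ge\ 1-C(\pi)\big(1-M_{\mathrm{off}}(\pi)-\bar\eta_\pi\big).
\]
Since $\eta_\pi(s)\ge 0$ pointwise, we have $\bar\eta_\pi\ge 0$. With $C(\pi)\ge 0$, this yields $C(\pi)(1-M_{\mathrm{off}}-\bar\eta_\pi)\le C(\pi)(1-M_{\mathrm{off}})$. Hence the displayed bound $J(\pi)\ge 1-C(\pi)(1-M_{\mathrm{off}}(\pi))$ holds for every $H=1$ policy, whether or not the reward is fully verifiable.

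\textbf{Step 2: the fully verifiable case.} If $\mathcal{A}^*(s)=\{\tilde a(s)\}$, then $\mathcal{A}^*(s)\setminus\{\tilde a(s)\}=\emptyset$. So $\eta_\pi\equiv 0$, and the bound from Step 1 holds with equality in the ambiguity term: $M_{\mathrm{off}}(\pi)$ is then exactly the expected step validity under $d_\mu$.

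\textbf{Step 3: why alignment is tighter.} There are two sources. First, the horizon factor $H$ multiplying the error disappears. Second, in the single-step setting $d_{\pi,1}$ is the initial state distribution, which does not depend on $\pi$. If the offline data is drawn from that same distribution, then $C(\pi)=1$ and $J(\pi)\ge M_{\mathrm{off}}(\pi)$. In the fully verifiable case, Assumption~\ref{ass:failure} gives the matching upper bound, so $J(\pi)=M_{\mathrm{off}}(\pi)$ exactly. I would state this as a remark rather than as part of the corollary, since it needs the extra assumption that the offline and initial distributions coincide.

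The argument is routine. The only care needed is that the sign of $\bar\eta_\pi$ and of $C(\pi)$ justifies dropping the ambiguity term.
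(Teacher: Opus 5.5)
Your proof is correct and is the same direct specialization of Theorem~\ref{thm:off2on_main} that the paper intends: set $H=1$, then either use $\bar\eta_\pi=0$ under full verifiability or drop $\bar\eta_\pi\ge 0$ (using $C(\pi)\ge 0$), which is exactly what makes the ``more generally'' clause true. In your Step~3 remark, the observation that $C(\pi)$ does not depend on $\pi$ when $H=1$ is right, but Assumption~\ref{ass:failure} only says that an episode whose actions are all valid succeeds, so it gives $J(\pi)\ge M_{\mathrm{off}}(\pi)$ again rather than a matching upper bound; getting $J(\pi)=M_{\mathrm{off}}(\pi)$ requires the converse (any invalid action causes failure), which the paper adds only as an ad hoc stipulation in Example~\ref{exa:policy_comparison}.
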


Standard RLVR is often easier to analyze and predict because it is typically single-step ($H=1$) and fully verifiable ($\eta_\pi\equiv 0$): one-step matching directly reflects true correctness and there is no error accumulation over time.
In contrast, for multi-step GUI agents, distribution shift across steps and partial verifiability jointly weaken the link between offline matching and online success.
This motivates methods that explicitly address both state-distribution shift and reward ambiguity in RL for long-horizon GUI navigation.

\subsubsection{KL Regularization Improves Predictability}
\label{sec:kl_predictability_main}
While many RLVR pipelines omit KL regularization for efficiency \citep{yu2025dapo,liu2025understanding,zhou2025guig1,yang2025gta1},
we find it is crucial in partially verifiable, multi-step GUI settings.
Intuitively, a KL trust region constrains policy drift, which in turn helps control the two quantities governing offline-to-online predictability in Theorem~\ref{thm:off2on_main}:
the occupancy mismatch $C(\pi)$ and the off-demo validity mass $\bar\eta_\pi$.

\paragraph{KL-induced bounds for occupancy mismatch and off-demo validity mass (informal).}
Let $\pi_{\mathrm{ref}}$ be a reference policy (e.g., the SFT initialization trained on demonstrations) and assume a per-state KL constraint
$\mathrm{KL}\!\left(\pi(\cdot\mid s)\,\|\,\pi_{\mathrm{ref}}(\cdot\mid s)\right)\le \varepsilon$ for all $s$.
Under this constraint, the state visitation distribution induced by $\pi$ cannot drift too far from that of $\pi_{\mathrm{ref}}$.
In particular, if the offline distribution has a positive lower bound on its support, $\rho \triangleq \inf_{s:\,d_\mu(s)>0} d_\mu(s) \;>\; 0$, and $d_{\pi_{\mathrm{ref}},t}\ll d_\mu$ for all $t\in[H]$, then the occupancy mismatch is controlled as
\begin{equation}
\label{eq:C_growth_main}
C(\pi) \;\le\; C(\pi_{\mathrm{ref}})+\frac{H\sqrt{2\varepsilon}}{\rho}.
\end{equation}
KL regularization also limits how much probability mass can move away from the demonstrated action.
If the reference policy is demo-concentrated, i.e., $\pi_{\mathrm{ref}}(\tilde a(s)\mid s)\ge 1-\delta(s)$, then the off-demo validity mass satisfies
\begin{equation}
\label{eq:eta_bar_bound_main}
\bar\eta_\pi \;\le\; \bar\delta+\sqrt{\varepsilon/2},
\qquad
\bar\delta \triangleq \mathbb{E}_{s\sim d_\mu}[\delta(s)].
\end{equation}
Full statements and proofs are provided in Appendix~\ref{ap:KL_improves_predictability}.
The above bounds provide a principled explanation for why KL-regularized policy optimization improves predictability:
a KL trust region simultaneously limits state-distribution shift and constrains how much probability mass can move away from the demonstrated action.
As a result, KL-regularized RL keeps training in a regime where the offline matching score $M_{\mathrm{off}}(\pi)$ remains a more stable proxy for the online success rate $J(\pi)$.

\subsection{Success-adaptive Negative Gradient Scaling}
\label{sec:sngs}

Under partial verifiability, the step-wise reward
provides reliable positive feedback, while $\tilde r(s,a)=0$ is ambiguous:
it conflates truly invalid actions with valid-but-uncredited alternatives.
Consequently, treating every non-match as equally negative can produce biased and overly aggressive updates, pushing the policy to overfit the demonstrator’s particular choice.
To address this issue, we propose \emph{success-adaptive negative gradient scaling (SNGS)}, which conservatively downweights gradients induced by ambiguous ``negative'' outcomes.
Importantly, negative updates in policy-gradient methods such as GRPO remain useful for stabilizing training and avoiding premature collapse \citep{zhu2025surprising}.
Therefore, rather than suppressing all negative gradients equally, SNGS rescales them using a state-conditioned reliability signal estimated from the GRPO sampling group.
Concretely, GRPO samples a group of $G$ candidate actions for the same state and computes group-relative advantages.
Let $\{(a_k, \tilde r_k)\}_{k=1}^{G}$ denote a GRPO group at state $s$, where
$\tilde r_k=\mathbf{1}\{a_k=\tilde a(s)\}\in\{0,1\}$.
We define the empirical group success rate as $\hat p_g(s) \;\triangleq\; \frac{1}{G}\sum_{k=1}^{G} \tilde  r_k $,
which measures how concentrated the current policy is on the demonstrated action.

We introduce a scaling factor $\lambda_g(s)$ that rescales only the \emph{negative} advantages. 
\begin{equation}
\label{eq:lambda_def}
\lambda_g(s) \;\triangleq\; \min \left( \lambda_0 + \kappa\,\hat p_g(s),\;  1 \right).
\end{equation}
Here $\lambda_0$ is an offset, and $\kappa$ controls how $\lambda_g$ varies with $\hat p_g$. With $\kappa>0$, $\lambda_g$ increases with $\hat p_g$: as the policy becomes more concentrated on $\tilde a(s)$, non-matching samples are more likely to be genuinely incorrect, so we downweight negative gradients less and gradually recover the standard GRPO update as $\lambda_g\to 1$. With $\kappa<0$, $\lambda_g$ decreases with $\hat p_g$, making updates more conservative for high-success groups. In our experiments, we find $\kappa>0$ works well in most settings and use it as the default.

Let $A_k$ denote the GRPO advantage for sample $k$.
SNGS modifies only the negative advantages:
\begin{equation}
\label{eq:sngs_adv}
\tilde A_k \;\triangleq\;
\begin{cases}
A_k, & A_k \ge 0,\\[2pt]
\lambda_g (s)\,A_k, & A_k < 0.
\end{cases}
\end{equation}
We then replace the advantage term in the GRPO objective (Eq.~\ref{eq:grpo_kl}) with $\tilde A_k$.
As a result, SNGS preserves positive learning signals corresponding to reliably verified matches, while attenuating updates driven by potentially ambiguous negatives.
This reduces over-penalization of valid alternatives and leads to more robust policy optimization under partial verification.

\subsection{Reward Function Implementation}
Each rollout produces a structured prediction string $y$ following our output structure:
\[
y=\texttt{<think>}\cdots\texttt{</think><answer>}a\texttt{</answer>},
\]
where the \texttt{<answer>} block contains a structured action $a$ that can be parsed into a JSON object
\[
a=\{\texttt{action\_type},\ \texttt{action\_description},\ \texttt{value},\ \texttt{point\_2d}\},
\]
with $\texttt{point\_2d}\in\mathbb{R}^2$ (or \texttt{"none"} when not applicable). We implement two automated verifiers: a \emph{format verifier} and an \emph{accuracy verifier}.
The resulting step-wise reward is a weighted sum of the two:
\begin{equation}
\label{eq:step_reward}
\tilde r(s,a)
\;=\;
w_{\text{fmt}}\, r_{\text{fmt}}
\;+\;
(1 - w_{\text{fmt}})\, r_{\text{acc}},
\qquad
w_{\text{fmt}}\in[0,1],
\end{equation}
where $r_{\text{fmt}}$ checks output validity and $r_{\text{acc}}$ scores action correctness. We set $w_{\text{fmt}}=0.1$ so that our reward mainly focus on action correctness.

\noindent
\textbf{Format reward.}
The format reward $r_{\text{fmt}}$ is $1$ if the output contains valid \texttt{<think>} and \texttt{<answer>} tags and the \texttt{<answer>} block can be parsed into the required JSON schema; otherwise $r_{\text{fmt}}=0$.

\noindent
\textbf{Accuracy reward.}
The accuracy reward $r_{\text{acc}}$ evaluates semantic correctness of the predicted action: $r_{\text{acc}}
\;=\;
r_{\text{act}}
\cdot
r_{\text{val}}
\cdot
r_{\text{g}}$,
where each component is computed as follows:
(1) \textbf{Action-type reward} $r_{\text{act}}$ checks whether $\texttt{action\_type}$ matches the demonstrated action type.
(2) \textbf{Value reward} $r_{\text{val}}$ compares the predicted $\texttt{value}$ $v$ with the demonstrated value $v^\star$ using word-level F1,
and sets $r_{\text{val}}=1$ if $\mathrm{F1}(v, v^\star) > 0.5$.
(3) \textbf{Grounding reward} $r_{\text{g}}$ evaluates point grounding by checking whether the predicted point
$\mathbf{u}$ falls inside the demonstrated bounding box $b^\star$, i.e.,
$r_{\text{g}}=\mathbf{1}\{\mathbf{u}\in b^\star\}$.

Together, these verifiers yield a step-wise signal: positive rewards indicate reliably correct predictions, whereas low rewards may arise from either incorrect actions or valid but uncredited alternatives. This design matches the partial-verifiability setting analyzed in Sec.~\ref{sec:RLPR}.

\begin{figure*}[t]
    \vspace{-0.0in}
    \centering
    \includegraphics[width=1\linewidth]{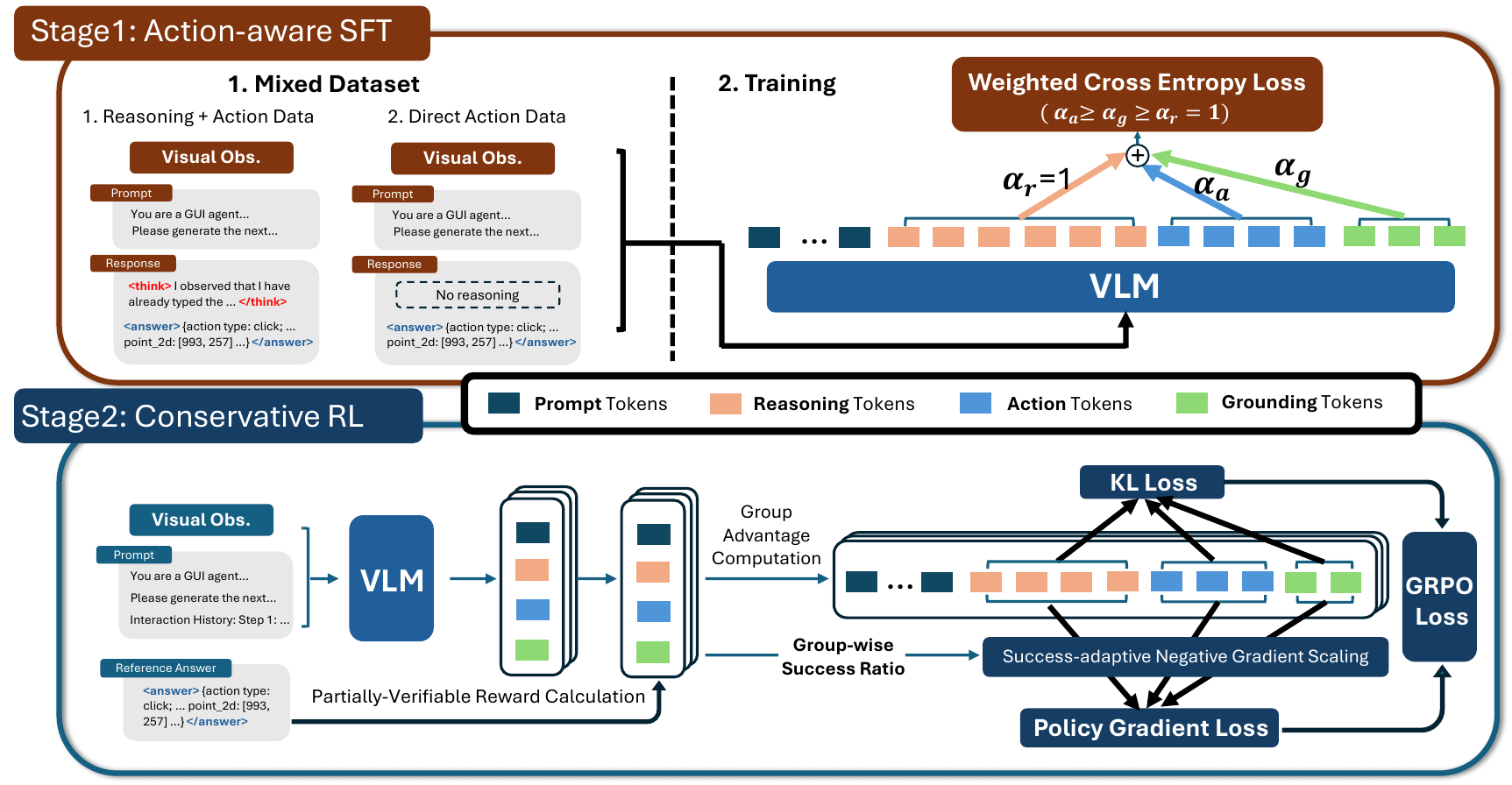}
    \caption{Overall training framework of GUI-Libra: Stage 1 applies action-aware SFT with mixed supervision and token reweighting; Stage 2 performs KL-regularized GRPO with success-adaptive negative gradient scaling.}
    \label{fig:training_framework}
    \vspace{-0.in}
\end{figure*}

\subsection{Overall Training Framework for GUI-Libra}
\label{sec:overall_training_framework}

Figure~\ref{fig:training_framework} summarizes the overall training framework of GUI-Libra.
Based on our augmented and filtered datasets, GUI-Libra consists of two stages: 
\begin{itemize}

\item In the SFT stage, we apply ASFT to equip the base model with \emph{action-aligned reasoning} and mitigate grounding degradation caused by long CoT. ASFT mixes reasoning-then-action and direct-action supervision, and uses an action-aware reweighted objective that emphasizes action and grounding tokens.

\item In the RL stage, we further optimize the policy with conservative GRPO under partially verifiable step-wise rewards.
To stabilize learning and improve offline-to-online predictability, we adopt a conservative RL design with two components:
    (i) \emph{KL regularization} to constrain distribution shift and the effect of ambiguous rewards, and
    (ii) \emph{success-adaptive negative gradient scaling} to downweight unreliable negative updates caused by valid-but-uncredited alternatives.
\end{itemize}

Overall, GUI-Libra promotes step-wise improvements that are \emph{behaviorally meaningful}: gains in offline action matching are more likely to translate into better decisions along the policy’s own trajectories. In addition, by leveraging partially verifiable offline feedback, our framework enables scalable optimization on large static datasets without requiring costly online interaction during training.

%% file: sections/experiments.tex
\section{Experiments}
In this section, we evaluate GUI-Libra on a diverse set of offline and online GUI navigation benchmarks. Beyond overall results, we also study the impact of our key design choices in both the SFT and RL stages, and examine when offline step-wise metrics can reliably predict online task success for GUI agents.

\begin{figure*}[t]
    \centering
    \includegraphics[width=1\linewidth]{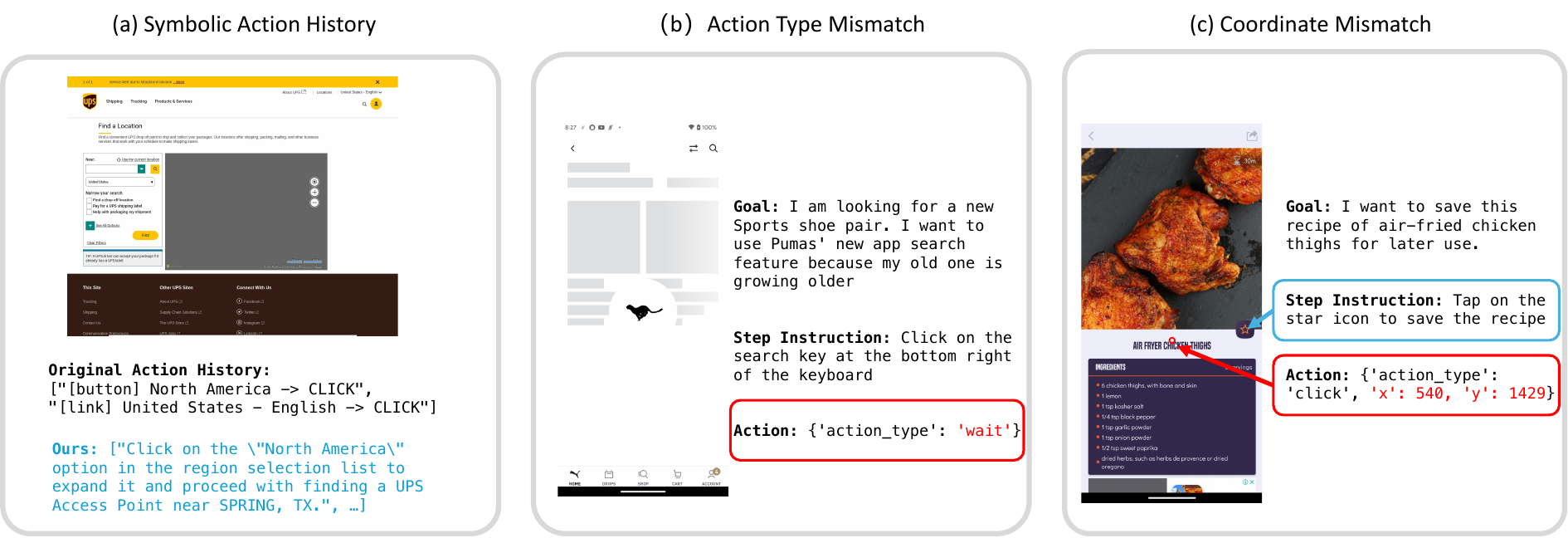}
    \caption{Limitations of current offline benchmarks. (a) Symbolic action history not in natural language in MM-Mind2Web, (b) Action type mismatch and (c) coordinate mismatch in AndroidControl.}
    \label{fig:limitations_of_MMM2Web_AC}
\end{figure*}

\input{tables/offline_benchmarks}

\subsection{Experimental Setups}

\paragraph{\textbf{GUI-Libra Details.}} We train GUI-Libra models from Qwen2.5-VL-3B/7B-Instruct~\citep{bai2025qwen2} and Qwen3-VL-4B/8B-Instruct~\citep{bai2025qwen3vltechnicalreport}. We use GUI-Libra-81K for SFT and a downsampled 40K subset for RL. For \textbf{SFT}, we use a learning rate of $1\times10^{-5}$ with an effective batch size of 256, and set ASFT weights to $\alpha_a=2$ and $\alpha_g=4$ by default. To ensure fair comparison, we train baselines on GUI-Libra-81K for two epochs, while models trained with mixed reasoning and direct-action data (double size) for one epoch.
Notably, our SFT corpus is substantially smaller than those used in recent GUI models~\citep{yang2025ferretuilite,liu2025scalecua} and we do not include any direct grounding-only data (e.g., low-level instructions paired with coordinate supervision), focusing on GUI reasoning and multi-step navigation.
For \textbf{RL}, we use a learning rate of $1\times10^{-6}$, rollout batch size 256, group size 8, and KL coefficient 0.005 (7B) or 0.001 (others). While SNGS can improve performance, it is sensitive to hyperparameters; therefore, for ablations unrelated to SNGS, we use KL-regularized GRPO to isolate the effects of the other components. Additional implementation details are provided in Appendix~\ref{ap:implementation_details}.

\paragraph{\textbf{Evaluation Benchmarks.}} We evaluate models on both offline and online benchmarks. For \textbf{offline evaluation}, we follow UGround~\citep{gou2025uground} but substantially refine the underlying datasets to improve annotation quality and realism. As illustrated in Figure~\ref{fig:limitations_of_MMM2Web_AC}, the original MM-Mind2Web~\citep{zheng2024seeact} uses symbolic action histories that do not reflect real-world usage, while AndroidControl~\citep{li2024androiedcontrol} contains roughly 20\% errors in action types and coordinates. To address these issues, \textbf{we enhance AndroidControl and MM-Mind2Web by correcting label errors and translating non-natural symbolic action histories, yielding AndroidControl-v2 and Multimodal-Mind2Web-v2 (MM-Mind2Web-v2)}, respectively. We report step success rate, which requires the predicted action type, textual value, and coordinates to be correct, and include both Pass@1 and Pass@4 step accuracy. For AndroidControl-v2, following UGround~\citep{gou2025uground}, we evaluate on 398 filtered samples with both high-level and low-level instructions.
For \textbf{online evaluation}, we use AndroidWorld~\citep{rawles2025androidworld}, WebArena-Lite-v2~\citep{liu2025scalecua}, and Online-Mind2Web~\citep{online-mind2web}, which assess agents in realistic interactive environments. Notably, Online-Mind2Web is evaluated on live websites, introducing additional real-world variability and complexity. We follow the official protocols and report task success rate as the primary metric, with a maximum of 20 steps for AndroidWorld, 15 for WebArena-Lite-v2, and 30 for Online-Mind2Web. Additional details are provided in Appendix~\ref{ap:benchmark_details}.

\paragraph{\textbf{Baselines.}} We compare GUI-Libra series against a diverse set of native GUI agents, including Qwen2.5-VL-3/7/32/72B~\citep{bai2025qwen2}, Qwen3-VL-4/8/32B~\citep{bai2025qwen3vltechnicalreport}, Aguvis-7B~\citep{xu2025aguvis}, UI-TARS-1.5-7B~\citep{qin2025uitars}, GLM-4.1-V-9B-Thinking~\citep{hong2025glm}, and GUI-R1-3/7B~\citep{luo2025guir1}. We also evaluate proprietary models paired with a grounding module, following UGround~\citep{gou2025uground}, including GPT-4o, GPT-4.1, GPT-5-mini, and GPT-5, and include reported/reproduced results from ScaleCUA~\citep{liu2025scalecua} on the two online benchmarks. Because evaluation pipelines can substantially affect reported performance and are often unreleased by previous work, we make our best effort to evaluate all models under a unified and consistent protocol for fair comparison.

\begin{figure*}[t]
    \vspace{-0.0in}
    \centering
    \includegraphics[width=0.95\linewidth]{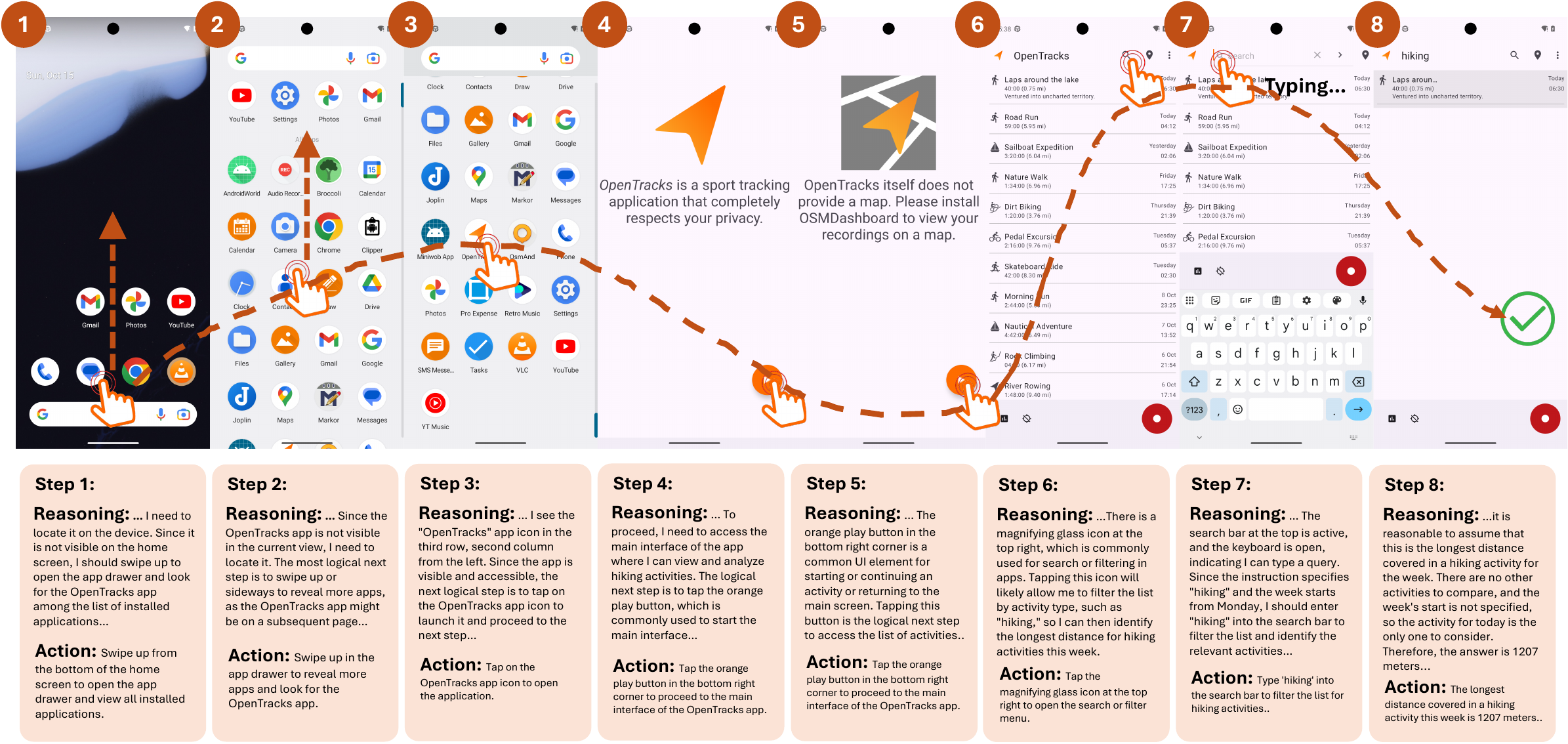}
    \caption{Trajectory Example of GUI-Libra-7B on AndroidWorld.}
    \label{fig:trajectory_example}
    \vspace{-0.1in}
\end{figure*}

\subsection{Performance on Offline and Online GUI Navigation Benchmarks}

\subsubsection{Offline Benchmarks}
Tables~\ref{tab:androidcontrol_v2_accuracy} and~\ref{tab:mind2web_v2_accuracy} report step-wise accuracy on AndroidControl-v2 and MM-Mind2Web-v2, comparing GUI-Libra with open-source native GUI models and proprietary systems using the SeeAct-V~\citep{zheng2024seeact} framework. Overall, the GUI-Libra series achieves the \textbf{best Pass@1 performance} on both benchmarks, outperforming not only similarly sized models but also several substantially larger open-source and proprietary models. Importantly, GUI-Libra consistently improves over its corresponding base models. For example, GUI-Libra-3B improves Pass@1 over Qwen2.5-VL-3B by $+20.9$ and $+14.8$ on AndroidControl-v2 high-level and low-level tasks, respectively, and by $+19.3$ on the average Pass@1 of MM-Mind2Web-v2. We also observe a clear scaling trend for both Qwen baselines and GUI-Libra models, indicating that larger models have greater potential to achieve strong offline decision-making performance.
In terms of Pass@4, large models (e.g., Qwen2.5-VL-72B and Qwen3-VL-32B) can be competitive, but they rely on substantially more parameters. In contrast, GUI-Libra is more parameter-efficient and consistently outperforms models at similar scale, and even GPT5. For instance, GUI-Libra-3B improves Pass@4 over Qwen2.5-VL-3B by $+16.3$, and $+19.3$ on AndroidControl-v2 (high-level) and MM-Mind2Web-v2, respectively. 

We further find that gains on Qwen3-based backbones are relatively smaller than those on Qwen2.5-based ones, which we attribute to Qwen3’s heavier post-training (especially RL for reasoning) that already strengthens planning and decision-making. Nevertheless, GUI-Libra still provides meaningful improvements: GUI-Libra-4/8B outperforms Qwen3-VL-4/8B by 13.0/9.5 points on Pass@1 of AndroidControl-v2 (high-level) and by 8.8/6.7 points on MM-Mind2Web-v2, demonstrating consistent benefits even with strong pretrained backbones.

\begin{table}[t]
\centering
\caption{Performance on the online benchmark \textbf{AndroidWorld} in 20 steps.
$^*$ denotes numbers reported by original papers.
Left: \emph{Native Models} (single VLM).
Right: \emph{Agent Frameworks} ($\geq$2 VLM modules).}
\label{tab:androidworld_side_by_side}

\begin{subtable}[t]{0.48\linewidth}
\centering
\caption{Native Models}
\resizebox{0.65\linewidth}{!}{
\begin{tabular}{l r}
\toprule
\textbf{Model} & \textbf{Acc.} \\
\midrule
UI-TARS-1.5-7B & 16.5 \\
GLM-4.1V-9B-Thinking & 18.3 \\
Qwen2.5-VL-32B & 29.6 \\
Qwen2.5-VL-72B & 32.2 \\
Qwen3-VL-32B & 34.8 \\
\midrule
Qwen2.5-VL-3B (Baseline) & 3.5 \\
\rowcolor{beijingblue}
\textbf{GUI-Libra-3B (Ours)} & \textbf{25.2} \\
Qwen2.5-VL-7B (Baseline) & 7.8 \\
\rowcolor{beijingblue}
\textbf{GUI-Libra-7B (Ours)} & \textbf{29.6} \\
Qwen3-VL-4B (Baseline) & 27.0 \\
\rowcolor{beijingblue}
\textbf{GUI-Libra-4B (Ours)} & \textbf{42.6} \\
Qwen3-VL-8B (Baseline) & 30.4 \\
\rowcolor{beijingblue}
\textbf{GUI-Libra-8B (Ours)} & \textbf{42.6} \\
\bottomrule
\end{tabular}}
\end{subtable}
\hfill
\begin{subtable}[t]{0.5\linewidth}
\centering
\caption{Agent Frameworks ($\geq$2 VLM Modules)}
\resizebox{1\linewidth}{!}{
\begin{tabular}{l c r}
\toprule
\textbf{Model} & \textbf{Additional Module} & \textbf{Acc.} \\
\midrule
Qwen2.5-VL-3B & Step-wise Summary & 7.0 \\
Qwen2.5-VL-7B & Step-wise Summary  & 15.7 \\
Qwen3-VL-4B & Step-wise Summary  & 36.5 \\
Qwen3-VL-8B & Step-wise Summary  & 39.1 \\
ScaleCUA-3B$^*$ & Step-wise Summary  & 23.7 \\
ScaleCUA-7B$^*$ & Step-wise Summary  & 27.2 \\
ScaleCUA-32B$^*$ & Step-wise Summary  & 30.6 \\
GLM-4.1V-9B-Thinking & UGround-v1-7B & 20.9 \\
GPT-4o & UGround-v1-7B & 42.6 \\
GPT-4.1 & UGround-v1-7B & 37.4 \\
GPT-5-mini & UGround-v1-7B & 40.9 \\
GPT-5 & UGround-v1-7B & 48.7 \\
\bottomrule
\end{tabular}}
\end{subtable}
\vspace{-5pt}
\end{table}

%%%% START of WebArena-lITE-V2
\begin{table}[ht]
\centering
\caption{Performance comparison on \textbf{WebArena-Lite-v2} in 15 steps. $^*$ denotes numbers reported in \citet{liu2025scalecua}.}
\label{tab:online_evaluation_webarena}
\resizebox{0.97\linewidth}{!}{
\begin{tabular}{l | c c c c c | r}
\toprule
 & \textbf{GitLab} & \textbf{MAP} &\textbf{Reddit} & \textbf{Shopping} & \textbf{ShoppingAdmin} & \textbf{Average} \\
\midrule
\midrule
\multicolumn{7}{l}{\textbf{Native Models}} \\
\midrule
Aguvis-72B$^*$ & - & - & - & - & - & 5.8 \\
Qwen2.5-VL-72B$^*$ & - & - & - & - & - & 15.6\\
InternVL3.5-241B-A28B$^*$ & - & - & - & - & - & 11.7 \\
UI-TARS-1.5-7B$^*$ & - & - & - & - & - & 20.8 \\
UI-TARS-72B-DPO$^*$ & - & - & - & - & - & 23.4 \\
% ScaleCUA-3B$^*$ & - & - & - & - & - & 31.8 \\  % cannot reproduce their results
% ScaleCUA-7B$^*$ & - & - & - & - & - & 37.7 \\
% ScaleCUA-32B$^*$ & - & - & - & - & - & 44.2 \\
\midrule
ScaleCUA-3B & 21.7 & 7.7 & 13.2 & 16.5 & 23.6 & 17.2 \\
ScaleCUA-7B & 28.3 & 15.4 & 27.6 & 18.8 & 30.7 & 23.9 \\
ScaleCUA-32B & 34.2 & 10.6 & 26.3 & 16.5 & 33.6 & 24.0 \\
\midrule
Qwen2.5-VL-3B & 1.7 & 0.0 & 0.0 & 1.7 & 0.0 & 0.8 \\
\rowcolor{beijingblue}
\textbf{GUI-Libra-3B (Ours)}  & 25.8 & 9.6 & 18.4 & 17.6 & 12.1 & 16.7 \\
Qwen2.5-VL-7B & 8.3 & 1.0 & 2.6 & 7.4 & 2.9 & 4.9 \\
\rowcolor{beijingblue}
\textbf{GUI-Libra-7B (Ours)}  & 25.0 & 10.6 & 26.3 & 26.1 & 22.9 & 22.6 \\
Qwen3-VL-4B & 17.5 & 5.8 & 10.5 & 13.1 & 10.7 & 11.9 \\
\rowcolor{beijingblue}
\textbf{GUI-Libra-4B (Ours)} & 29.2 & 10.6 & 34.2 & 30.1 & 17.9 & 24.4 \\
Qwen3-VL-8B & 15.0 & 5.8 & 17.1 & 17.0 & 19.3 & 15.3 \\
\rowcolor{beijingblue}
\textbf{GUI-Libra-8B (Ours)} & 31.7 & 17.3 & 35.5 & 26.1 & 25.0 & \textbf{26.6} \\
\midrule
\midrule
\multicolumn{7}{l}{\textbf{Agent Framework (GPT-4o as the Planner)}} \\
\midrule
GPT-4o + UI-TARS-1.5-7B \citep{qin2025uitars}$^*$ & - & - & - & - & - & 22.6 \\
GPT-4o + UGround-V1-7B \citep{gou2025uground}$^*$ & - & - & - & - & - & 23.2\\
GPT-4o + ScaleCUA-7B \citep{liu2025scalecua}$^*$ & - & - & - & - & - & \textbf{28.6} \\
\bottomrule
\end{tabular}}
\end{table}
%%%% END  of WebArena-Lite-v2

\subsubsection{Online Benchmarks} Tables~\ref{tab:androidworld_side_by_side},~\ref{tab:online_evaluation_webarena}, and~\ref{tab:o4mini_webjudge} report task success rates on AndroidWorld, WebArena-Lite-v2, and Online-Mind2Web, respectively. However, many prior works do not release their evaluation frameworks, which hinders reproducibility and can compromise fair comparison. Even open-source studies such as ScaleCUA~\citep{liu2025scalecua} report results from agent frameworks augmented with additional modules (e.g., step-wise summaries) on AndroidWorld as native agent performance. As shown in Table~\ref{tab:androidworld_side_by_side}, step-wise summaries improve Qwen3-VL-4B and Qwen3-VL-8B by 9.5 and 8.7 points over their native counterparts, respectively. To support a fair comparison, we therefore explicitly report results for true native models and agent frameworks.

On \textbf{AndroidWorld} (Table~\ref{tab:androidworld_side_by_side}), GUI-Libra substantially strengthens native GUI models across all scales. Relative to their corresponding baselines, GUI-Libra yields large and consistent gains: GUI-Libra-3B increases the success rate from $3.5$ to $25.2$ (+21.7) and GUI-Libra-8B from $30.4$ to $42.6$ (+12.2). Notably, GUI-Libra-4B/8B ($42.6$) surpass several much larger native models (e.g., Qwen2.5-VL-32/72B and Qwen3-VL-32B), and also match or outperform multi-module agent frameworks that add external step-wise summary modules. For example, Qwen3-VL-8B with step-wise summary reaches $39.1$, whereas our native GUI-Libra-8B achieves $42.6$. Moreover, GUI-Libra-4B/8B reaches performance comparable to strong proprietary systems such as GPT-4o ($42.6$) and GPT-5-mini ($40.9$) equipped with UGround, despite using a simpler single-VLM architecture. We further provide qualitative examples of GUI-Libra-7B successfully completing AndroidWorld tasks in Figures~\ref{fig:trajectory_example} and Appendix~\ref{ap:long_traj_example}.

On \textbf{WebArena-Lite-v2} (Table~\ref{tab:online_evaluation_webarena}), a \textbf{locally deployed web benchmark (rather than live websites)}, GUI-Libra shows strong generalization across diverse web tasks despite being trained on only 15K web-related samples, far fewer than the web corpora used by many existing GUI models. Even in this low-data regime, GUI-Libra delivers large gains over its base models: GUI-Libra-7B improves the average success rate from $4.9$ to $22.6$, and GUI-Libra-8B increases performance from $15.3$ to $26.6$. These results are competitive with strong proprietary systems such as GPT-4o equipped with UI-TARS and UGround. Moreover, GUI-Libra-8B outperforms large-scale models including ScaleCUA-32B, UI-TARS-72B, and Aguvis-72B, all trained on substantially larger web datasets.

On \textbf{Online-Mind2Web} (Table~\ref{tab:o4mini_webjudge}), which evaluates agents on \textbf{live websites} with real-world variability, we evaluate GUI-Libra using two independent judge models: o4-mini and WebJudge-7B~\citep{online-mind2web}. GUI-Libra consistently improves over its corresponding base models across all difficulty levels. In particular, GUI-Libra-8B increases the average overall score from $19.3$ (Qwen3-VL-8B) to $28.0$, achieving the best result among all evaluated native models, including those with substantially more parameters. Similarly, GUI-Libra-7B improves from $15.8$ (Qwen2.5-VL-7B) to $25.5$, GUI-Libra-4B from $21.7$ (Qwen3-VL-4B) to $25.7$. Even at the 3B scale, GUI-Libra-3B achieves an average overall of $21.3$, a notable leap from $4.8$ for Qwen2.5-VL-3B. 
Notably, \emph{while the ScaleCUA family performs competitively on locally deployed benchmarks, its performance is less competitive on live websites}: ScaleCUA-7B and ScaleCUA-32B reach only $23.7$ and $23.5$ average overall, both of which are surpassed by GUI-Libra-4/7/8B. Together, these results suggest that \textbf{GUI-Libra not only closes the gap between smaller open-source models and larger agents, but also provides stronger robustness and generalization in realistic, dynamically changing web environments.}

Overall, GUI-Libra delivers consistent gains on online mobile and web benchmarks, generalizing from locally deployed environments to live websites. It matches or surpasses larger models while remaining highly data-efficient, using relatively small training data, especially for the web domain.

\begin{table}[t]
\centering
\caption{Performance comparison on \textbf{Online-Mind2Web} in 30 steps using o4-mini and WebJudge-7B as judges.}
\label{tab:o4mini_webjudge}
\resizebox{0.9\linewidth}{!}{
\begin{tabular}{l | c c c c | c c c c | r}
\toprule
\multirow{2}{*}{\diagbox[width=10em, height=2.1\line]{\textbf{Model}}{\textbf{Judge}}}
& \multicolumn{4}{c|}{\textbf{o4-mini}}
& \multicolumn{4}{c|}{\textbf{WebJudge-7B}}
& \textbf{Avg.} \\
 & \textbf{Easy} & \textbf{Medium} & \textbf{Hard} & \textbf{Overall}
 & \textbf{Easy} & \textbf{Medium} & \textbf{Hard} & \textbf{Overall} & \textbf{Overall} \\
\midrule
\midrule
\multicolumn{10}{c}{\textbf{Agent Framework}} \\
\midrule
GPT-4o + UGround-v1-7B & 38.8 & 14.0 & 6.5 & 18.7 & 45.0 & 27.3 & 19.5 & 30.0 & 24.3 \\
GPT-4.1 + UGround-v1-7B & 41.3 & 21.7 & 5.2 & 22.7 & 47.5 & 35.0 & 28.6 & \textbf{36.7} & \textbf{29.7} \\
GPT-5 + UGround-v1-7B & 40.0 & 24.5 & 14.3 & \textbf{26.0} & 45.0 & 31.5 & 24.7 & 33.3 & \textbf{29.7} \\
\midrule
\multicolumn{10}{c}{\textbf{Native Models}} \\
\midrule
Qwen2.5-VL-32B & 12.5 & 7.7 & 1.3 & 7.3 & 28.8 & 14.7 & 19.5 & 19.7 & 13.5 \\
Qwen3-VL-32B   & 33.8 & 17.5 & 7.8 & 19.3 & 45.0 & 31.5 & 28.6 & 34.3 & 26.8 \\
ScaleCUA-3B  & 30.0 & 4.9 & 2.6 & 11.0 & 37.5 & 18.2 & 11.7 & 21.7 & 16.3 \\
ScaleCUA-7B  & 33.8 & 14.7 & 3.9 & 17.0 & 47.5 & 27.3 & 18.2 & 30.3 & 23.7 \\
ScaleCUA-32B & 31.3 & 14.7 & 6.5 & 17.0 & 43.8 & 29.4 & 16.9 & 30.0 & 23.5 \\
\midrule
Qwen2.5-VL-3B & 3.8 & 0.0 & 1.3 & 1.3 & 16.3 & 7.7 & 1.3 & 8.3 & 4.8 \\
\rowcolor{beijingblue}
\textbf{GUI-Libra-3B (Ours)} & 28.8 & 9.8 & 5.2 & 13.7 & 47.5 & 21.0 & 24.7 & 29.0 & 21.3 \\
Qwen2.5-VL-7B & 22.5 & 7.7 & 0.0 & 9.7 & 36.3 & 18.9 & 13.0 & 22.0 & 15.8 \\
\rowcolor{beijingblue}
\textbf{GUI-Libra-7B (Ours)} & 36.3 & 15.4 & 2.6 & 17.7 & 47.5 & 30.8 & 23.4 & 33.3 & 25.5 \\
Qwen3-VL-4B & 33.8 & 10.5 & 6.5 & 15.7 & 43.8 & 23.8 & 18.2 & 27.7 & 21.7 \\
\rowcolor{beijingblue}
\textbf{GUI-Libra-4B (Ours)} & 36.3 & 18.2 & 6.5 & \textbf{20.0} & 45.0 & 30.1 & 19.5 & 31.3 & 25.7 \\
Qwen3-VL-8B & 23.8 & 9.8 & 0.0 & 11.0 & 43.8 & 23.1 & 19.5 & 27.7 & 19.3 \\
\rowcolor{beijingblue}
\textbf{GUI-Libra-8B (Ours)} & 31.3 & 17.5 & 10.4 & 19.3 & 42.5 & 37.8 & 28.6 & \textbf{36.7} & \textbf{28.0} \\
\bottomrule
\end{tabular}}
% \vspace{-10pt}
\end{table}

\subsection{Action-aware SFT and RL Mitigate Grounding Performance Degradation}\label{sec:asft_rl_grounding}
Figure~\ref{fig:grounding_length_ASFT} analyzes how grounding performance varies with response length for the base model (Qwen2.5-VL-3B) and SFT variants trained with different strategies. We evaluate on ScreenSpot-v2, group outputs into 30-token length bins, discard bins with fewer than 20 samples, and report the average grounding accuracy within each bin. This setup enables a fine-grained analysis of how increasingly long CoT outputs affect grounding performance. As response length increases, both the base model and standard SFT exhibit a pronounced degradation in grounding accuracy, indicating that long-form reasoning interferes with precise action execution. In contrast, \textbf{action-aware SFT substantially mitigates this degradation across all response lengths}. By incorporating direct-action supervision, action-aware SFT supports both reasoning and no-reasoning modes. Weighted training objectives further stabilize performance under long responses. In particular, stronger weighting strategies preserve high grounding correctness even beyond 250 tokens, significantly outperforming both the base model and standard SFT.

Table~\ref{fig:grounding_asft_rl_comparison} reports overall grounding accuracy and average response length across different models and inference modes. Models trained with mixed data can be flexibly prompted to inference in either reasoning or no-reasoning modes, denoted as \textit{Reason} and \textit{No-Reason}. Across both 3B and 7B scales, \textbf{mixed-data training and action-aware weighting consistently improve average grounding accuracy in both modes}. For example, at the 7B scale, mixed-data SFT improves grounding accuracy in reasoning mode from 79.0\% to 81.4\%, while action-aware weighting further increases it to 83.4\%. Despite these improvements, \textbf{ASFT alone does not fully eliminate the gap between reasoning and no-reasoning modes}. For instance, ASFT-3B in reasoning mode still underperforms its no-reasoning counterpart by 4.8 points, and ASFT-7B exhibits a similar 3.4-point gap, suggesting that residual interference between reasoning and grounding remains.

This gap is largely eliminated after our RL training. GUI-Libra models achieve comparable, and even superior, grounding accuracy in reasoning mode despite producing longer responses compared to the no-reasoning mode. In particular, GUI-Libra-7B attains higher grounding accuracy in reasoning mode than in no-reasoning mode (89.3\% vs.\ 88.5\%), and GUI-Libra-3B achieves similar accuracy (83.4\% vs.\ 83.2\%) while generating substantially more tokens (176 vs.\ 124 and 206 vs.\ 59, respectively). Notably, our RL stage does not use direct grounding supervision, unlike prior work~\citep{luo2025guir1,lu2025uir1}; instead, it leverages step-wise data derived from high-level and multi-step tasks. These results demonstrate that \textbf{RL further reshapes the policy to better align reasoning with grounding, fully mitigating grounding degradation under long CoT outputs and complementing the benefits of action-aware SFT.}

\begin{figure}[t]
\centering
% ================= Left: image =================
\begin{minipage}[]{0.45\linewidth}
    \centering
    \includegraphics[width=0.95\linewidth]{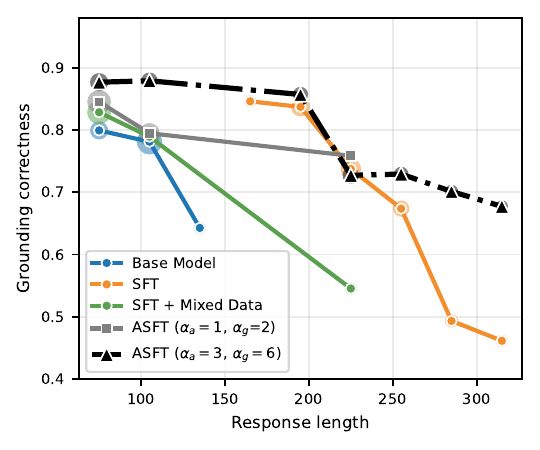}
    \vspace{-0pt}
    \caption{Grounding accuracy under different response lengths.
    Action-aware SFT strategies, including mixing direct-action data and weighted objectives, help preserve grounding accuracy under long CoT outputs.}
    \label{fig:grounding_length_ASFT}
\end{minipage}
\hfill
% ================= Right: table =================
\begin{minipage}[]{0.53\linewidth}
    \centering
    \vspace{-5pt}
    \resizebox{\linewidth}{!}{
    \begin{tabular}{l l c c}
        \toprule
        \textbf{Model Name} &
\makecell{\textbf{Inference}\\\textbf{Mode}} &
\makecell{\textbf{Average}\\\textbf{Tokens}} &
\makecell{\textbf{Grounding}\\\textbf{Accuracy (\%)}} \\
\midrule
SFT-3B & Reason & 223.4 & 73.4 \\
SFT 3B + Mixed Data & No-Reason & 77.3 & 79.4 \\
SFT 3B + Mixed Data & Reason & 200.6 & 73.8 \\
ASFT 3B & No-Reason & 67.7 & 81.0 \\
ASFT 3B  & Reason & 200.2 & 76.2 \\
\midrule
GUI-Libra-3B (ASFT+RL) & No-Reason & 59.0 & 83.2 \\
GUI-Libra-3B (ASFT+RL) & Reason & 206.5 & 83.4 \\
\midrule
SFT-7B & Reason & 218.2 & 79.0 \\
SFT 7B + Mixed Data & No-Reason & 69.4 & 85.6 \\
SFT 7B + Mixed Data & Reason & 168.1 & 81.4 \\
ASFT 7B   & No-Reason & 76.3 & 86.8 \\
ASFT 7B   & Reason & 169.6 & 83.4 \\
\midrule
GUI-Libra-7B (ASFT+RL) & No-Reason & 124.4 & 88.5 \\
GUI-Libra-7B (ASFT+RL) & Reason & 176.1 & 89.3 \\
        \bottomrule
    \end{tabular}
    }
    \captionof{table}{Grounding accuracy and average response tokens across different models and inference modes.
    ``Reason'' and ``No-Reason'' indicate whether explicit reasoning mode is encouraged through prompting.}
    \label{fig:grounding_asft_rl_comparison}
\end{minipage}

\end{figure}
\begin{tabular}{l l l c c}
\toprule

\bottomrule
\end{tabular}

\begin{figure*}[t]
    \centering
    \includegraphics[width=1\linewidth]{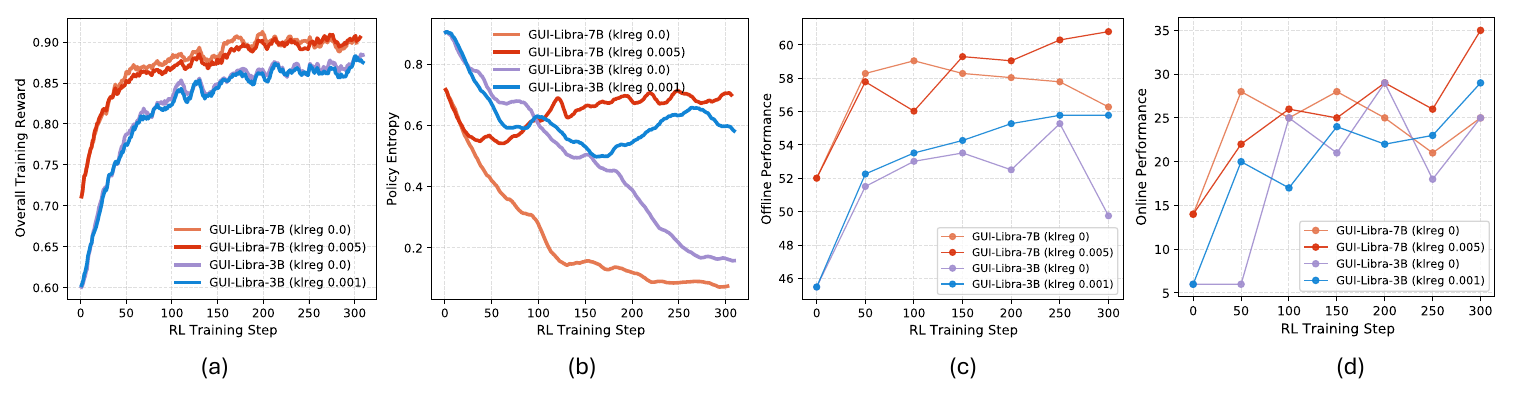}
    \vspace{-15pt}
   \caption{Comparison of training and evaluation metrics with and without KL regularization: (a) training reward, (b) policy entropy during training, (c) offline evaluation performance on AndroidControl-High, and (d) online evaluation performance on AndroidWorld.}
    \label{fig:train_evaluation_offline_online_metrics}
\end{figure*}

\subsection{On the Effectiveness of KL Regularization for RL}
As analyzed in Section~\ref{sec:kl_predictability_main}, KL regularization theoretically controls both distribution shift and reward ambiguity, making offline step-wise metrics more reliable predictors of online task completion. To validate this empirically, we visualize training and evaluation metrics on AndroidControl-High and AndroidWorld in Figure~\ref{fig:train_evaluation_offline_online_metrics}, which represent typical offline and online settings that share the same action space. We compare RL runs with and without KL regularization, using KL coefficients of 0.0 vs.\ 0.005 for 7B models and 0.0 vs.\ 0.001 for 3B models, with identical initialization. As shown in Figure~\ref{fig:train_evaluation_offline_online_metrics}(a), training reward curves largely overlap across different KL settings, indicating similar reward optimization behavior. However, evaluation behavior differs markedly: in Figure~\ref{fig:train_evaluation_offline_online_metrics}(c) and (d), models trained without KL regularization exhibit noticeable performance degradation despite increasing training rewards, reflecting a form of reward hacking commonly observed in RLHF. Moreover, Figure~\ref{fig:train_evaluation_offline_online_metrics}(b) shows that removing KL regularization leads to a pronounced decrease in policy entropy, indicating premature policy collapse and overfitting. In contrast, a small KL penalty stabilizes policy entropy and yields more consistent offline and online performance.

To quantitatively examine how well offline metrics predict online performance, we plot the offline and online scores of all intermediate checkpoints from both 3B and 7B models in Figure~\ref{fig:offline_online_correlation}. The results reveal an approximately linear relationship, supporting our theoretical analysis in Theorem~\ref{thm:off2on_main}. In Figure~\ref{fig:offline_online_correlation}(b), we report both Pearson correlation, which measures linear dependence, and Spearman correlation, which captures rank consistency between offline and online performance. Although the overall Pearson correlation is moderate ($r=0.76$), further analysis shows substantial differences across different KL regularization settings. When separating checkpoints by KL strength, models trained with KL regularization ($\mathrm{KL}>0$) exhibit significantly stronger alignment between offline and online performance. As shown in Figure~\ref{fig:offline_online_correlation}(b), KL-regularized models achieve Pearson and Spearman correlations of $0.89$ ($p<10^{-4}$) and $0.83$ ($p=2\times10^{-4}$), respectively, indicating strong and statistically robust dependence. In contrast, models trained without KL regularization show weaker correlations (Pearson $r=0.63$, $p=0.015$; Spearman $r=0.53$, $p=0.053$), with the rank correlation failing to reach conventional significance levels. Overall, these results provide strong empirical evidence that \textbf{KL regularization improves training stability and enhances the predictability of online task success from offline evaluations}, complementing our theoretical analysis in Section~\ref{sec:kl_predictability_main}.

\begin{figure}[t]
    \centering
    \includegraphics[width=0.65\linewidth]{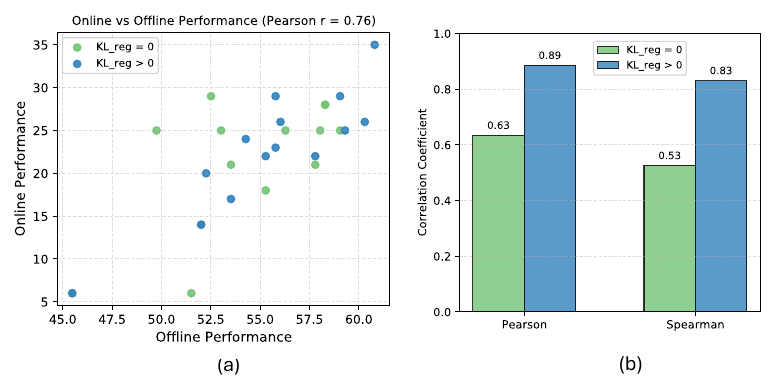}
    \vspace{-5pt}
    \caption{(a) Correlation between offline and online performance. (b) Comparison of Pearson and Spearman correlations with and without KL regularization.}
    \label{fig:offline_online_correlation}
\end{figure}

\subsection{Ablations}
In this section, we conduct a series of ablation studies to better understand the contribution of individual designs and parameters in our pipeline on GUI navigation tasks. Specifically, we examine the impact of our data filtering strategies, analyze the role of each component in ASFT, and study the impact of different KL coefficient during RL training.

\begin{figure}[t]
    \centering
    \includegraphics[width=1\linewidth]{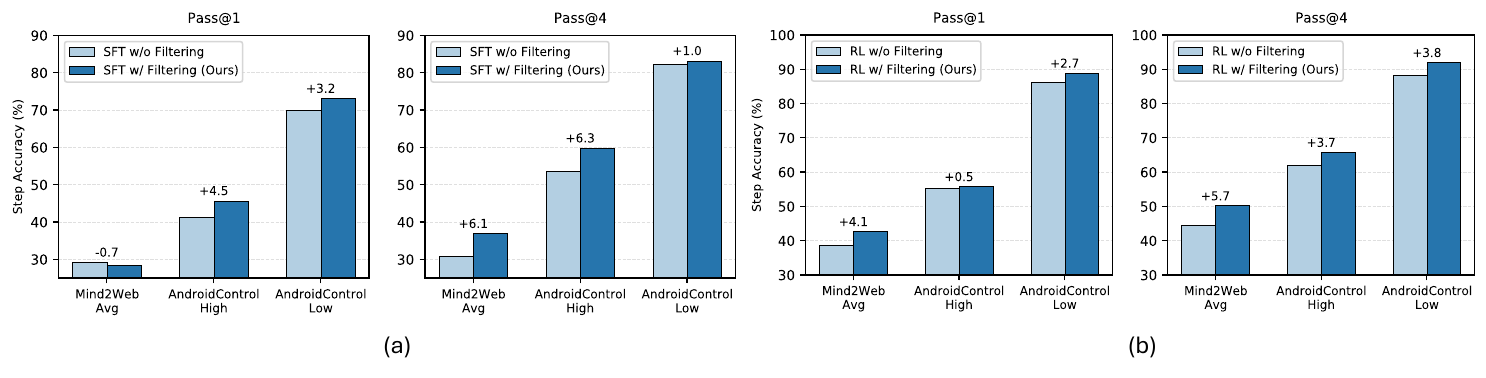}
    \vspace{-20pt}
    \caption{Ablation study of data filtering at the (a) SFT and (b) RL stages. Data filtering consistently improves both Pass@1 and Pass@4 performance across three benchmarks.}
    \label{fig:comparison_of_data_filtering}
\end{figure}

\paragraph{\textbf{Ablation of Data Filtering for SFT and RL.}}
In Sections~\ref{sec:sft_data_filter} and~\ref{sec:rl_data_filter}, we introduce data filtering pipelines for both the SFT and RL stages, where approximately half of the original dataset is retained. Specifically, for SFT, we remove low-quality and ambiguous samples, while for RL, we reduce domain imbalance and early-step bias. Figure~\ref{fig:comparison_of_data_filtering} reports results using a 3B base model. Across most settings, filtering consistently improves performance, with especially large gains in Pass@4. For example, in SFT, filtering improves AndroidControl-High by $+4.5$ Pass@1 and $+6.3$ Pass@4, while in RL it yields an additional $+0.5$ Pass@1 and $+3.7$ Pass@4. These results highlight \textbf{the importance of data quality in both SFT and RL: focusing on a smaller but cleaner and less biased dataset can generalize better than using a larger, noisier corpus.}

\begin{table}[h]
\centering
\caption{Ablations study on ASFT, KL regularization, and reasoning across benchmarks with Qwen2.5-VL-3B as the base model.}
\label{tab:ablation_navigation_3b}
\resizebox{1\linewidth}{!}{
\begin{tabular}{lccccccc}
\toprule
\textbf{Model} 
& \multicolumn{2}{c}{\textbf{MM-Mind2Web-v2}} 
& \multicolumn{2}{c}{\textbf{AC-v2 (High)}} 
& \multicolumn{2}{c}{\textbf{AC-v2 (Low)}}  & \textbf{AndroidWorld}\\
\cmidrule(lr){2-3} \cmidrule(lr){4-5} \cmidrule(lr){6-7}
& \textbf{Pass@1} & \textbf{Pass@4} 
& \textbf{Pass@1} & \textbf{Pass@4} 
& \textbf{Pass@1} & \textbf{Pass@4} \\
\midrule
Base Model & 23.4 & 28.3 & 36.4 & 50.8 & 71.1 & 79.2 & 3.5  \\

SFT
& 28.5 & 36.9
& 45.7 & 59.8
& 73.1 & 83.2 & 5.2 \\

SFT+ Mixed Data
& 30.2 & 42.0
& 45.5 & 64.8
& 72.6 & 85.4 & 11.3 \\

ASFT 
& 32.0 & 41.3
& 44.5 & 64.6
& 75.4 & 86.9 & 13.0 \\

\midrule
GUI-Libra w/o ASFT (KL\_reg=0.0)  &  40.9   & 45.1 &  50.5 & 54.8 & 78.6 & 81.9 &     17.4     \\
GUI-Libra w/o ASFT (KL\_reg=0.001)  & 41.9 & 48.0 & 57.0 & 63.6 &  86.2 & 90.0 & 20.9   \\

\midrule

GUI-Libra (KL\_reg=0.0)
& \textbf{43.8}  & 49.2 & 49.8 & 58.0 &  87.2 & 91.0 & 21.7 \\

GUI-Libra (KL\_reg=0.001)
& 42.7 & 50.2
& \textbf{55.8} & 65.8
& \textbf{89.5} & 91.5 & \textbf{25.2} \\

% GUI-Libra (KL\_reg=0.001) & 42.7 & 50.3 & \textbf{57.3} &\textbf{67.1} & 85.9 & 90.5 & \textbf{25.2} \\

GUI-Libra (KL\_reg=0.01) & 43.4 & 50.6  &  51.5& 64.8 & 85.9& 92.0 & 21.7 \\
GUI-Libra (KL\_reg=0.05) & 41.4 & \textbf{51.1} & 49.8& \textbf{66.1}&  87.9 & \textbf{92.2} & 20.0 \\

\midrule

ASFT w/o CoT & 35.2 & 41.4 & 40.2 & 56.0 &71.4 & 87.7 & 5.2 \\
GUI-Libra (KL\_reg=0.001)  w/o CoT & 43.4 & 47.1 & 48.7 & 55.8 &  85.2 &  87.7  & 12.2 \\
ASFT infer w/o CoT & 37.1 & 43.8  &  42.7 & 58.3 & 75.6& 89.2 & 8.7 \\
GUI-Libra (KL\_reg=0.001) infer w/o CoT & 42.5 & 48.7 & 52.0 & 59.3 & 88.2 & \textbf{92.2}  & 18.3\\

\bottomrule
\end{tabular}
}
\end{table}

\paragraph{\textbf{Ablations on ASFT and RL for Navigation Tasks.}}
In Section~\ref{sec:asft_rl_grounding}, we analyzed the effects of action-aware SFT and RL on mitigating grounding degradation under long CoT outputs. Here, we further examine their impact on navigation performance.
Table~\ref{tab:ablation_navigation_3b} reports results on both offline and online benchmarks using the Qwen2.5-VL-3B backbone.
We observe an improving trend on most benchmarks and metrics as we progress from the base model to SFT, mixed-data SFT, and ASFT. In particular, incorporating mixed supervision and action-aware weighting improves Pass@1 on MM-Mind2Web-v2 from 23.4 to 32.0 and on AndroidControl-v2 (High) from 36.4 to 44.5, while AndroidWorld sucess rate increases from 3.5 to 13.0. These results indicate that all components of ASFT benefit not only grounding accuracy, but also offline navigation and long-horizon online decision making. Beyond ASFT, RL brings further substantial gains. With RL training and moderate KL regularization (e.g., $\mathrm{KL}=0.001$), Pass@1 and Pass@4 on MM-Mind2Web-v2 improve by 10.7 and 8.9 points over ASFT, respectively, and AndroidWorld performance increases markedly from 13.0 to 25.2, highlighting the limitations of supervised fine-tuning alone and the importance of RL for generalization in dynamic environments.

\paragraph{\textbf{Ablations of KL Regularization Coefficient.}}
We further observe that the KL coefficient plays an important role in balancing Pass@1 and Pass@4 performance. As shown in Table~\ref{tab:ablation_navigation_3b}, increasing the KL coefficient generally improves Pass@4 performance, while Pass@1 may drop slightly. This trend is consistent with our observation that stronger KL regularization retains higher policy entropy. 
Within this trade-off, moderate regularization (e.g., $\mathrm{KL}=0.001$) yields strong and stable results across benchmarks, achieving the highest Pass@1 on AndroidControl-v2 and competitive Pass@1 and Pass@4 on MM-Mind2Web-v2, while substantially improving online performance on AndroidWorld. In contrast, overly large penalties (e.g., $\mathrm{KL}=0.05$) or removing KL regularization tend to degrade overall performance, reducing the AndroidWorld success rate to near 20.0. These results further validate that \textbf{moderate KL regularization effectively balances distribution shift and reward ambiguity, whereas excessively large penalties lead to overly conservative policies}.

\paragraph{\textbf{Ablations of Reasoning in Model Training and Inference.}} We analyze the role of reasoning in both training and inference by systematically ablating CoT usage in ASFT and GUI-Libra, as summarized in Table~\ref{tab:ablation_navigation_3b}. We first consider models \textbf{trained without CoT and evaluated without CoT (denoted as \emph{ASFT w/o CoT} and \emph{GUI-Libra w/o CoT})}. Compared to models trained and evaluated with CoT, performance degrades across most benchmarks, with the most pronounced drops on the online benchmark AndroidWorld: GUI-Libra’s success rate decreases from 25.2 to 12.2, and ASFT drops from 13.0 to 5.2. The declines are substantially larger than those observed on offline benchmarks, highlighting the importance of CoT for generalization in dynamic online environments.

Next, we remove CoT \textbf{only at inference time (denoted as \emph{ASFT infer w/o CoT} and \emph{GUI-Libra infer w/o CoT})}, while using checkpoints trained with CoT. Since ASFT incorporates direct-action supervision, these models can be prompted to produce direct actions at inference. Under this setting, ASFT shows improved performance on several offline benchmarks, such as MM-Mind2Web-v2 and AndroidControl-v2 (Low), compared to inference with CoT, but still exhibits a notable drop on AndroidWorld (13.0 $\rightarrow$ 8.7). In contrast, GUI-Libra consistently degrades on both offline and online benchmarks (except AndroidControl-v2 (Low)) when CoT is removed at inference. This suggests that ASFT can benefit from direct-action inference on in-distribution tasks, whereas RL relies more strongly on the joint presence of reasoning traces and actions to leverage their coupling for decision making. Importantly, across all ablations, training with CoT consistently yields better performance than removing CoT during training, even when inference is ultimately performed without CoT. Overall, these results highlight that explicit reasoning during both training and inference is important for effective GUI agent, especially for strong online generalization.

\begin{table}[t]
\centering
\caption{Comparison between GUI-Libra w/ SNGS and w/o SNGS across benchmarks.}
\label{tab:sngs_ablation}
\resizebox{\linewidth}{!}{
\begin{tabular}{l  cc  cc  cc  c  c}
\toprule
\textbf{Model}
& \multicolumn{2}{c}{\textbf{MM-Mind2Web-v2}}
& \multicolumn{2}{c}{\textbf{AC-v2 (High)}}
& \multicolumn{2}{c}{\textbf{AC-v2 (Low)}}
& \textbf{AndroidWorld}
& \textbf{WebArena-Lite-v2} \\
\cmidrule(lr){2-3} \cmidrule(lr){4-5} \cmidrule(lr){6-7}
& \textbf{Pass@1} & \textbf{Pass@4}
& \textbf{Pass@1} & \textbf{Pass@4}
& \textbf{Pass@1} & \textbf{Pass@4}
&
& \\
\midrule
GUI-Libra-4B (w/o SNGS) & 49.1 & 55.1 & 59.8 & 69.9 & 87.7 & 92.0 & 39.1 & 22.2 \\
GUI-Libra-4B (w/ SNGS)  & 50.0 & 55.6 & 62.3 & 68.6 & 86.4 & 93.0 & 42.6 & 24.4 \\
\bottomrule
\end{tabular}
}
\end{table}

\paragraph{\textbf{Ablation of SNGS.}}
Table~\ref{tab:sngs_ablation} examines the effect of SNGS on GUI-Libra-4B. Overall, enabling SNGS consistently improves online generalization, boosting performance on both AndroidWorld and WebArena-Lite-v2. For example, GUI-Libra-4B improves from 39.1$\rightarrow$42.6 (+3.5) on AndroidWorld and from 22.2$\rightarrow$24.4 (+2.2) on WebArena-Lite-v2, respectively. On offline benchmarks, SNGS yields smaller but generally positive gains on reasoning-demanding settings such as AndroidControl-v2 (High) and MM-Mind2Web-v2. These gains come with minor trade-offs on low-level metrics, i.e., AndroidControl-v2 (Low) Pass@1, suggesting that SNGS reduces overfitting to short-horizon action prediction and instead favors generalizable reasoning and more robust online behavior.

\begin{table}[h]
\centering
\caption{Effect of mixing grounding data into RL training on ScreenSpot-v2 (SS-v2), ScreenSpot-Pro (SS-Pro), MM-Mind2Web-v2, and AndroidControl-v2 (AC-v2). 
We report Pass@1 on all benchmarks. 
Green arrows indicate performance gains, and red arrows indicate degradations relative to the corresponding GUI-Libra models.}
\resizebox{1\linewidth}{!}{
\begin{tabular}{l|lllll}
\toprule
 & SS-v2 & SS-Pro & MM-Mind2Web-v2 & AC-v2 (high) & AC-v2 (low) \\
\midrule
Qwen3-VL-4B & 91.7 & 52.8 & 41.2 & 49.3 & 78.9 \\
GUI-Libra-4B & 92.3 & 54.3 & 49.1 & 59.8 & 87.7 \\
GUI-Libra-4B + Mix Grounding 20k 
& 94.6\,\updelta{+2.3}
& 61.4\,\updelta{+7.1}
& 43.9\,\downdelta{-5.2}
& 61.3\,\updelta{+1.5}
& 84.9\,\downdelta{-2.8} \\
\midrule
Qwen3-VL-8B & 92.1 & 52.7 & 43.8 & 54.8 & 77.6 \\
GUI-Libra-8B & 90.7 & 54.1 & 50.3 & 65.6 & 88.7 \\
GUI-Libra-8B + Mix Grounding 20k
& 94.8\,\updelta{+4.1}
& 59.9\,\updelta{+5.8}
& 49.5\,\downdelta{-0.8}
& 61.7\,\downdelta{-3.9}
& 86.4\,\downdelta{-2.3} \\
\bottomrule
\end{tabular}}
\label{tab:mixing_grounding_data_rl}
\end{table}

\subsection{RL with Mixed Navigation and Grounding Data}
Prior work~\citep{yang2025gta1,luo2025guir1,lu2025uir1} shows that adding direct grounding supervision (element descriptions paired with coordinates) during RL can substantially improve visual localization. To study how this supervision affects both grounding and reasoning, we take the grounding dataset from~\citep{yang2025gta1}, downsample 20K examples, and mix it with our 40K navigation-focused RL dataset for joint training. We convert grounding samples into our unified action format by keeping only click actions, and apply the same reward computation as in our RL pipeline. Table~\ref{tab:mixing_grounding_data_rl} reports results on grounding benchmarks (ScreenSpot-v2 and ScreenSpot-Pro) and navigation benchmarks (MM-Mind2Web-v2 and AndroidControl-v2). We observe a clear trade-off. Mixing grounding data consistently improves grounding accuracy on ScreenSpot-v2 and ScreenSpot-Pro by 2--7 points, indicating stronger visual localization. In contrast, performance on navigation benchmarks generally declines, suggesting weaker reasoning and decision making. Overall, these results reveal competing optimization pressures: \textbf{adding direct grounding supervision strengthens spatial alignment, but can reduce performance on reasoning-intensive navigation tasks}.

%% file: tables/offline_benchmarks.tex
\begin{table}[]
\centering
\caption{Step accuracy Performance on AndroidControl-v2.}
\label{tab:androidcontrol_v2_accuracy}
\resizebox{0.76\linewidth}{!}{
\begin{tabular}{lllll}
\toprule
 & \multicolumn{2}{c}{\textbf{High Level}} 
 & \multicolumn{2}{c}{\textbf{Low Level}} \\
\textbf{Model}
 & Pass@1 & Pass@4 
 & Pass@1 & Pass@4 \\
\midrule
\multicolumn{5}{c}{\textbf{Proprietary Models with SeeAct-V Framework}}\\
\midrule
GPT-4o + UGround-v1-7B & 57.0 & 66.3
& 78.4 & 85.4 \\

GPT-4.1 + UGround-v1-7B & 57.5 & 63.3
& 78.4 & 83.2 \\

GPT-5-mini + UGround-v1-7B  & 52.8 & 58.8
& 77.1 & 83.2 \\

GPT-5 + UGround-v1-7B & 61.3 & 69.4
& 86.2 & 90.0 \\

\midrule
\multicolumn{5}{c}{\textbf{Open-source Native Models}}\\
\midrule

 GUI-R1-3B   & 40.0 & 54.0
& 55.8 & 71.9 \\

 GUI-R1-7B & 39.7 & 56.3
& 62.3 & 72.6 \\

Aguvis-7B  & 37.7 & 43.7 
 & 48.0 & 48.7  \\

UI-TARS-1.5-7B
 & 45.2 & 63.1 
 & 48.5 & 70.9 \\

GLM-4.1V-9B-Thinking
 & 37.2 & 49.0 
 & 67.1 & 73.4  \\

Qwen2.5-VL-32B
 & 49.0 & 66.6 
 & 78.4 & 85.4  \\

Qwen2.5-VL-72B 
 & 56.5 & \textbf{72.9} 
 & 82.9 & 90.2 \\

Qwen3-VL-32B & 58.8 & 69.6
& 83.9 & 86.9 \\

\midrule

Qwen2.5-VL-3B (Baseline)
 & 36.4 & 50.8 
 & 71.1 & 79.2  \\

% \rowcolor{beijingblue}
% \textbf{GUI-Libra-3B old (Ours)}
%  & 55.8 {\scriptsize (+19.4)}
%  & 65.8 {\scriptsize (+15.0)}
%  & \textbf{88.9} {\scriptsize (+17.8)}
%  & \textbf{92.0} {\scriptsize (+12.8)} \\

\rowcolor{beijingblue}
\textbf{GUI-Libra-3B (Ours)}
 & 57.3 {\scriptsize (+20.9)}
 & 67.1 {\scriptsize (+16.3)}
 & 85.9 {\scriptsize (+14.8)}
 & 90.5 {\scriptsize (+11.3)} \\
 
Qwen2.5-VL-7B (Baseline)
 & 46.5 & 58.5 
 & 67.8 & 81.7 \\

% \rowcolor{beijingblue}
% \textbf{GUI-Libra-7B old (Ours)}
%  & \underline{60.8} {\scriptsize (+14.3)}
%  & 67.8 {\scriptsize (+9.3)}
%  & 85.9 {\scriptsize (+18.1)}
%  & 90.5 {\scriptsize (+8.8)} \\

\rowcolor{beijingblue}
\textbf{GUI-Libra-7B  (Ours)}
 & 59.3 {\scriptsize (+12.8)}
 & 67.3 {\scriptsize (+8.8)}
 & 85.2 {\scriptsize (+17.4)}
 & 90.7 {\scriptsize (+9.0)} \\

Qwen3-VL-4B (Baseline)  & 49.3 & 63.3
 & 78.9 & 82.4 \\

% \rowcolor{beijingblue}
% \textbf{GUI-Libra-4B old (Ours)}  & 59.8 {\scriptsize (+10.6)}
%  & 69.9 {\scriptsize (+6.5)}
%  & 87.7 {\scriptsize (+8.8)}
%  & \textbf{92.0} {\scriptsize (+9.6)} \\

\rowcolor{beijingblue}
\textbf{GUI-Libra-4B (Ours)} 
 & \underline{62.3} {\scriptsize (+13.0)}
 & 68.6 {\scriptsize (+5.3)}
 & \underline{86.4} {\scriptsize (+7.5)}
 & \textbf{93.0} {\scriptsize (+10.6)} \\
 
Qwen3-VL-8B (Baseline)  & 54.8 & 66.1
 & 77.6 & 83.2 \\

% \rowcolor{beijingblue}
% \textbf{GUI-Libra-8B old (Ours)} 
% & \textbf{65.6} {\scriptsize (+10.8)}
%  & \underline{71.9} {\scriptsize (+5.8)}
%  & \underline{88.7} {\scriptsize (+11.0)}
%  & \underline{91.7} {\scriptsize (+8.5)} \\

 \rowcolor{beijingblue}
\textbf{GUI-Libra-8B (Ours)} 
 & \textbf{64.3} {\scriptsize (+9.5)}
 & \underline{70.6} {\scriptsize (+4.5)}
 & \textbf{88.9} {\scriptsize (+11.3)}
 & \underline{91.7} {\scriptsize (+8.5)} \\

\bottomrule
\end{tabular}}
\end{table}

\begin{table}[]
\centering
\caption{Step accuracy Performance on Multimodal-Mind2Web-v2.}
\label{tab:mind2web_v2_accuracy}
\resizebox{0.95\linewidth}{!}{
\begin{tabular}{lllllllll}
\toprule
& \multicolumn{2}{c}{\textbf{Cross-Task} }
& \multicolumn{2}{c}{\textbf{Cross-Website} }
& \multicolumn{2}{c}{\textbf{Cross-Domain}} & \multicolumn{2}{c}{\textbf{Average}} \\
\textbf{Model}
& Pass@1 & Pass@4  & Pass@1 & Pass@4   & Pass@1 & Pass@4 & Pass@1 & Pass@4   \\
\midrule
\multicolumn{9}{c}{\textbf{Proprietary Models with SeeAct-V Framework}}\\
\midrule
GPT-4o + UGround-v1-7B  & 35.7 & 38.9 & 33.9 & 37.6 & 39.1 & 42.2 & 36.2 & 39.6 \\

GPT-4.1 + UGround-v1-7B  & 41.1 & 44.8
 & 36.2 & 39.7
 & 43.0 & 46.4
 & 40.1 & 43.6 \\

GPT-5-mini + UGround-v1-7B  & 44.2 & 48.0 & 40.4 & 44.2 & 45.8 & 48.1 & 43.5 & 46.7 \\

GPT-5 + UGround-v1-7B  & 47.7 & 51.7 & 45.0 & 47.6 & 48.2 & 51.6 & 47.0 & 50.3 \\

\midrule
\multicolumn{9}{c}{\textbf{Open-source Native Models}}\\
\midrule

GUI-R1-3B 
& 24.0 & 37.7 & 22.3 & 37.7 & 24.6 & 39.1 & 23.6 & 38.2 \\

GUI-R1-7B & 37.0 & 50.1 & 34.1 & 46.3 & 39.6 & 50.5 & 36.9 & 49.0 \\

Aguvis-7B 
& 37.7 & 48.0
& 31.7 & 41.5
& 36.9 & 45.1
& 35.4 & 44.9 \\

UI-TARS-1.5-7B
& 37.2 & 48.0
& 31.3 & 42.9
& 35.6 & 47.2
& 34.7 & 46.0 \\

GLM-4.1V-9B-Thinking
& 26.9 & 32.9
& 23.0 & 29.3
& 28.7 & 35.3
& 26.2 & 32.5 \\

Qwen2.5-VL-32B & 46.2 & 55.8 & 42.6 & \textbf{55.7} & 46.0 & 57.9 & 44.9 & 56.5 \\

Qwen2.5-VL-72B
& 49.1 & \textbf{60.2} & 45.1 & 54.0 & 49.8 & \underline{58.6} & 48.0 & \textbf{57.6} \\

Qwen3-VL-32B
& 48.8 & \underline{57.5}
& 44.3 & \underline{55.2}
& 49.6 & \textbf{58.9}
& 47.6 & \underline{57.2} \\

\midrule

Qwen2.5-VL-3B (Baseline) & 24.4 & 29.0 & 18.6 & 24.6 & 27.1 & 31.2 & 23.4 & 28.3 \\

\rowcolor{beijingblue}
\textbf{GUI-Libra-3B  (Ours)} 
 & 42.7 & 50.8
 & 40.6 & 48.4
 & 44.8 & 51.9
 & 42.7 {\scriptsize (+19.3)} & 50.3 {\scriptsize (+22.0}\\

% \rowcolor{beijingblue}
% \textbf{GUI-Libra-3B old (Ours)}
%  & 43.8 
%  & 50.8 
%  & 39.4
%  & 47.2 
%  & 45.0
%  & 52.8
%  & 42.7 {\scriptsize (+19.3)}
%  & 50.2 {\scriptsize (+21.9)} \\

Qwen2.5-VL-7B (Baseline)
& 31.6 & 45.6 & 30.4 & 42.1 & 35.6 & 48.0 & 32.5 & 45.2 \\

\rowcolor{beijingblue}
\textbf{GUI-Libra-7B (Ours)} 
 & 46.3 & 52.8
 & 45.5 & 52.2
 & 47.6 & 55.7
 & 46.5 {\scriptsize (+14.0)} & 53.6 {\scriptsize (+8.4)}\\

% \rowcolor{beijingblue}
% \textbf{GUI-Libra-7B old (Ours)} 
% & 47.5 
%  & 54.4 
%  & 45.3 
%  & 52.1
%  & 49.1 
%  & 56.7
%  & 47.3 {\scriptsize (+14.8)}
%  & 54.4 {\scriptsize (+9.2)} \\

Qwen3-VL-4B (Baseline)  & 42.9 & 52.2 & 38.8 & 50.0 & 42.0 & 51.7 & 41.2 & 51.3 \\

% \rowcolor{beijingblue}
% \textbf{GUI-Libra-4B old (Ours)} 
%  & 48.4 & 55.4
%  & \textbf{48.0} & 53.4
%  & \underline{50.9} & 56.5
%  & \underline{49.1} {\scriptsize (+7.9)} & 55.1 {\scriptsize (+3.8)} \\

\rowcolor{beijingblue}
\textbf{GUI-Libra-4B (Ours)} 
 & \underline{50.8} & 56.3
 & \textbf{48.4} & 53.2
 & \underline{50.8} & 57.5
 & \underline{50.0} {\scriptsize (+8.8)} & 55.6 {\scriptsize (+4.3)} \\

Qwen3-VL-8B (Baseline) 
 & 44.7 & 54.1 & 41.0 & 51.0 & 45.6 & 53.4 & 43.8 & 52.8 \\

% \rowcolor{beijingblue}
% \textbf{GUI-Libra-8B old (Ours)}
%  & \textbf{51.1} 
%  & 55.9 
%  & \underline{47.9} 
%  & 53.1
%  & \textbf{51.9} 
%  & 56.2 
%  &\textbf{50.3} {\scriptsize (+6.5)}
%  & 55.1 {\scriptsize (+2.3)} \\

 \rowcolor{beijingblue}
\textbf{GUI-Libra-8B (Ours)}  & \textbf{51.2} & 55.3
 & \underline{47.9} & 53.6
 & \textbf{52.4} & 56.7
 & \textbf{50.5} {\scriptsize (+6.7)} & 55.2 {\scriptsize (+2.4)} \\

\bottomrule
\end{tabular}}
\end{table}

%% file: sections/conclusion.tex
\section{Conclusion}
\label{sec:conclusion}

We introduce GUI-Libra, a unified framework for training reasoning-capable native GUI agents based on our curated GUI-Libra-81K dataset. The key takeaway is that competitive long-horizon navigation can be obtained by fully leveraging existing trajectory corpora with a carefully designed post-training pipeline: action-aware SFT preserve grounding performance under long reasoning traces and conservative RL that improves decision making from partially verifiable feedback by controlling policy drift. Across diverse mobile and web benchmarks, GUI-Libra achieves strong offline and online results with favorable data and parameter efficiency, without relying on expensive online interaction during training. Beyond benchmark gains, we analyze the effects of key design choices in ASFT and RL, and show that GUI-Libra makes offline evaluation more reliable and more predictive of online task success, an important property for real-world deployment. We hope our findings and released resources will encourage further work on data-efficient and reliable learning frameworks for interactive GUI agents in real-world settings.

%% file: sections/appendix.tex
\section{Additional Related Works}

\paragraph{Reinforcement Learning from Verifiable Rewards (RLVR)}
To optimize LLMs, learning a reward model as the training signal is a common practice in reinforcement learning from human feedback (RLHF) \citep{ouyang2022training,wang2023aligning,yang2024rewards}. However, reward models are known to suffer from reward hacking and misalignment issues \citep{gao2023scaling,yang2024regularizing}. To address these limitations, recent work has shifted toward \emph{verifiable rewards}, where supervision is derived from ground-truth verifiers, such as exact mathematical answer matching or code execution, rather than from a learned and potentially ambiguous reward model.
\citet{shao2024deepseekmath} introduce GRPO, which enables effective policy optimization from such verifiable signals and demonstrates the emergence of complex reasoning behaviors. Building on this framework, DAPO \citep{yu2025dapo} and Dr.~GRPO \citep{liu2025understanding} propose simple yet effective techniques, such as more aggressive clipping and dynamic sampling, to stabilize training and mitigate learning bias in GRPO. More recently, GSPO \citep{zheng2025group} leverages sequence-level importance sampling to further improve training stability, particularly for mixture-of-experts models. Despite their success, we find that directly applying RLVR recipes to step-wise agent training often leads to suboptimal policies, due to distribution shift and ambiguous intermediate rewards.

\paragraph{Post-training for VLM-based Agents.} VLMs have demonstrated strong capabilities in visual perception and multimodal reasoning\citep{openai2024gpt4ocard,bai2025qwen2,bai2025qwen3vltechnicalreport,zheng2024seeact}. However, deploying them as agents in visually grounded environments requires moving beyond static understanding toward robust, long-horizon decision-making. To bridge the gap, recent research adopts a two-stage SFT-then-RL paradigm\citep{chen2025eratransformingvlmsembodied,zhai2024finetuninglargevisionlanguagemodels,zhan2025visual}. In the first stage, SFT equips VLMs with essential agentic skills, including visual grounding, structured reasoning, and action prediction through curated dataset\citep{hong2024cogagentvisuallanguagemodel,cheng2024seeclickharnessingguigrounding,wu2024osatlasfoundationactionmodel,lin2024showuivisionlanguageactionmodelgui,qin2025uitarspioneeringautomatedgui,xu2025agenttrekagenttrajectorysynthesis}. Nevertheless, SFT is inherently limited by the coverage and diversity of demonstrations, therefore prone to compounding errors when encountering out-of-distribution states\citep{chen2025eratransformingvlmsembodied,liu2026rlbringvlageneralization,deng2025openvlthinkercomplexvisionlanguagereasoning}. RL complements SFT by enabling agents to interact directly with environments. Through exploration, agents can learn from both successes and failures, gradually developing capabilities such as error recovery, self-correction, and long-horizon planning \citep{bai2024digirltraininginthewilddevicecontrol,qi2025webrltrainingllmweb,putta2024agentqadvancedreasoning,feng2025groupingrouppolicyoptimizationllm,wang2025vagenreinforcingworldmodel}. Under this two-stage paradigm, SFT first establishes a stable foundation of core skills, after which RL enhances long-horizon decision-making via environment interaction and policy optimization.

\section{Implementation Details}\label{ap:implementation_details}

\paragraph{\textbf{Action Space.}}
We model GUI interaction with a unified action space where each step outputs a structured tuple
\texttt{(action\_type, action\_target, value, point\_2d)}.
Here, \texttt{action\_type} specifies the operation (e.g., \texttt{Click}, \texttt{Write}, \texttt{Scroll}),
\texttt{action\_target} describes the target UI element when applicable, and \texttt{value} provides
additional arguments such as input text, scroll/swipe direction, key name, waiting time, or app name.
For actions that require spatial grounding (e.g., \texttt{Click}, \texttt{LongPress}, and optionally \texttt{Swipe} or \texttt{Write}),
\texttt{point\_2d} records the screen coordinate \texttt{[x,y]}; otherwise it is set to \texttt{None}.
This unified schema supports both web and mobile environments, including device-level controls
(e.g., \texttt{NavigateBack}, \texttt{NavigateHome}, \texttt{OpenApp}) and a terminal action \texttt{Terminate}. See Table~\ref{tab:action_space_point2d} for the full specification.
Following the base model's coordinate system, we use absolute pixel coordinates for Qwen2.5-VL-based models,
and normalized coordinates in $[0,1000]$ for Qwen3-VL-based models.

\begin{table}[ht]
\centering
\caption{Unified action space. Each action is a tuple \texttt{(action\_type, action\_target, value, point\_2d)}.}
\label{tab:action_space_point2d}
\small
\setlength{\tabcolsep}{4pt}
\renewcommand{\arraystretch}{1.15}

\resizebox{\linewidth}{!}{%
\begin{tabular}{%
l
p{0.25\linewidth}
p{0.2\linewidth}
p{0.13\linewidth}
p{0.4\linewidth}} 
\toprule
\textbf{action\_type} & \textbf{action\_target} & \textbf{value} & \textbf{point\_2d} & \textbf{details} \\
\midrule
Answer & None & answer text & \texttt{[-100,-100]} & Return the final answer to the user's question. \\
\hline
Click & element description & None & \texttt{[x,y]} & Tap/click a specific UI element and provide its coordinates. \\

\hline
Select & element description & option value & \texttt{[-100,-100]} & Select an item in a list or dropdown menu. \\

\hline
LongPress & element description & None & \texttt{[x,y]} & Press-and-hold on a UI element (mobile only) and provide its coordinates. \\
\hline
Write & element description or None & input text & \texttt{[x,y]} or \texttt{[-100,-100]} &
Enter text into a specific input field; if \texttt{point\_2d} is \texttt{[-100,-100]}, type at the current focus. \\
\hline
KeyboardPress & None & key name (e.g., \texttt{enter}) & \texttt{[-100,-100]} & Press a specific key on the keyboard. \\
\hline
Scroll & None & direction (\texttt{up}/\texttt{down}/\texttt{left}/\texttt{right}) & \texttt{[-100,-100]} & Scroll a view/container in the specified direction. \\
\hline
Swipe & element description or None & direction (\texttt{up}/\texttt{down}/\texttt{left}/\texttt{right}) & \texttt{[x,y]} or \texttt{[-100,-100]} &
Perform a swipe gesture on a touchscreen in the given direction; provide coordinates if applicable. \\
\hline
Wait & None & seconds & \texttt{[-100,-100]} & Pause execution for a specified duration to allow UI updates. \\
\hline
NavigateHome & None & None & \texttt{[-100,-100]} & Navigate to the device's home screen. \\
\hline
NavigateBack & None & None & \texttt{[-100,-100]} & Press the system ``Back'' button. \\
\hline
OpenApp & None & app name & \texttt{[-100,-100]} & Launch an app by its name (mobile only). \\
\hline
Terminate & None & end-task message & \texttt{[-100,-100]} & Signal the end of the current task with a final message. \\
\bottomrule
\end{tabular}%
}
\end{table}

\paragraph{\textbf{SFT.}} We summarize our shared SFT and Action-aware SFT implementation parameters in Table \ref{tab:sft_config}. We apply full parameter tuning on Qwen2.5-VL and Qwen3-VL base models from 3B,4B, to 7B and 8B. We use a learning rate of  $1\times10^{-5}$  and an effective batch size of 256. We train SFT and ASFT models for either two epochs on GUI-Libra-81K or 1 epoch on mixing reasoning and direct-action data (it doubles data size). For action-aware SFT, we by default use $\alpha_a=2$ and $\alpha_g=4$ for ASFT, except for GUI-Libra-4B, where $\alpha_a=\alpha_g=1$. We use 8 B200 GPUs for approximately 4 hours for Qwen3-VL-4B and 5.5 hours for Qwen3-VL-8B.

\begin{table}[h]
\centering
\small
\setlength{\tabcolsep}{6pt}
\renewcommand{\arraystretch}{1.15}
\begin{tabular}{l c}
\toprule
\textbf{Category} & \textbf{Configuration} \\
\midrule
Backbone & Qwen2.5-VL / Qwen3-VL \\
Training Strategy & Full-parameter fine-tuning \\
Epochs & 1 \\
Learning Rate & $1\times10^{-5}$ \\
Scheduler & Cosine \\
Warmup Ratio & 0.01 \\
Weight Decay & 0 \\
Per-device Batch Size & 4 \\
Gradient Accumulation Steps & 8 \\
Effective Batch Size & 256 (8 GPUs) \\
Gradient Checkpointing & Enabled \\
\bottomrule
\end{tabular}
\caption{SFT configuration used in our experiments.}
\label{tab:sft_config}
\end{table}

\paragraph{\textbf{RL.}}
We adopt GRPO~\citep{shao2024deepseekmath} as our RL algorithm, implemented with the \texttt{verl} framework\footnote{\url{https://github.com/verl-project/verl}} and EasyR1\footnote{\url{https://github.com/hiyouga/EasyR1}}. GUI-Libra is initialized from the SFT/ASFT checkpoints and further optimized via online rollouts. At each iteration, the model samples trajectories, computes step-wise rewards, and updates the policy using the GRPO objective. We train for 300 RL iterations with a learning rate of $1\times10^{-6}$, global batch size 128, and rollout group size $n=8$. We set the KL regularization coefficient to 0.001 by default and increase it to 0.005 for GUI-Libra-7B to improve training stability. For SNGS, we use model-specific $(\lambda_0,\kappa)$: $(0.9,0.5)$ for 3B, $(1.4,-0.5)$ for 7B, $(0.5,1.5)$ for 4B, and $(0.5,2.0)$ for 8B. Training is conducted on 8 NVIDIA B200 GPUs and takes approximately 16 hours for Qwen3-VL-4B and 20 hours for Qwen3-VL-8B.

\begin{table}[h]
\centering
\small
\setlength{\tabcolsep}{6pt}
\renewcommand{\arraystretch}{1.15}
\begin{tabular}{l l}
\toprule
\textbf{Category} & \textbf{Configuration} \\
\midrule
RL Training Framework &  VERL  \\
Distributed Training Backend & FSDP \\
Inference Engine & vLLM \\
Backbone & Qwen2.5-VL, Qwen3-VL \\

\midrule

\multicolumn{2}{l}{\textit{Rollout}} \\
Rollout Samples per Prompt & 8 \\
Rollout Batch Size & 256 \\
Sampling Strategy & Top-$p$ 0.98 \\
Temperature & 1.0 \\
Max Prompt Length & 8092 \\
Max Response Length & 1500 \\
\midrule

\multicolumn{2}{l}{\textit{Optimization}} \\
Training Iterations & 300 \\
Learning Rate & $1\times 10^{-6}$ \\
Optimizer & AdamW (bf16) \\
Global Batch Size & 128 \\
Micro Batch (Update) & 4 \\
Micro Batch (Experience) & 8 \\
Clip Ratio ($\epsilon$) & 0.2 \\
KL Coefficient ($\beta$) & 0.001 by default, and 0.005 for GUI-Libra-7B  \\
\midrule

\multicolumn{2}{l}{\textit{Algorithm}} \\
Advantage Estimator & GRPO, GRPO w/ SNGS \\
Reward Function & $\tilde r(s,a)
\;=\;
w_{\text{fmt}}\, r_{\text{fmt}}
\;+\;
(1 - w_{\text{fmt}})\, r_{\text{acc}}, w_{\text{fmt}}=0.1$ \\
\bottomrule
\end{tabular}
\caption{RL configuration for our experiments.}
\label{tab:rl_config}
\end{table}

\paragraph{\textbf{Evaluation.}} For inference, we use vLLM as the serving backend. We set the temperature to 0.0 and top-$p$ to 1.0, and allow up to 1024 completion tokens by default. The system prompt follows Appendix~\ref{ap:prompt_template}. The available action list within the system prompt can be adjusted depending on the deployment environment. For example, if a mobile environment does not support the \texttt{OpenAPP} action, it can be removed from the action list; in this case, the model typically resorts to alternative strategies such as scrolling to access the app drawer.

For models trained with mixed direct-action data, we optionally use an explicit instruction prompt to elicit direct action generation without intermediate reasoning. Specifically, we can append the following format instruction after user instruction:

\begin{verbatim}
The response should be structured in the following format. 
Make sure the output between <answer> and </answer> is a valid JSON object. 
Regarding the key "point_2d", please provide the coordinates on the screen where the action 
is to be performed; if not applicable, use [-100, -100]:
<answer>
{ 
    "action\_description": "the description of the action to perform, summarized in one sentence",
    "action\_type": "the type of action to perform. Please follow the system prompt for 
                    available actions.",
    "value": "the input text or direction ('up', 'down', 'left', 'right') for the 'scroll' action 
            or the app name for the 'openapp' action; otherwise, use 'None'", 
    "point\_2d": [x, y] 
}
</answer> 
\end{verbatim}

\section{Benchmark Details}\label{ap:benchmark_details}
In our experiments, we adopt a diverse of benchmarks for evaluation, mainly focus on GUI navigation benchmarks that measures step-wise success or task-level completion. We also use grounding benchmarks to evaluate the correctness of grounding after long CoT generation. Details about these benchmarks are as follows.

\subsection{Grounding Benchmarks}
We adopt ScreenSpot-V2 \citep{cheng2024seeclick,wu2025osatlas} and ScreenSpot-Pro \citep{li2025screenspot} to evaluate grounding accuracy. Each task provides a short instruction specifying the target element or intent, together with a screenshot from a digital interface (mobile, desktop, or web). The ground-truth target is given as a bounding box, and we measure success by whether the model’s predicted click coordinate falls inside the box. ScreenSpot-V2 corrects labeling errors in the original ScreenSpot benchmark and contains 1,269 tasks, with most screenshots below 2560$\times$1440 resolution. In contrast, ScreenSpot-Pro includes 1,555 tasks and features substantially higher-resolution screenshots (up to 5120$\times$2880), resulting in denser visual content and more fine-grained targets. Overall, ScreenSpot-V2 reflects common UI settings, while ScreenSpot-Pro stresses precise grounding in high-resolution, information-rich interfaces.

\subsection{Offline GUI Navigation Benchmarks} \label{ap:offline_benchmark}
\paragraph{\textbf{Multimodal-Mind2Web-v2}} We build our benchmark on \textit{Multimodal-Mind2Web} (MM-Mind2Web) \citep{zheng2024seeact}, the multimodal extension of Mind2Web \citep{deng2023mindweb}, to evaluate offline web navigation on realistic user tasks. MM-Mind2Web aligns each step in a human demonstration with a webpage screenshot (and the corresponding HTML/DOM state), forming a golden multi-step trajectory conditioned on a high-level natural-language instruction \citep{deng2023mindweb,zheng2024seeact}. The test split spans 100+ websites; all webpages along the golden trajectories are cached to support fully offline evaluation, and tasks are crowdsourced to reflect real user intents. 
As shown in Figure~\ref{fig:limitations_of_MMM2Web_AC}, MM-Mind2Web represents action history as symbolic records that are neither natural-language descriptions nor aligned with real user interaction. To address this, we use Qwen3-VL-32B-Instruct to rewrite each action into a natural-language description and use these descriptions as the history context. The resulting dataset, \textit{Multimodal-Mind2Web-v2} (MM-Mind2Web-v2), contains three subsets, Cross-Task, Cross-Website, and Cross-Domain, with 1,328, 1,019, and 1,002 samples, respectively. 
We report \textbf{step success rate} as the primary metric, which requires both correct target grounding (\emph{element accuracy}) and correct operation execution. Operation correctness is measured by an exact-match F1 score (F1$=1$) over the serialized action string ``\texttt{ActionType Value}'', where \texttt{Value} can be the typed text for \texttt{Write} actions or the app/website identifier for \texttt{OpenApp} actions.

\paragraph{\textbf{AndroidControl-v2}} \textit{AndroidControl-v2} is based on AndroidControl \citep{li2024androiedcontrol}, an offline Android GUI navigation benchmark that pairs step-wise instructions with mobile screenshots and demonstrated actions. However, AndroidControl contains non-trivial annotation noise (about 20\% errors in action types and/or coordinates), as illustrated in Figure~\ref{fig:limitations_of_MMM2Web_AC}. To improve evaluation reliability, we use Qwen3-VL-32B-Instruct to filter mismatched samples by checking the consistency between each demonstrated action and its oracle low-level step instruction. We start from a sampled set of 500 examples from UGround \citep{gou2025uground} and obtain a cleaned subset of \textbf{398 examples} after filtering. We evaluate under both high-level and low-level instructions and report step accuracy, where a step is counted as successful only if the predicted action type, textual value (when applicable), and target coordinates are all correct. \textbf{Note that prior work may use inconsistent evaluation protocols for this benchmark.} For example, OS-Atlas \citep{wu2025osatlas} and GUI-R1 \citep{luo2025guir1} treat grounding as a distance-to-target threshold, which can be misleading in practice because UI elements vary greatly in size (so a fixed threshold is not comparable across screens). Instead, we follow UGround' strategy \citep{gou2025uground}: we use the accessibility tree to map the predicted coordinate to its nearest UI element, and then match that element against the ground-truth target.

\subsection{Online GUI Navigation Benchmarks}

\paragraph{\textbf{AndroidWorld}} We evaluate online mobile agent performance on \textit{AndroidWorld} \citep{rawles2025androidworld}, which runs interactive tasks in Android emulators and scores agents by whether they reach the correct final device states. AndroidWorld contains 116 tasks across 20 real-world apps and covers diverse multi-step workflows (e.g., search, form filling, and settings changes) under realistic UI dynamics. With the official Docker environment, we found that Task \#82 (\textsc{SimpleSmsReplyMostRecent}) cannot be initialized, so we report results on the remaining 115 tasks.
Evaluation uses the benchmark’s rule-based completion checker with a maximum horizon of 20 steps. To assess self-verification, we count a task as successful only when (i) the agent explicitly outputs a \texttt{Terminate} action and (ii) the environment state satisfies the completion rules. Our agent follows the See-Act-V framework following UGround \citep{gou2025uground}, but \textbf{we remove the step-wise reflection and summary module. This design choice isolates the \emph{native} capability of the underlying model, rather than relying on a hand-crafted control structure.} For completeness, we also report baseline results with the summary modules, and show that our native model (without such modules) can surpass agents that depend on these additional components, highlighting the potential of our recipe to reduce human-designed scaffolding.

\paragraph{\textbf{WebArena-Lite-v2}} We evaluate online web agent performance on \textit{WebArena-Lite-v2}~\citep{liu2025scalecua}, a locally deployed website environments upgraded from \textit{WebArena-Lite}~\citep{liu2025visualagentbench} consisting of 154 tasks. To setup the website environment, we follow the instruction by Maxime Gasse\footnote{\url{https://github.com/gasse/webarena-setup/tree/main/webarena}} for a more stable Map setup\footnote{We found the container for Map website is brittle when following the official WebArena-Lite-v2 setup: \url{https://github.com/OpenGVLab/ScaleCUA/tree/main/evaluation/WebArenaLiteV2}.}. Building upon the official WebArena-Lite-v2 implementation, we further improve the robustness of action parsing and execution by: (i) parsing the "response" action as "answer" rather than treating it as illegal; (ii) appending an automated "terminate" action after "action" and "response"; (iii) supporting multi-line answer strings (separated by \texttt{\textbackslash n}) instead of only the first line; (iv) clearing blank content before typing predicted messages. Moreover, we replace \texttt{gpt-4o-2024-11-20} with \texttt{gpt-5} for more accurate LLM-based fuzzy evaluation as we observed false positives when using GPT-4o. Given the high variance in results, we report the average across four runs for all experiments, with the temperature set to 0.0 and top\_p to 1.0.

\paragraph{\textbf{Online-Mind2Web}} We also include Online-Mind2Web benchmark~\citep{online-mind2web} spanning 136 live real-world websites and covering 300 web agent tasks. We set the maximum interaction steps to 30 for each task. We adopt the proposed WebJudge method backed by either o4-mini~\footnote{\url{https://developers.openai.com/api/docs/models/o4-mini}} or WebJudge-7B~\footnote{\url{https://huggingface.co/osunlp/WebJudge-7B}} models. The LLM-based judge first identifies key points of the task, then selects task-relevant key screenshots from each step.
Finally, the judge is provided with task description, agent textual actions, task completion key points and selected key screenshots to make a binary outcome judgment indicating whether all key points are satisfied.

\section{Additional Results}\label{app:additional_exp}
This section presents additional results to further clarify our approach. We include an auxiliary study on grounding as a single-step verifiable case in Appendix~\ref{ap:grounding_correlation} to contrast with multi-step navigation under partial verifiability, a controlled comparison of reasoning-augmentation models in Appendix~\ref{ap:reasoning_model_comparison}, as well as complementary offline metrics (grounding accuracy, action-type accuracy or operation F1) in Appendix~\ref{ap:addtional_metrics}.

\subsection{Grounding as a Single-step Verifiable Setting}
\label{ap:grounding_correlation}
Grounding provides a near-ideal \emph{single-step verifiable} setting: each example typically refers to a specific UI element, and we can directly verify correctness by checking whether the predicted coordinate falls inside the annotated bounding box. Since the agent produces only a single action, this setting matches the assumptions in Corollary~\ref{cor:rlvr_easy}. To study this regime, we train Qwen3-VL-4B and Qwen3-VL-8B with GRPO on a 40K downsampled grounding dataset from GTA1~\citep{yang2025gta1}. Results are shown in Figure~\ref{fig:grounding_correlation}.

Unlike navigation, grounding does not exhibit an offline--online evaluation gap; therefore, we assess \emph{predictability} using two grounding benchmarks instead: ScreenSpot-v2 and ScreenSpot-Pro. ScreenSpot-v2 is closer to our training distribution (similar image resolution), while ScreenSpot-Pro contains substantially higher-resolution screenshots, making it a useful test of distribution shift. Figures~\ref{fig:grounding_correlation}(a)--(b) show that performance on both benchmarks improves steadily and then plateaus after roughly 200 RL steps, without the significant reward-hacking-style drops observed in multi-step navigation. Moreover, the two benchmarks are strongly correlated: Figure~\ref{fig:grounding_correlation}(c) shows a tight relationship between ScreenSpot-v2 and ScreenSpot-Pro scores, and Figure~\ref{fig:grounding_correlation}(d) reports very high Pearson and Spearman correlations. Interestingly, removing KL regularization yields even higher correlation (Pearson $\approx 0.98$), consistent with our analysis that in single-step, verifiable tasks, RLVR-style training can remain stable and highly predictable even without KL regularization.

\begin{figure}
    \centering
    \includegraphics[width=1\linewidth,trim=0 40 0 40, clip ]{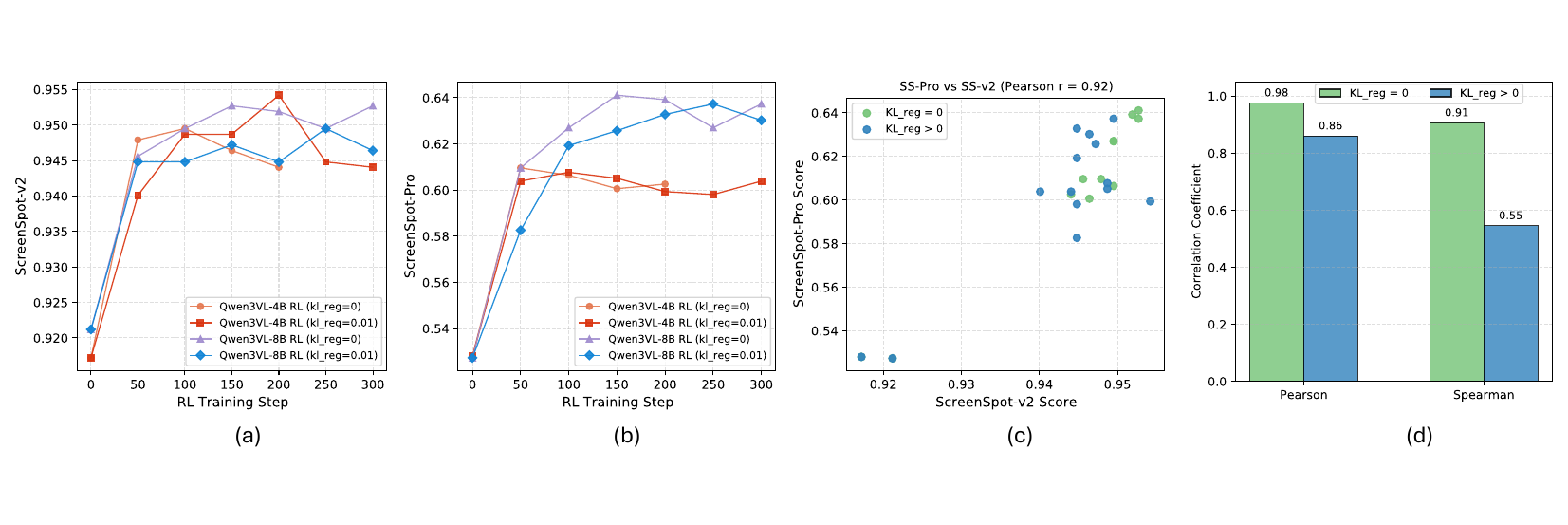}
    \caption{RL for grounding exhibits stable improvements and strong cross-benchmark predictability. (a) ScreenSpot-V2 and (b) ScreenSpot-Pro performance over RL training. (c) Correlation between the two benchmark scores across checkpoints. (d) Pearson and Spearman correlations with and without KL regularization.}
    \label{fig:grounding_correlation}
\end{figure}

\begin{figure}[ht]
    \centering
    \includegraphics[width=0.45\linewidth]{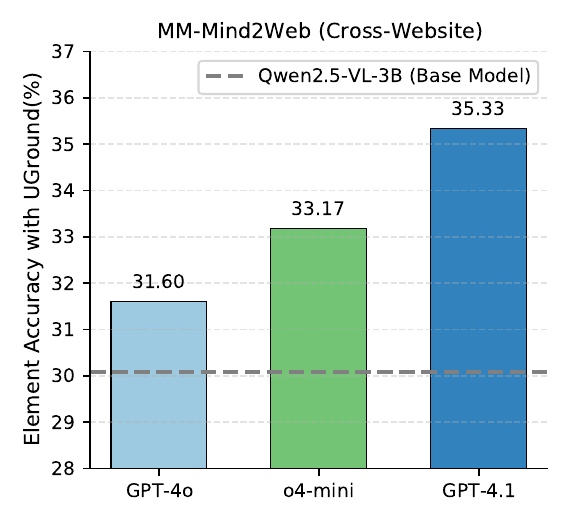}
   \caption{Performance comparison of different models for reasoning generation. All models are used to augment the same 30K web samples from AGUVIS and are fine-tuned on the same 3B base model. This controlled evaluation reveals substantial performance differences among generator models.}
    \label{fig:dataset_comparison}
\end{figure}

\subsection{Comparing Different Models for Reasoning Augmentation}\label{ap:reasoning_model_comparison}
In our reasoning augmentation pipeline, we use GPT-4.1 to generate reasoning traces. This choice is motivated by preliminary experiments showing that GPT-4.1 can be prompted to produce richer, more informative rationales for VLM training. With the same prompt, GPT-4o often generates shorter reasoning, while reasoning models such as o4-mini typically provide limited visible traces and hide their true reasoning traces, resulting in similarly short rationales. In contrast, GPT-4.1 more reliably produces detailed reasoning that provide more useful thought and better aligns with the actions.

We further quantify this effect with a controlled comparison. Using the same 30K web samples from AGUVIS, we generate reasoning traces with each model under an identical prompt and fine-tune the same Qwen2.5-VL-3B-Instruct base model on the resulting augmented data. To isolate the impact of reasoning quality, we extract action targets from the model outputs and use the same UGround-7B-v1 \citep{gou2025uground} model for coordinate prediction across all settings. Figure~\ref{fig:dataset_comparison} reports results on the original MM-Mind2Web Cross-Website subset: GPT-4.1 yields the best performance, outperforming GPT-4o by $+3.7$ and o4-mini by $+2.1$. These results indicate that the choice of reasoning generator is an important factor in effective reasoning augmentation.

\subsection{Comparing with Uniform Negative Gradient Scaling Strategy}
In our method, we use an adaptive negative gradient scaling method to enable adaptive scaling behaviors. Table~\ref{tab:ablation_uniform_scaling} compares our adaptive negative gradient scaling strategy with a uniform variant that applies a constant scaling factor $\lambda_g$ across all states. Uniform scaling yields weaker overall performance: both $\lambda_g=0.75$ and $\lambda_g=0.9$ perform worse on MM-Mind2Web-v2, AC-v2 (High/Low), and AndroidWorld.
These results suggest that a single global scaling factor cannot capture the heterogeneous difficulty and ambiguity across states and can lead to suboptimal optimization, whereas adaptive scaling provides the flexibility needed to stabilize training and improve decision making.

\begin{table}[h]
\centering
\caption{Ablations study of negative scaling strategy on 3B base model.}
\label{tab:ablation_uniform_scaling}
\resizebox{0.75\linewidth}{!}{
\begin{tabular}{lccccccc}
\toprule
\textbf{Model} 
& \multicolumn{2}{c}{\textbf{MM-Mind2Web-v2}} 
& \multicolumn{2}{c}{\textbf{AC-v2 (High)}} 
& \multicolumn{2}{c}{\textbf{AC-v2 (Low)}}  & \textbf{AndroidWorld} \\
\cmidrule(lr){2-3} \cmidrule(lr){4-5} \cmidrule(lr){6-7}
& \textbf{Pass@1} & \textbf{Pass@4} 
& \textbf{Pass@1} & \textbf{Pass@4} 
& \textbf{Pass@1} & \textbf{Pass@4} \\
\midrule

GUI-Libra ($\lambda_g=0.75$) & 42.2 & 48.5 & 54.5& 64.1& 86.7& 90.2 & 19.1\\
GUI-Libra ($\lambda_g=0.9$)
& \textbf{42.7}  & 48.1  & 52.3& 61.1 &  87.4 & 90.7 & 20.0\\
GUI-Libra 
& \textbf{42.7} & \textbf{50.2}
& \textbf{55.8} & \textbf{65.8}
& \textbf{89.5} & \textbf{91.5} & \textbf{25.2} \\

\bottomrule
\end{tabular}
}
\end{table}

\subsection{Additional Metrics on Offline Benchmarks}\label{ap:addtional_metrics}
In our main offline evaluations, we use \emph{step accuracy} as the primary metric. To better understand \emph{where} the gains come from, we also report decomposed metrics that separate grounding from action prediction. Specifically, we report (i) \textbf{grounding accuracy} (whether the predicted coordinate lies inside the target element) and \textbf{action-type accuracy} on AndroidControl-v2, and (ii) \textbf{grounding accuracy} and \textbf{operation F1} on MM-Mind2Web-v2. Operation F1 is computed following the protocol in Appendix~\ref{ap:offline_benchmark}.

\paragraph{AndroidControl-v2.}
Tables~\ref{tab:AC_grounding_accuracy} and~\ref{tab:ac_type_accuracy} show that GUI-Libra remains strong under these more fine-grained metrics. Notably, GUI-Libra-8B achieves the best Pass@1 grounding accuracy on both high-level and low-level tasks, reaching 76.3 and 95.5 Pass@1 on the high-level and low-level settings, respectively. It also outperforms GPT-5 + UGround-7B-v1 and larger open-weight models (e.g., 32B and 72B). The overall trend for action-type accuracy is similar, but the gaps to strong baselines are smaller, suggesting that our improvements are driven primarily by better grounding rather than action-type prediction. This is expected: action types are often inferred reliably from language and prior knowledge, whereas accurate grounding, especially for high-level, goal-directed tasks, is more challenging for GUI tasks.

\paragraph{MM-Mind2Web-v2.}
Tables~\ref{tab:mind2web_grounding} and~\ref{tab:mind2web_operation_f1} report grounding accuracy and operation F1, respectively. For grounding, GUI-Libra-8B achieves the highest Pass@1 across all three subsets, with an average Pass@1 of 57.8, surpassing strong baselines including Qwen3-VL-32B, Qwen2.5-VL-72B, and GPT-5 with UGround. For Pass@4, Qwen2.5-VL-72B is 1.4 points higher than GUI-Libra-8B, indicating that web-domain evaluation can still benefit from additional model capacity and/or web-specific training data. We attribute this gap mainly to domain imbalance in our training set: roughly 85\% of our SFT data comes from mobile, with only 15\% from web. We expect that scaling high-quality web data would further improve Pass@4, consistent with our strong Pass@1/Pass@4 results on AndroidControl-v2. For operation F1, we report \texttt{Best@N} (the maximum over $N\!\in\!\{1,4\}$ samples), since F1 is a continuous score in $[0,1]$. While larger models achieve strong \texttt{Best@4} results, GUI-Libra-7B attains the best \texttt{Best@1} performance (85.1), and GUI-Libra-3B remains competitive with 32B--72B baselines.

Overall, these additional metrics show that GUI-Libra improves not only step accuracy but also its underlying components, providing a more complete view of the sources of improvement.

\input{sections/additional_metrics}

\input{sections/appendix-proof}

\section{Prompt Templates}\label{ap:prompt_template}

\subsection{Prompt for Reasoning Augmentation}
We use the following prompts with GPT-4.1 to generate reasoning traces, which form our initial reasoning dataset. We draw inspiration from AGUVIS \citep{xu2025aguvis} and further refine and adapt the prompt through iterative trial-and-error during our data generation process. 

\begin{promptbox}{Prompt Template for Web Data}
\noindent Please generate detailed reasoning and explain your logic according to the UI screenshot, instruction, interaction history and the reference action.

\vspace{1em}
\noindent \textbf{\#\# Instruction: \{\}} \\
\textbf{\#\# Interaction History: \{\}} \\
\textbf{\#\# Reference Action Description for Current Step: \{\}} \\
\textbf{\#\# Reference Action Command for Current Step: \{\}}

\vspace{1em}
\noindent Note: If there exists a referenced element, it is highlighted by a small red hollow circle in the screenshot.

\vspace{1em}
\noindent \textbf{\#\# Guidelines}
\begin{itemize}
    \item Carefully observe and interpret the screenshot to extract any relevant information that can inform your reasoning.
    \item Reason step by step to accomplish the instruction, aligning your thought process with the overall goal.
    \item Use the reference action as a guide, but do not simply repeat it. Instead, explain why you would take that action (if it is reasonable) without referring to it.
    \item Do not rely on the current reference action or the highlighted hollow circle as justification—they are outcomes of hindsight.
    \item Use a first-person perspective to express your reasoning, as if you are the one navigating and making decisions.
    \item If the question is primarily in a language other than English, please reason in English but provide the final answer in the corresponding language.
    \item The reference action may involve both clicking and writing simultaneously. In such cases, describe the action as: first click the target element, then type the desired text. You should specify the action type as Write and indicate the target element where the text should be entered in the json.
\end{itemize}

\vspace{1em}
\noindent \textbf{\#\# Response Format:} \\
Return exactly one \texttt{<think></think> + <answer></answer>} pair for one step, with the following structured format:

\vspace{1em}
\noindent \texttt{<think>} \\
Explain your reasoning in detail, step by step. \\
$\bullet$ State what you observe and extract useful information from the screenshot. Keep important information in the thought because you may need to refer to them later. \\
$\bullet$ Reflect on the instruction and interaction history to understand the context and current status. \\
$\bullet$ Decide what action to take next and why, following the guidelines. Do not summarize—show your full chain of thought. \\
\texttt{</think>} \\
\vspace{1em}
\noindent \texttt{<answer>} \\
\{ \\
\hspace*{1em} "action\_description": "Describe the current step action with a clear instruction", \\
\hspace*{1em} "action\_type": "The type of action to take, e.g., Click, Select, Write, KeyBoardPress, Scroll, Terminate or Answer", \\
\hspace*{1em} "action\_target": "Provide a description of the element you want to operate on the screen. (If action\_type == Scroll and Answer, this field should be 'None'.) It should include the element's identity, type (button, input field, dropdown menu, tab, etc.), text on it (if have), and location hints if needed to disambiguate. If you want to write text into an input field, specify the input field's identity and necessary information. If you already clicked the input field in last step, you can leave this field as 'None'.", \\
\hspace*{1em} "value": "Specify the input or selected value. For Write, Select, and Answer actions, provide the exact text or option. For KeyboardPress, specify the key(s). For Scroll, indicate the direction as in the reference action command. For other action types, use 'None'." \\
\} \\
\texttt{</answer>}
\end{promptbox}

\begin{promptbox}{Prompt Template for Mobile Data}
\noindent Please generate detailed reasoning and explain your logic according to the UI screenshot, instruction, interaction history and the reference action.

\vspace{1em}
\noindent \textbf{\#\# Instruction:} \\
\{\}

\vspace{1em}
\noindent \textbf{\#\# Interaction History:} \\
\{\}

\vspace{1em}
\noindent \textbf{\#\# Reference Action Description for Current Step (labeled by human):} \{\} \\
\textbf{\#\# Reference Action Command for Current Step (labeled by human):} \{\} \\
Note: If there exists a referenced element, it is highlighted by a small red hollow circle in the screenshot.

\vspace{1em}
\noindent \textbf{\#\# Guidelines}
\begin{itemize}
    \item Carefully observe and interpret the screenshot to extract any relevant information that can inform your reasoning.
    \item Reason step by step to accomplish the instruction, aligning your thought process with the overall goal.
    \item Use the reference action as a guide, but do not simply repeat it. Instead, explain why you would take that action (if it is reasonable) without referring to it.
    \item Do not rely on the current reference action or the highlighted hollow circle as justification—they are outcomes of hindsight.
    \item Use a first-person perspective to express your reasoning, as if you are the one navigating and making decisions.
    \item If the question is primarily in a language other than English, please reason in English but provide the final answer in the corresponding language.
    \item The reference action may involve both clicking and writing simultaneously. In such cases, describe the action as: first click the target element, then type the desired text. You should specify the action type as Write and indicate the target element where the text should be entered in the json.
\end{itemize}

\vspace{1em}
\noindent \textbf{\#\# Response Format:} \\
Return exactly one \texttt{<think></think> + <answer></answer>} pair for one step, with the following structured format:

\vspace{0.5em}
\noindent \texttt{<think>} \\
Explain your reasoning in detail, step by step. \\
{\tiny\ding{110}} State what you observe and extract useful information from the screenshot. Keep important information in the thought because you may need to refer to them later. \\
{\tiny\ding{110}} Reflect on the instruction and interaction history to understand the context and current status. \\
{\tiny\ding{110}} Decide what action to take next and why, following the guidelines. \\
Do not summarize—show your full chain of thought. \\
\texttt{</think>}\\
\vspace{0.5em}
\noindent \texttt{<answer>} \\
\{ \\
\indent "action\_description": "Describe the current step action with a clear instruction", \\
\indent "action\_type": "The type of action to take, e.g., Click, Write, LongPress, Scroll, Swipe, NavigateHome, Wait, NavigateBack, Terminate, OpenAPP, or Answer. If the referece action is terminate and you need to answer the question from the user, you should specify the action type as Answer and provide the answer in the value field.", \\
\indent "action\_target": "Provide a description of the element you want to operate on the screen. (If action\_type == Scroll, Terminate, or Answer, this field should be 'None'). It should include the element's identity, type (button, input field, dropdown menu, tab, etc.), text on it (if have), and location hints if needed to disambiguate. If you want to write text into an input field, specify the input field's identity and necessary information. If you already clicked the input field in last step, you can leave this field as 'None'.", \\
\indent "value": "Specify the input or selected value. For Write, Select, and Answer actions, provide the exact text or option. For Scroll and Swipe, indicate the direction. For Wait, specify the wait time in seconds. For OpenAPP, specify the app name. For other action types, use 'None'. " \\
\} \\
\texttt{</answer>}
\end{promptbox}

\subsection{SFT Data Example}

\begin{datanotebox}{SFT Data Example}
\textbf{Image:} \texttt{20162/screenshot\_1.png}

\begin{center}
\includegraphics[width=0.3\linewidth]{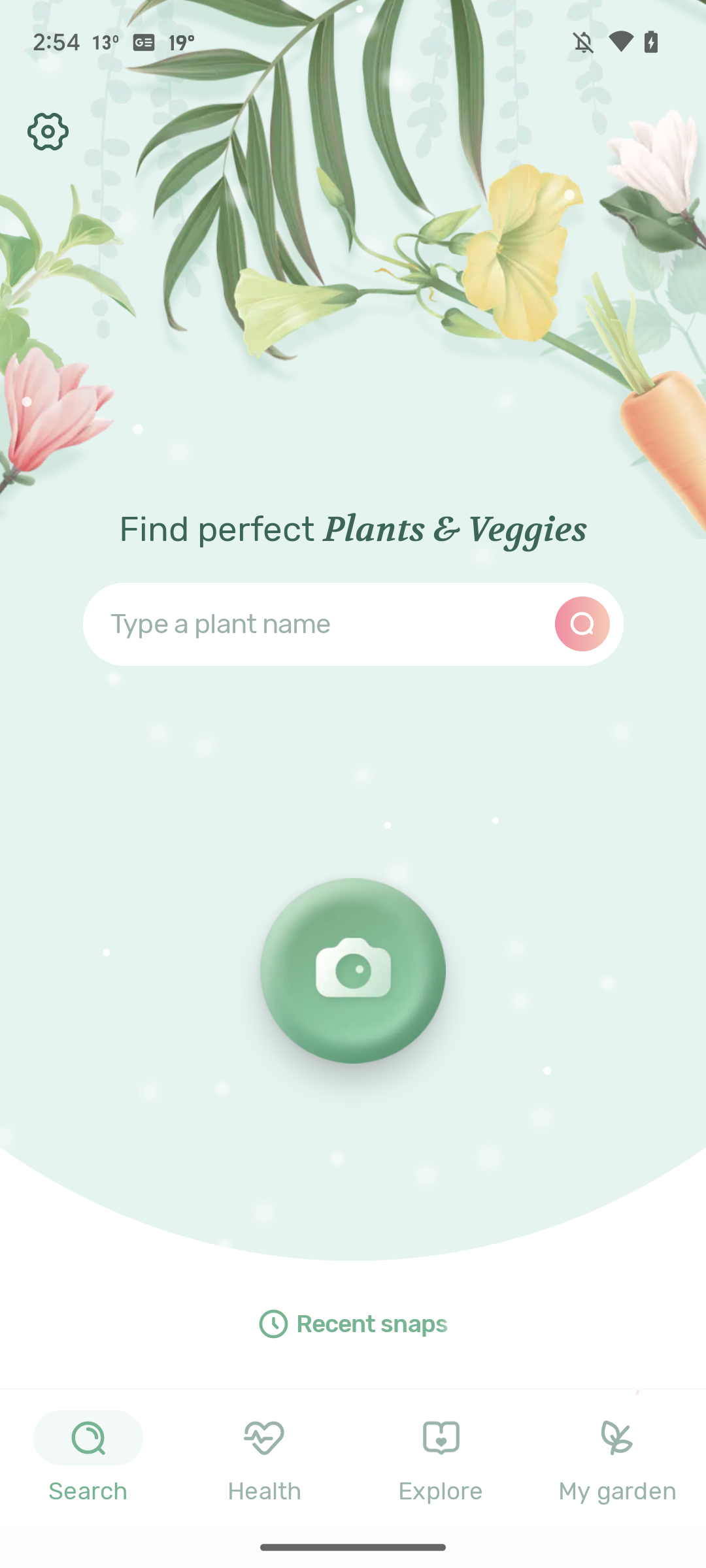}
\end{center}

\vspace{0.5em}
\textbf{System Prompt:}
\begin{tcolorbox}[colback=white]
\ttfamily
You are a GUI agent. You are given a task and a screenshot of the screen. 
You need to perform a series of actions to complete the task. 
You need to choose actions from the following list:

action\_type: Answer, action\_target: None, value: Answer text, point\_2d: None  

\#\# Explanation: Return the final answer to the user's question

action\_type: Click, action\_target: Element description, value: None, point\_2d: [x, y]  

\#\# Explanation: Tap or click a specific UI element and provide its coordinates

action\_type: Write, action\_target: Element description or None, value: Text to enter, point\_2d: [x, y] or None  

\#\# Explanation: Enter text into a specific input field or at the current focus if coordinate is None

action\_type: LongPress, action\_target: Element description, value: None, point\_2d: [x, y]  

\#\# Explanation: Press and hold on a specific UI element (mobile only) and provide its coordinates

action\_type: Scroll, action\_target: None, value: "up" | "down" | "left" | "right", point\_2d: None  

\#\# Explanation: Scroll a view or container in the specified direction

action\_type: Swipe, action\_target: Optional position or None, value: "up" | "down" | "left" | "right", point\_2d: [x, y] or None  

\#\# Explanation: Perform a swipe gesture on the screen in the given direction (mobile)

action\_type: Wait, action\_target: None, value: Number of seconds, point\_2d: None  

\#\# Explanation: Pause execution to allow the UI to load or update

action\_type: NavigateHome, action\_target: None, value: None, point\_2d: None

\#\# Explanation: Navigate to the device's home screen

action\_type: NavigateBack, action\_target: None, value: None, point\_2d: None

\#\# Explanation: Press the system "Back" button

action\_type: OpenApp, action\_target: None, value: App name, point\_2d: None 

\#\# Explanation: Launch an app by its name (mobile only)

action\_type: Terminate, action\_target: None, value: End-task message, point\_2d: None

\#\# Explanation: Signal the end of the current task with a final message
\end{tcolorbox}

\vspace{0.5em}
\textbf{Human Instruction and History:}
\begin{tcolorbox}[colback=white]
\ttfamily
Please generate the next move according to the UI screenshot 
(origin image size 1080x2400), instruction and previous actions.

\textbf{Instruction:} Using the Blossom app, I would like to know the botanical 
name of the ihi plant in order to identify the type of plant it is.

\textbf{Interaction History:}
Step 1 Action: Open the Blossom app to start searching for 
the botanical name of the ihi plant.
\end{tcolorbox}

\vspace{0.5em}
\textbf{Response:}
\ttfamily
\begin{verbatim}
<think>
The screenshot shows that the Blossom app is now open, and I am on the 
main search interface. The instruction is to find the botanical name 
of the "ihi" plant. There is a prominent search bar near the center 
of the screen labeled "Type a plant name," which is intended for 
entering the name of a plant to search for information.

Since I need to search for the ihi plant and identify its botanical 
name, the next logical step is to activate the search bar so I can 
type "ihi" into it. This will allow me to perform a search for the plant 
and get the relevant botanical information.
</think>
\end{verbatim}
\begin{verbatim}
<answer>
{
  "action_description": "Tap on the search bar labeled 
  'Type a plant name' to start entering the plant's name.",
  "action_type": "Click",
  "action_target": "Search bar input field labeled 
  'Type a plant name' located below the text 
  'Find perfect Plants & Veggies'",
  "value": "None",
  "point_2d": [500, 397]
}
</answer>
\end{verbatim}
\end{datanotebox}

\subsection{RL Data Example}

\begin{datanotebox}

\textbf{Image:}
\begin{center}
    \includegraphics[width=0.3\linewidth]{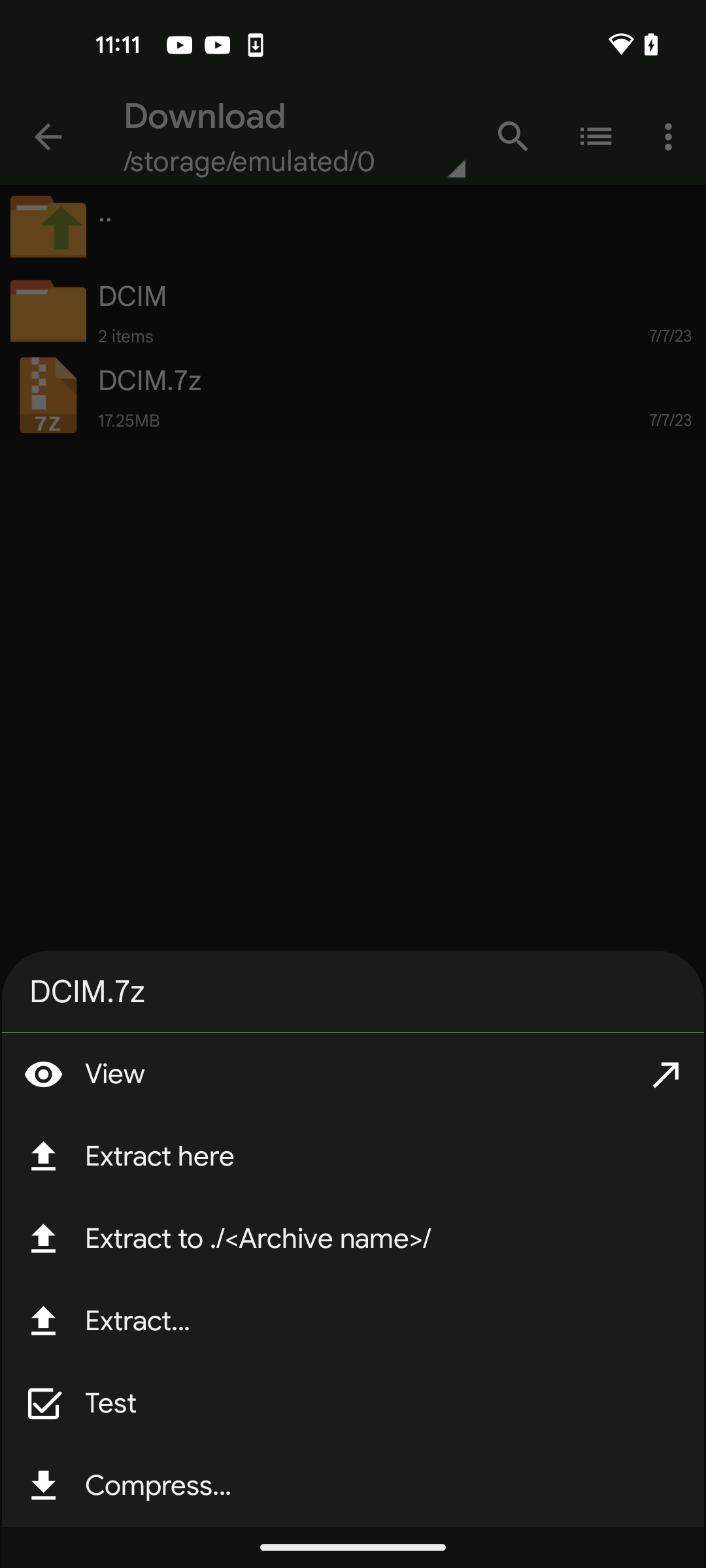}
\end{center}

\textbf{System Prompt:}
\begin{tcolorbox}[colback=white]
\ttfamily
You are a GUI agent. You are given a task and a screenshot of the screen. 
You need to perform a series of actions to complete the task. 
You need to choose actions from the following list:

action\_type: Answer, action\_target: None, value: Answer text, point\_2d: None  

\#\# Explanation: Return the final answer to the user's question

action\_type: Click, action\_target: Element description, value: None, point\_2d: [x, y]  

\#\# Explanation: Tap or click a specific UI element and provide its coordinates

action\_type: Write, action\_target: Element description or None, value: Text to enter, point\_2d: [x, y] or None  

\#\# Explanation: Enter text into a specific input field or at the current focus if coordinate is None

action\_type: LongPress, action\_target: Element description, value: None, point\_2d: [x, y]  

\#\# Explanation: Press and hold on a specific UI element (mobile only) and provide its coordinates

action\_type: Scroll, action\_target: None, value: "up" | "down" | "left" | "right", point\_2d: None  

\#\# Explanation: Scroll a view or container in the specified direction

action\_type: Swipe, action\_target: Optional position or None, value: "up" | "down" | "left" | "right", point\_2d: [x, y] or None  

\#\# Explanation: Perform a swipe gesture on the screen in the given direction (mobile)

action\_type: Wait, action\_target: None, value: Number of seconds, point\_2d: None  

\#\# Explanation: Pause execution to allow the UI to load or update

action\_type: NavigateHome, action\_target: None, value: None, point\_2d: None  

\#\# Explanation: Navigate to the device's home screen

action\_type: NavigateBack, action\_target: None, value: None, point\_2d: None  

\#\# Explanation: Press the system "Back" button

action\_type: OpenApp, action\_target: None, value: App name, point\_2d: None  

\#\# Explanation: Launch an app by its name (mobile only)

action\_type: Terminate, action\_target: None, value: End-task message, point\_2d: None  

\#\# Explanation: Signal the end of the current task with a final message
\end{tcolorbox}

\vspace{0.5em}
\textbf{Human Instruction and History:}
\begin{tcolorbox}[colback=white]
\ttfamily
<image>

Please generate the next move according to the UI screenshot, instruction and previous actions.

\textbf{Instruction:} In the ZArchive App, decompress the DCIM.7z zip file and save it to the Pocketbook folder.

\textbf{Interaction History:}

Step 1 Action: Open the ZArchiver app to begin the decompression process as required by the instruction.  

Step 2 Action: Tap on the 'DCIM.7z' archive file to open the extraction options menu.\\

The response should be structured in the following format. Make sure the output between <answer> and </answer> is a valid JSON object. Regarding the key "point\_2d", please provide the coordinates on the screen where the action is to be performed; if not applicable, use [-100, -100].

\begin{verbatim}
<think> Your step-by-step thought process here... </think>  
<answer>  
{  
  "action_description": "the description of the action to perform, 
                            summarized in one sentence",  
  "action_type": "the type of action to perform. Please follow the system prompt 
                    for available actions.",  
  "value": "the input text or direction ('up', 'down', 'left', 'right') 
                    for the 'scroll' action, if applicable; otherwise, use 'None'",  
  "point_2d": [x, y]  
}  
</answer>
\end{verbatim}
\end{tcolorbox}
\end{datanotebox}

The answer and the additional information used for reward computation are shown below.
The \texttt{gt\_point\_2d} coordinates are normalized to the range $[0,1]$, 
while \texttt{gt\_bbox} is further scaled by a factor of $1000$.

\begin{verbatim}
{
    'gt_action': 'Click',
    'gt_input_text': 'None',
    'gt_point_2d': [0.194, 0.843],
    'gt_target': "'Extract...' option in the extraction context menu for DCIM.7z", 
    'image_height': 2400, 
    'image_width': 1080, 
    'gt_bbox': [36, 824, 268, 857]
}
\end{verbatim}

\section{Long-Horizon Trajectory Case Studies}\label{ap:long_traj_example}
Figures~\ref{fig:trajectory_example_guipivot_androidworld} and~\ref{fig:trajectory_example_qwen2.5_7b_androidworld} compare GUI-Libra-7B with its base model, Qwen2.5-VL-7B-Instruct, on AndroidWorld Task 18 (\textsc{ExpenseDeleteMultiple}). The task requires deleting three specific expenses in Pro Expense: \emph{School Supplies}, \emph{Religious}, and \emph{Flight Tickets}. As shown in Figure~\ref{fig:trajectory_example_guipivot_androidworld}, GUI-Libra successfully completes this long-horizon task by alternating between iterative reasoning and grounded actions. In contrast, the base model requires more steps to discover how to delete a single item and then fails to remove the second one, highlighting its difficulty in sustaining multi-step progress and demonstrating the advantage of GUI-Libra on long-horizon decision making.

We also include a web navigation case study in Figure~\ref{fig:trajectory_example_guipivot_4b_webarenalite}, where GUI-Libra-4B successfully completes a long-horizon WebArena-Lite-v2 task, further illustrating strong generalization to multi-step web interactions.

\begin{figure*}[t]
    \vspace{-0.15in}
    \centering
    \includegraphics[width=0.86\linewidth,height=1.4\linewidth]{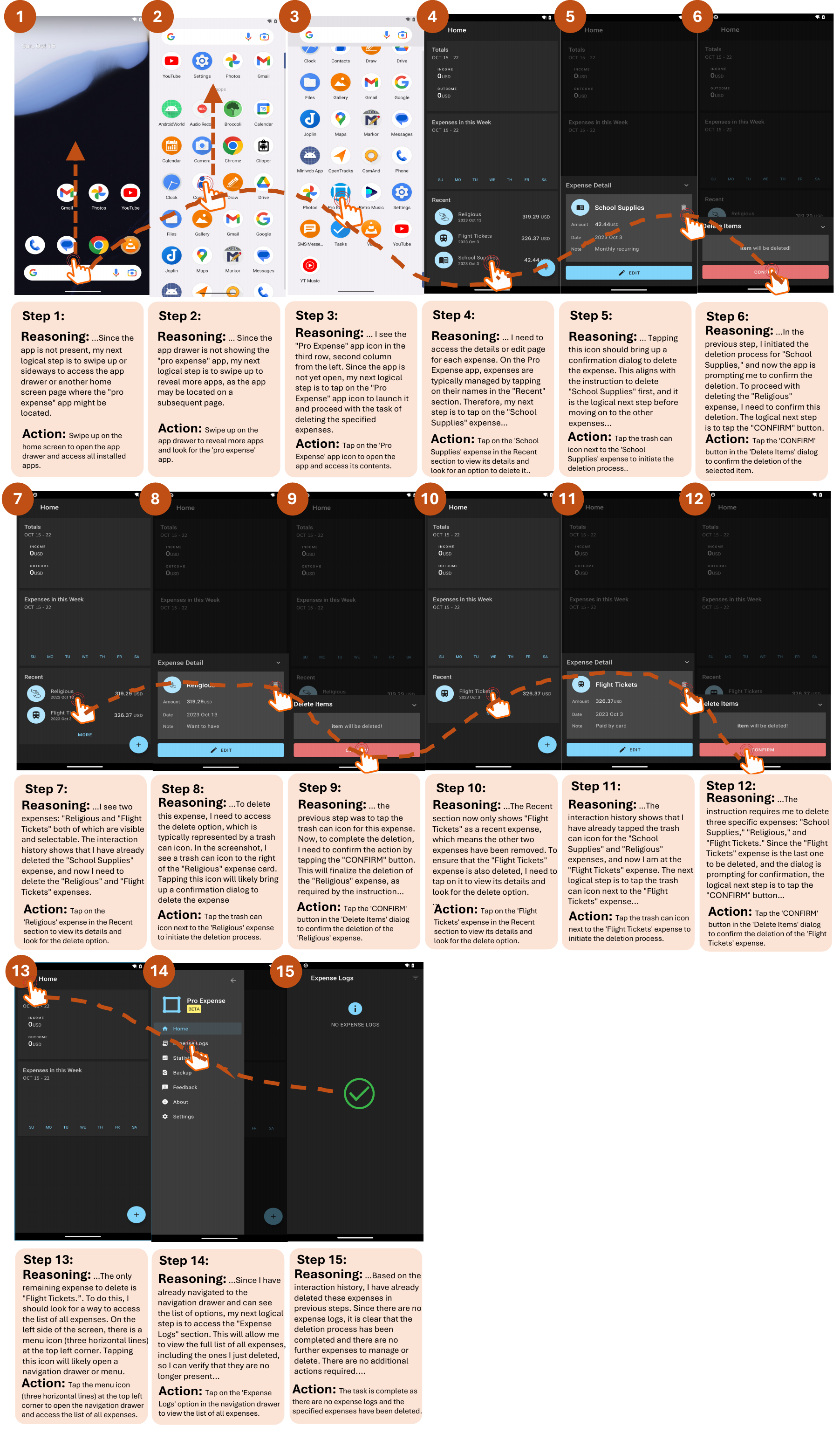}
    \vspace{-5pt}
    \caption{Trajectory Example of GUI-Libra-7B for Task 18 ExpenseDeleteMultiple: Delete the following expenses from pro expense: School Supplies, Religious, Flight Tickets.}
    \label{fig:trajectory_example_guipivot_androidworld}
\end{figure*}

\begin{figure*}[t]
    \vspace{-0.15in}
    \centering
    \includegraphics[width=0.97\linewidth,height=1.4\linewidth]{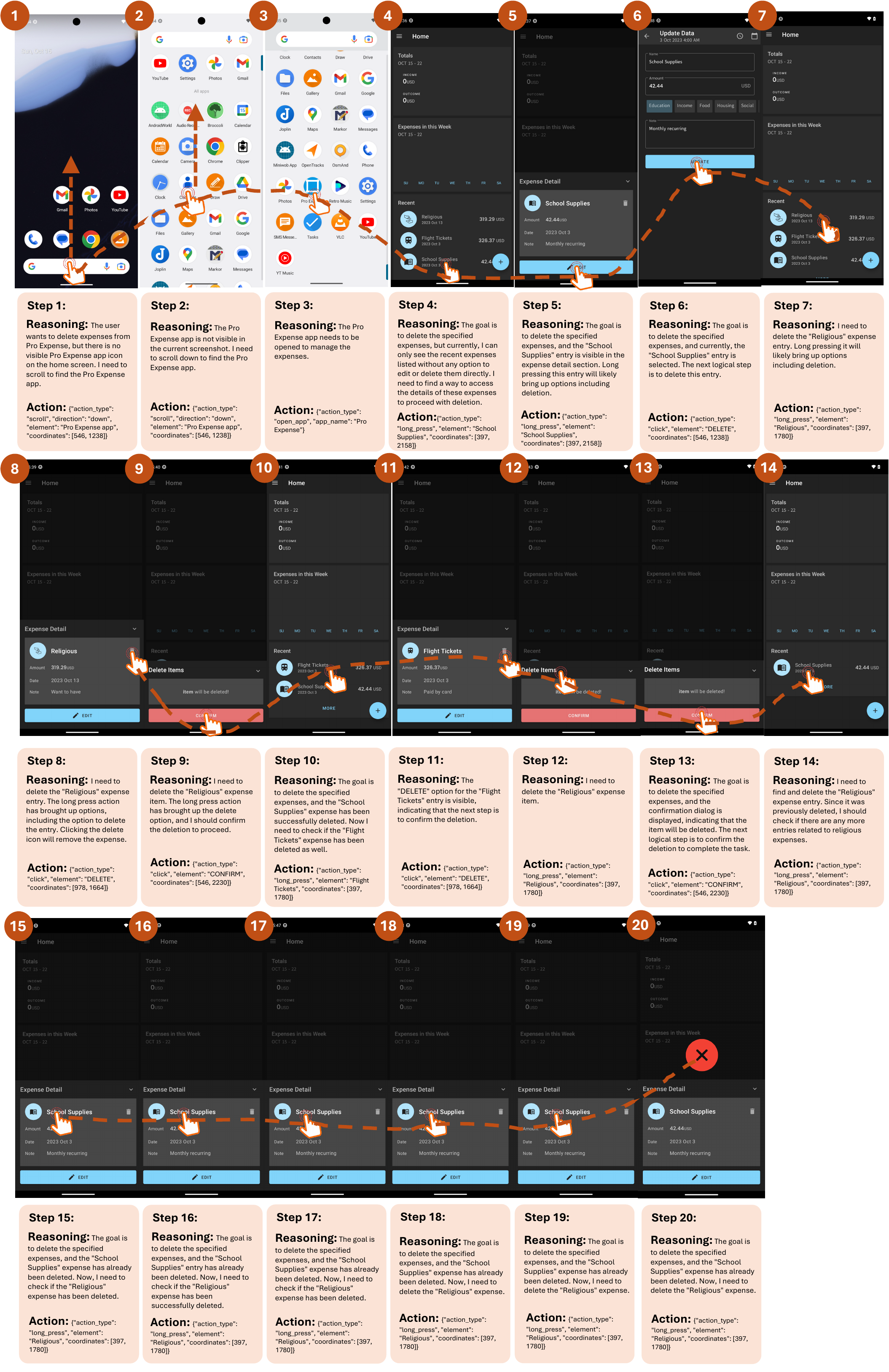}
     \vspace{-5pt}
    \caption{Trajectory Example of the base model Qwen2.5-VL-7B-Instruct for Task 18 ExpenseDeleteMultiple: Delete the following expenses from pro expense: School Supplies, Religious, Flight Tickets.}
    \label{fig:trajectory_example_qwen2.5_7b_androidworld}
\end{figure*}

\begin{figure*}[t]
    \vspace{-0.15in}
    \centering
    \includegraphics[width=1.0\linewidth]{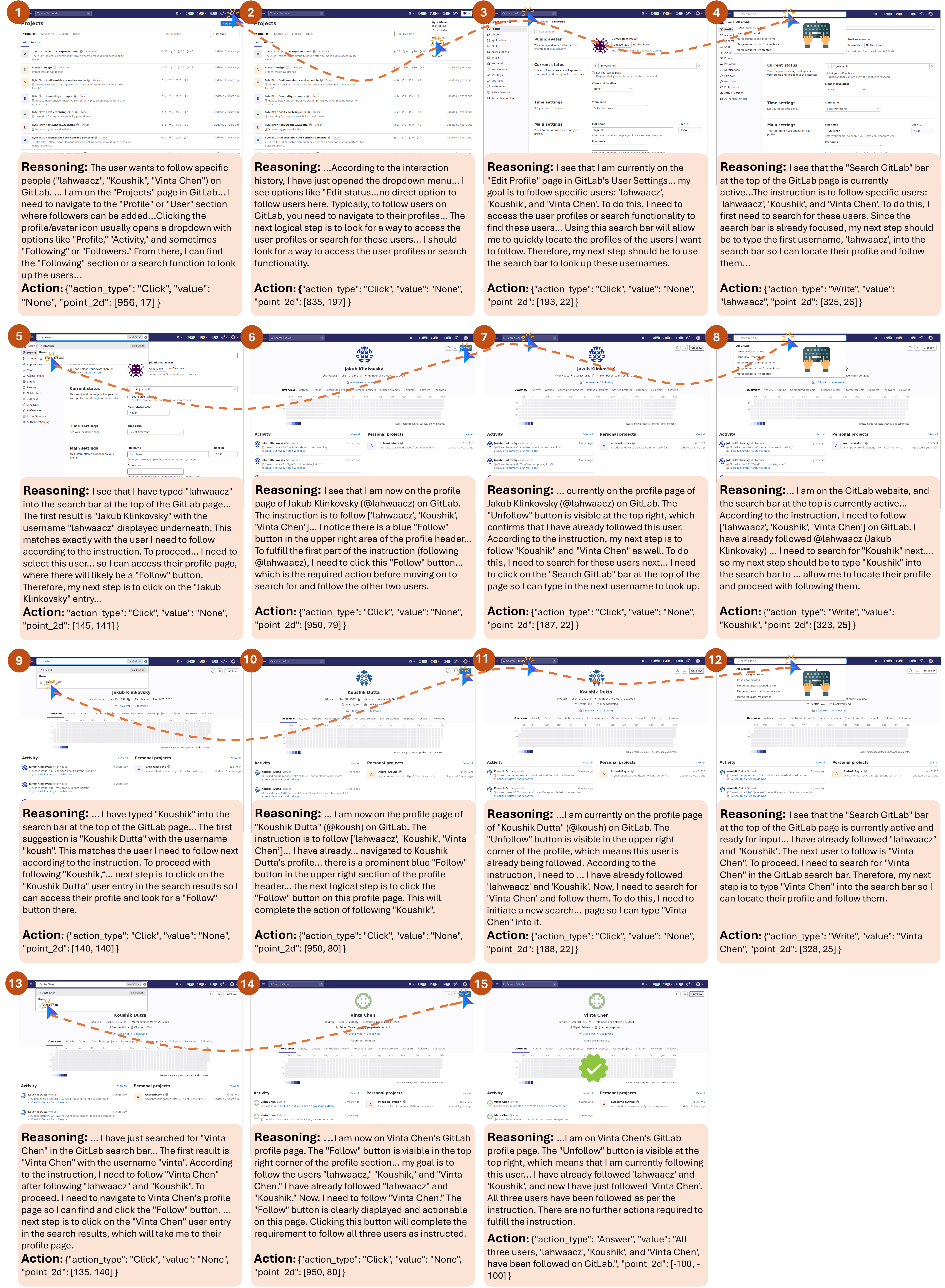}
    \caption{Trajectory Example of GUI-Libra-4B for WebArena-Lite-v2 Task: \textit{Follow ['lahwaacz', 'Koushik', 'Vinta Chen'] on Gitlab.}}
    \label{fig:trajectory_example_guipivot_4b_webarenalite}
\end{figure*}

\begin{figure*}[t]
    \vspace{-0.15in}
    \centering
    \includegraphics[width=1.0\linewidth]{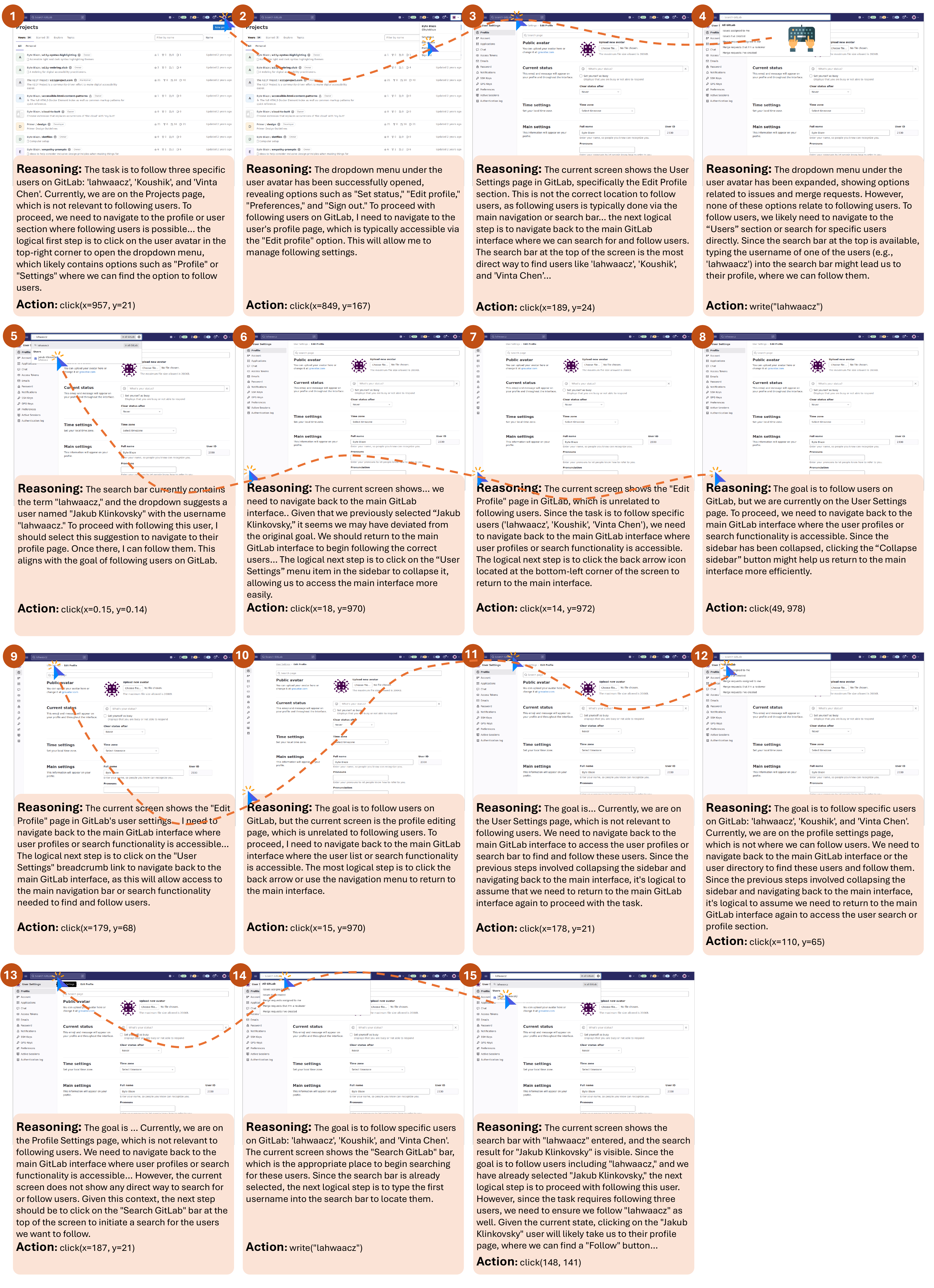}
    \caption{Trajectory Example of Qwen3-VL-4B-Instruct for WebArena-Lite-v2 Task: \textit{Follow ['lahwaacz', 'Koushik', 'Vinta Chen'] on Gitlab.}}
    \label{fig:trajectory_example_guipivot_4b_webarenalite}
\end{figure*}

%% file: sections/additional_metrics.tex
\begin{table}[]
\centering
\caption{Grounding Accuracy on AndroidControl-v2.}
\label{tab:AC_grounding_accuracy}
\resizebox{0.75\linewidth}{!}{
\begin{tabular}{lllll}
\toprule
 & \multicolumn{2}{c}{\textbf{High Level}} 
 & \multicolumn{2}{c}{\textbf{Low Level}} \\
\textbf{Model}
 & Pass@1 & Pass@4 
 & Pass@1 & Pass@4  \\

\midrule
\multicolumn{5}{c}{\textbf{Proprietary Models with SeeAct-V Framework}}\\
\midrule

GPT-4o + UGround-v1-7B  & 58.7 & 67.3
& 83.4 & 88.3 \\

GPT-4.1 + UGround-v1-7B  & 60.5 & 64.6
& 83.0 & 86.1 \\

GPT-5-mini + UGround-v1-7B  & 61.9 & 66.8
& 83.9 & 87.9 \\

GPT-5 + UGround-v1-7B  & \underline{74.4} & \textbf{82.1}
& \underline{93.7} & 94.6 \\

\midrule
\multicolumn{5}{c}{\textbf{Open-source Native Models}}\\
\midrule

 GUI-R1-3B & 53.4 & 66.4
& 81.2 & 87.0 \\

  GUI-R1-7B  & 58.3 & 72.2
& 87.9 & 87.9 \\

Aguvis-7B
 & 67.3 & 78.0 
 & 85.7 & 87.0  \\

 UI-TARS-1.5-7B
 & 57.9 & 71.3
 & 78.5 & 88.8 \\

GLM-4.1V-9B-Thinking
 & 44.0 & 55.6 
 & 68.2 & 75.8  \\

Qwen2.5-VL-32B
 & 49.0 & 66.6 
 & 78.4 & 85.4 \\

Qwen2.5-VL-72B
 & 56.5 & 72.9 
 & 82.9 & 90.2 \\

 Qwen3-VL-32B  & 70.9 & 79.4
& 91.9 & 93.3 \\

\midrule
Qwen2.5-VL-3B (Baseline)
 & 42.2 & 46.6 
 & 78.0 & 80.7\\

\rowcolor{beijingblue}
\textbf{GUI-Libra-3B (Ours)}
 & 68.2 & 77.1
  & 89.2 & 91.5 \\

Qwen2.5-VL-7B (Baseline)
 & 60.5 & 69.5 
 & 82.5 & 88.3  \\

 \rowcolor{beijingblue}
\textbf{GUI-Libra-7B (Ours)}
 & 71.8 & 78.0
  & 91.5 & 93.7 \\

 Qwen3-VL-4B (Baseline) & 65.9 & 76.7
 & 90.6 & 93.3 \\

\rowcolor{beijingblue}
\textbf{GUI-Libra-4B  (Ours)} 
 & 74.0 & \underline{81.6}
 & 91.0 & \underline{95.1} \\

 Qwen3-VL-8B (Baseline)   & 67.7 & 79.8
 & 93.3 & \underline{95.1} \\

\rowcolor{beijingblue}
\textbf{GUI-Libra-8B (Ours)} 
  & \textbf{76.2} & \underline{81.6}
 & \textbf{95.5} & \textbf{96.4} \\

\bottomrule
\end{tabular}}
\end{table}

\begin{table}[]
\centering
\caption{Type Accuracy on AndroidControl-v2.}
\label{tab:ac_type_accuracy}
\resizebox{0.75\linewidth}{!}{
\begin{tabular}{lllll}
\toprule
 & \multicolumn{2}{c}{\textbf{High Level}} 
 & \multicolumn{2}{c}{\textbf{Low Level}} \\
\textbf{Model}
 & Pass@1 & Pass@4 
 & Pass@1 & Pass@4  \\
\midrule
\multicolumn{5}{c}{\textbf{Proprietary Models with SeeAct-V Framework}}\\
\midrule

GPT-4o + UGround-v1-7B & 73.9 & 82.7
& 89.2 & 93.7 \\

GPT-4.1 + UGround-v1-7B & 72.9 & 78.9
& 89.5 & 93.0 \\

GPT-5-mini + UGround-v1-7B  & 70.9 & 77.4
& 89.7 & 93.5 \\

GPT-5 + UGround-v1-7B  & \textbf{74.1} & 78.9
& 88.4 & 91.5 \\

\midrule
\multicolumn{5}{c}{\textbf{Open-source Native Models}}\\
\midrule

GUI-R1-3B & 63.1 & 76.4
& 71.1 & 90.2 \\

GUI-R1-7B  & 61.8 & 76.1
& 76.1 & 89.2 \\

Aguvis-7B  & 58.0 & 59.8 
 & 60.6 & 62.1 \\

 UI-TARS-1.5-7B
 & 69.4 & \textbf{87.4} 
 & 70.9 & 91.0 \\

GLM-4.1V-9B-Thinking
 & 65.8 & 74.1 
 & 89.2 & 92.5 \\

Qwen2.5-VL-32B  & 67.3 & 78.1 
 & 85.7 & 89.7 \\

Qwen2.5-VL-72B  & 72.9 &\underline{86.9}
 & 89.5 & \textbf{96.2}\\

Qwen3-VL-32B  & \underline{73.6} & 81.2
& 90.2 & 92.2 \\

\midrule
Qwen2.5-VL-3B (Baseline)  & 53.0 & 77.4 
 & 80.2 & 90.0 \\

\rowcolor{beijingblue}
\textbf{GUI-Libra-3B  (Ours)}  
 & 70.4 & 78.1
 & 89.5 & 93.7 \\
 
Qwen2.5-VL-7B (Baseline) & 62.8 & 77.4 
 & 78.6 & 91.7  \\

\rowcolor{beijingblue}
\textbf{GUI-Libra-7B  (Ours)}
 & 72.9 & 78.6
 & 88.9 & 91.2 \\

Qwen3-VL-4B (Baseline)  & 63.6 & 78.4
 & 86.4 & 88.7 \\

\rowcolor{beijingblue}
\textbf{GUI-Libra-4B  (Ours)}
 & 72.6 & 76.1
 & \underline{92.5} & 95.0 \\

Qwen3-VL-8B (Baseline)   & 68.8 & 77.6
 & 82.4 & 86.9 \\

\rowcolor{beijingblue}
\textbf{GUI-Libra-8B  (Ours)} 
 & \textbf{74.1} & 78.9
 & \textbf{93.7} & \underline{95.5} \\
 
\bottomrule
\end{tabular}}
\end{table}

\begin{table}[t]
\centering
\caption{Grounding accuracy (\%) on Multimodal-Mind2Web-v2.}
\label{tab:mind2web_grounding}
\resizebox{\linewidth}{!}{
\begin{tabular}{lllllllll}
\toprule
& \multicolumn{2}{c}{Cross-Task} 
& \multicolumn{2}{c}{Cross-Website} 
& \multicolumn{2}{c}{Cross-Domain} 
& \multicolumn{2}{c}{Average} \\
\textbf{Model}
& Pass@1 & Pass@4  & Pass@1 & Pass@4   & Pass@1 & Pass@4   & Pass@1 & Pass@4 \\
\midrule
\multicolumn{9}{c}{\textbf{Proprietary Models with SeeAct-V Framework}}\\
\midrule

GPT-4o + UGround-v1-7B & 42.0 & 44.0 & 40.6 & 42.9 & 44.9 & 47.2 & 42.5 & 44.7 \\

GPT-4.1 + UGround-v1-7B   & 47.4 & 49.3
 & 43.5 & 45.5
 & 48.7 & 51.4
 & 46.5 & 48.7 \\

GPT-5-mini + UGround-v1-7B  & 52.5 & 53.1 & 50.7 & 51.6 & 53.6 & 54.3 & 52.3 & 53.0 \\

GPT-5 + UGround-v1-7B   & 55.3 & 56.6
 & 53.1 & 53.4
 & 55.8 & 57.3
 & 54.7 & 55.8 \\

\midrule
\multicolumn{9}{c}{\textbf{Open-source Native Models}}\\
\midrule

GUI-R1-3B 
& 29.1 & 44.6 & 27.2 & 45.5 & 29.1 & 44.9 & 28.5 & 45.0 \\

GUI-R1-7B  & 43.8 & 56.8 & 40.6 & 54.9 & 45.7 & 56.3 & 43.4 & 56.0 \\

Aguvis-7B
& 40.5 & 53.5
& 34.9 & 47.4
& 38.8 & 48.8
& 38.1 & 49.9 \\

UI-TARS-1.5-7B
& 46.2 & 59.0
& 42.2 & 57.6
& 46.4 & 60.2
& 45.0 & 58.9 \\

GLM-4.1V-9B-Thinking
& 36.8 & 42.8
& 34.2 & 39.7
& 36.9 & 44.1
& 36.0 & 42.2 \\

Qwen2.5-VL-32B & 53.0 & 61.8 & 50.9 & \underline{61.6} & 52.6 & 62.9 & 52.2 & 62.1 \\

Qwen2.5-VL-72B
& 55.7 & \textbf{65.0} & 53.7 & 61.5 & 56.3 & \textbf{64.4} & 55.2 & \textbf{63.6} \\

Qwen3-VL-32B 
& 55.9 & 62.4 & 52.7 & \textbf{61.7} & \underline{56.7} & \underline{63.9} & 55.1 & \underline{62.7} \\

 \midrule
Qwen2.5-VL-3B (Baseline)
& 37.9 & 40.1 & 35.6 & 38.6 & 38.8 & 43.2 & 37.4 & 40.6 \\

\rowcolor{beijingblue}
\textbf{GUI-Libra-3B (Ours)}
 & 50.1 & 57.8
 & 50.7 & 58.6
 & 51.6 & 58.5
 & 50.8 & 58.3 \\

Qwen2.5-VL-7B (Baseline)
& 44.1 & 54.0 & 45.5 & 54.2 & 46.2 & 57.4 & 45.3 & 55.2 \\

\rowcolor{beijingblue}
\textbf{GUI-Libra-7B (Ours)}
 & 53.5 & 59.8
 & 54.2 & 60.4
 & 53.6 & 60.7
 & 53.7 & 60.3 \\

Qwen3-VL-4B (Baseline) 
 & 49.9 & 59.6 & 47.9 & 58.2 & 49.9 & 57.5 & 49.2 & 58.4 \\

\rowcolor{beijingblue}
\textbf{GUI-Libra-4B  (Ours)}
  & \underline{57.2} & 61.7
 & \underline{56.2} & 61.4
 & \underline{56.7} & 62.1
 &\underline{56.7} & 61.7 \\

Qwen3-VL-8B (Baseline)
& 52.6 & 60.0 & 49.6 & 59.3 & 52.4 & 59.3 & 51.5 & 59.5 \\

\rowcolor{beijingblue}
\textbf{GUI-Libra-8B (Ours)}
 & \textbf{58.6} & \underline{62.6}
 & \textbf{56.5} & 61.2
 & \textbf{58.2} & 62.5
 & \textbf{57.8} & 62.1 \\

\bottomrule
\end{tabular}}
\end{table}

\begin{table}[t]
\centering
\caption{Operation F1 score on Multimodal-Mind2Web-v2.}
\label{tab:mind2web_operation_f1}
\resizebox{\linewidth}{!}{
\begin{tabular}{lllllllll}
\toprule
& \multicolumn{2}{c}{Cross-Task} 
& \multicolumn{2}{c}{Cross-Website} 
& \multicolumn{2}{c}{Cross-Domain} 
& \multicolumn{2}{c}{Average} \\
\textbf{Model}
& Best@1 & Best@4  & Best@1 & Best@4   & Best@1 & Best@4  & Best@1 & Best@4 \\
\midrule
\multicolumn{9}{c}{\textbf{Proprietary Models with SeeAct-V Framework}}\\
\midrule

GPT-4o + UGround-v1-7B  & 70.1 & 78.9 & 70.9 & 79.6 & 67.7 & 75.7 & 69.6 & 78.1 \\

GPT-4.1 + UGround-v1-7B   & 72.7 & 81.6
 & 71.1 & 81.0
 & 71.1 & 79.4
 & 71.6 & 80.7 \\

GPT-5-mini + UGround-v1-7B  & 80.7 & 88.8 & 77.3 & 86.8 & 77.9 & 85.3 & 78.6 & 87.0 \\

GPT-5 + UGround-v1-7B   & 79.7 & 89.2
 & 79.6 & 88.9
 & 79.1 & 88.2
 & 79.5 & 88.8 \\ 

\midrule
\multicolumn{9}{c}{\textbf{Open-source Native Models}}\\
\midrule

GUI-R1-3B  & 79.5 & 87.9 & 75.4 & 85.3 & 79.6 & 89.4 & 78.1 & 87.5 \\

GUI-R1-7B  & 78.9 & 89.6 & 75.2 & 88.5 & 79.8 & 90.7 & 78.0 & 89.6 \\

Aguvis-7B
& 84.4 & 91.1
& 80.7 & 87.9
& 83.3 & 91.5
& 82.8 & 90.2 \\

UI-TARS-1.5-7B
& 81.2 & 85.3
& 78.5 & 81.6
& 81.1 & 85.0
& 80.3 & 84.0 \\

GLM-4.1V-9B-Thinking
& 73.6 & 82.0
& 69.2 & 80.0
& 73.7 & 81.6
& 72.2 & 81.2 \\

Qwen2.5-VL-32B
& 84.2 & \underline{92.6} & 81.9 & \textbf{92.0} & 82.6 & \textbf{93.0} & 82.9 & \underline{92.5} \\

Qwen2.5-VL-72B
& \underline{85.2} & \textbf{93.5} & 83.3 & \underline{91.5} & 84.3 & \textbf{93.0} & \underline{84.3} & \textbf{92.7} \\

Qwen3-VL-32B 
& 83.9 & 91.6 & 81.3 & 89.9 & 83.7 & \underline{92.2} & 83.0 & 91.2 \\

 \midrule
Qwen2.5-VL-3B (Baseline)
& 59.3 & 86.5 & 51.6 & 83.0 & 64.7 & 88.3 & 58.5 & 85.9 \\

\rowcolor{beijingblue}
\textbf{GUI-Libra-3B (Ours)}
 & 84.6 & 88.0
 & 80.8 & 85.2
 & \textbf{86.4} & 89.2
 & 83.9 & 87.5 \\

Qwen2.5-VL-7B (Baseline)
 & 68.1 & 90.9 & 66.7 & 90.4 & 73.3 & 91.6 & 69.4 & 91.0 \\

\rowcolor{beijingblue}
\textbf{GUI-Libra-7B  (Ours)}
 & \underline{85.2} & 88.1
 & \textbf{84.3} & 88.2
 & \underline{85.9} & 89.4
 & \textbf{85.1} & 88.6 \\

Qwen3-VL-4B (Baseline) 
& 81.6 & 88.9 & 78.9 & 88.2 & 79.5 & 89.9 & 80.0 & 89.0 \\

\rowcolor{beijingblue}
\textbf{GUI-Libra-4B (Ours)} 
  & 85.0 & 88.9
 & \underline{83.6} & 88.2
 & 82.8 & 88.1
 & 83.8 & 88.4 \\

Qwen3-VL-8B (Baseline) 
 & 83.2 & 91.2 & 80.0 & 89.0 & 82.7 & 90.2 & 82.0 & 90.1 \\

\rowcolor{beijingblue}
\textbf{GUI-Libra-8B  (Ours)}
 & \textbf{85.5} & 88.1
 & 81.8 & 86.9
 & 84.4 & 87.0
 & 83.9 & 87.3 \\

\bottomrule
\end{tabular}}
\end{table}

%% file: sections/appendix-proof.tex
\section{Proofs for Theoretical Analysis}
\label{app:RLPR}

This section provides detailed proofs for the theoretical results in Sec.~\ref{sec:RLPR}.

\subsubsection{When does offline step-wise matching predict online success?}

\begin{theorem}{Offline-to-online bound under partial verifiability (Restatement of Theorem \ref{thm:off2on_main})}{off2on_app}
Assume Assumption~\ref{ass:failure} and that, for all $t\in[H]$, $d_{\pi,t}(s)>0$ implies $d_\mu(s)>0$ (i.e., $\mathrm{supp}(d_{\pi,t})\subseteq \mathrm{supp}(d_\mu)$).
This condition ensures the occupancy ratio $C(\pi)$ is well-defined.
Then the online success probability satisfies
\begin{equation}
J(\pi)\ \ge\
1\;-\;H\cdot C(\pi)\cdot\Big(1-M_{\mathrm{off}}(\pi)-\bar\eta_\pi\Big).
\label{eq:offline_online_bound_app}
\end{equation}
In particular, if $C(\pi)$ is uniformly bounded over a policy class and $\bar\eta_\pi$ is small \emph{or stable across policies},
then $M_{\mathrm{off}}(\pi)$ becomes predictive of $J(\pi)$ through the affine lower bound.
\end{theorem}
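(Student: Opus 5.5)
The plan is a union bound over time steps, followed by a change of measure from the online visitation distributions $d_{\pi,t}$ to the offline distribution $d_\mu$. By Assumption~\ref{ass:failure}, the failure event is contained in the union over $t\in[H]$ of the events $E_t=\{a_t\notin\mathcal{A}^*(s_t)\}$. Therefore
\[
1-J(\pi)\;\le\;\sum_{t=1}^{H}\Pr_{\tau\sim\pi,P}(E_t)\;=\;\sum_{t=1}^{H}\mathbb{E}_{s\sim d_{\pi,t}}\big[\,1-\pi(\mathcal{A}^*(s)\mid s)\,\big].
\]
The equality holds because, conditional on $s_t=s$, the action is drawn from $\pi(\cdot\mid s)$. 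Episodes that terminate early can be handled by treating the terminal state as absorbing with $\mathcal{A}^*=\mathcal{A}$, so that such steps contribute nothing to the sum.

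Next I define the per-state invalidity $f(s)\triangleq 1-\pi(\mathcal{A}^*(s)\mid s)$. This quantity is nonnegative and lies in $[0,1]$. Using the decomposition $\pi(\mathcal{A}^*(s)\mid s)=\pi(\tilde a(s)\mid s)+\eta_\pi(s)$ stated before the theorem, it equals $1-\pi(\tilde a(s)\mid s)-\eta_\pi(s)$. The support condition $\mathrm{supp}(d_{\pi,t})\subseteq\mathrm{supp}(d_\mu)$ lets me write
\[
\mathbb{E}_{d_{\pi,t}}[f]=\sum_{s:\,d_\mu(s)>0}\frac{d_{\pi,t}(s)}{d_\mu(s)}\,d_\mu(s)f(s).
\]
Since $f\ge 0$, each density ratio can be bounded by $C(\pi)$ from \eqref{eq:occupancy_alpha_defs_main}, which gives $\mathbb{E}_{d_{\pi,t}}[f]\le C(\pi)\,\mathbb{E}_{d_\mu}[f]$. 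Nonnegativity of $f$ is the one point I need to check carefully, because the ratio bound only goes in the right direction for a nonnegative integrand.

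Finally, linearity gives $\mathbb{E}_{d_\mu}[f]=1-M_{\mathrm{off}}(\pi)-\bar\eta_\pi$, using the definitions of $M_{\mathrm{off}}$ in \eqref{eq:offline_metric} and of $\bar\eta_\pi$. Summing over $t$ then produces the factor $H$, and rearranging yields $J(\pi)\ge 1-H\,C(\pi)\,(1-M_{\mathrm{off}}(\pi)-\bar\eta_\pi)$.

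The "in particular" claim follows directly. If $C(\pi)\le \bar C$ uniformly over the policy class and $\bar\eta_\pi$ is approximately constant, the right-hand side is an affine, increasing function of $M_{\mathrm{off}}(\pi)$ with slope at most $H\bar C$.

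The main obstacle is not the algebra but the modeling bookkeeping. The first issue is the role of the state distributions: $d_{\pi,t}$ must be the marginal over histories treated as effective states, so that the conditional action law at $s_t$ is exactly $\pi(\cdot\mid s_t)$. The second issue is early termination, which the absorbing-state convention above is meant to handle. Once these conventions are fixed, each step is a one-line inequality.
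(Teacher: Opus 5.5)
Your proposal is correct and follows the paper's proof essentially step for step. Both use a union bound over the invalid-action events $E_t$ (justified by Assumption~\ref{ass:failure}), then a change of measure from $d_{\pi,t}$ to $d_\mu$ that relies on $f(s)=1-\pi(\mathcal{A}^*(s)\mid s)\ge 0$ and the ratio bound $C(\pi)$, and finally the identity $\mathbb{E}_{d_\mu}[f]=1-M_{\mathrm{off}}(\pi)-\bar\eta_\pi$ summed over $t$; your absorbing-state convention for early termination and your remark that $d_{\pi,t}$ must be a marginal over histories are bookkeeping the paper leaves implicit, and they only tighten the argument.
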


\begin{proof}
Let $E_t$ denote the event that the action at step $t$ is invalid:
\[
E_t \;\triangleq\; \{a_t\notin \mathcal{A}^*(s_t)\}.
\]
By Assumption~\ref{ass:failure}, failure implies that at least one invalid step occurs, i.e.,
\[
\{\text{failure}\}\subseteq \bigcup_{t=1}^H E_t.
\]
Therefore,
\[
1-J(\pi)
=
\mathbb{P}(\text{failure})
\le
\mathbb{P}\!\left(\bigcup_{t=1}^H E_t\right)
\le
\sum_{t=1}^H \mathbb{P}(E_t),
\]
where the last inequality is the union bound.

Fix any $t\in[H]$. Conditioning on $s_t$ gives
\[
\mathbb{P}(E_t)
=
\mathbb{E}_{s\sim d_{\pi,t}}
\Big[\mathbb{P}(a_t\notin \mathcal{A}^*(s)\mid s_t=s)\Big]
=
\mathbb{E}_{s\sim d_{\pi,t}}\Big[1-\pi(\mathcal{A}^*(s)\mid s)\Big].
\]
Define the nonnegative function $f(s)\triangleq 1-\pi(\mathcal{A}^*(s)\mid s)\ge 0$.
By the definition of $C(\pi)$ and the support condition $d_{\pi,t}\ll d_\mu$,
\[
\mathbb{E}_{s\sim d_{\pi,t}}[f(s)]
=
\mathbb{E}_{s\sim d_\mu}\!\left[\frac{d_{\pi,t}(s)}{d_\mu(s)}\,f(s)\right]
\le
C(\pi)\cdot \mathbb{E}_{s\sim d_\mu}[f(s)].
\]
Next, since the offline dataset verifies only the demonstrated action $\tilde a(s)\in \mathcal{A}^*(s)$,
we can decompose the true step-wise validity probability as
\[
\pi(\mathcal{A}^*(s)\mid s)
=
\pi(\tilde a(s)\mid s)\;+\;\pi\!\left(\mathcal{A}^*(s)\setminus\{\tilde a(s)\}\mid s\right)
=
\pi(\tilde a(s)\mid s)\;+\;\eta_\pi(s).
\]
Taking expectation over $s\sim d_\mu$ yields
\[
\mathbb{E}_{s\sim d_\mu}\big[\pi(\mathcal{A}^*(s)\mid s)\big]
=
M_{\mathrm{off}}(\pi)\;+\;\bar\eta_\pi,
\]
and hence
\[
\mathbb{E}_{s\sim d_\mu}[f(s)]
=
1-\mathbb{E}_{s\sim d_\mu}\big[\pi(\mathcal{A}^*(s)\mid s)\big]
=
1-M_{\mathrm{off}}(\pi)-\bar\eta_\pi.
\]
Combining the above bounds, for each $t$ we have
\[
\mathbb{P}(E_t)
\le
C(\pi)\cdot \big(1-M_{\mathrm{off}}(\pi)-\bar\eta_\pi\big).
\]
Summing over $t=1,\dots,H$ and using $1-J(\pi)\le \sum_{t=1}^H \mathbb{P}(E_t)$ gives
\[
1-J(\pi)
\le
H\cdot C(\pi)\cdot \big(1-M_{\mathrm{off}}(\pi)-\bar\eta_\pi\big),
\]
which is equivalent to Eq~\ref{eq:offline_online_bound_app}. This concludes the proof.
\end{proof}

\paragraph{Discussion.}
Theorem~\ref{thm:off2on_app} (Theorem~\ref{thm:off2on_main}) clarifies why an offline one-step matching score can be an unreliable proxy for the online success probability $J(\pi)$ in multi-step GUI navigation under partial verifiability.
Offline evaluation is computed on a fixed dataset distribution $d_\mu$ and credits only the single demonstrated action $\tilde a(s)$.
In contrast, online success depends on the policy-induced state distributions $\{d_{\pi,t}\}_{t=1}^H$ and on choosing \emph{any} valid action in $\mathcal{A}^*(s)$ over a long horizon.
The theorem makes this mismatch explicit through two quantities: the occupancy mismatch $C(\pi)$ and the unobserved off-demo validity mass $\bar\eta_\pi$.
As a result, offline-to-online predictability can break down in several ways.

\textbf{(i) Distribution shift and error accumulation (large $C(\pi)$).}
Even if $\pi$ matches the demonstrator well on states drawn from $d_\mu$, small errors can compound over time and shift the online trajectory distribution away from the offline support.
When $C(\pi)$ is large, $M_{\mathrm{off}}(\pi)$ provides limited information about the states that dominate online performance.
In this regime, $M_{\mathrm{off}}(\pi)$ may improve while $J(\pi)$ stagnates (or decreases) because failures are driven by states that are rarely or never seen in the offline data.

\textbf{(ii) Non-identifiability under partial verifiability (unstable $\bar\eta_\pi$).}
Offline matching measures only $\pi(\tilde a(s)\mid s)$, whereas the true one-step validity is
\[
\pi(\mathcal{A}^*(s)\mid s)=\pi(\tilde a(s)\mid s)+\eta_\pi(s),
\qquad
\eta_\pi(s)=\pi\!\left(\mathcal{A}^*(s)\setminus\{\tilde a(s)\}\mid s\right).
\]
Therefore, $M_{\mathrm{off}}(\pi)$ does not determine true step validity unless the off-demo validity mass $\eta_\pi(s)$ is negligible or approximately invariant across the policies being compared.
Importantly, a \emph{larger} $\bar\eta_\pi$ does \emph{not} imply better predictability: the issue is that $\eta_\pi$ is unobserved under offline verification and can vary substantially across policies, so changes in $M_{\mathrm{off}}(\pi)$ may reflect reallocation of probability mass rather than genuine improvements in correctness.

\begin{example}{Larger $\eta_\pi$ does not mean better predictability}{policy_comparison}
    Consider a single decision state $s$ (so $C(\pi)=1$) with three actions:
a demonstrated valid action $a^\star=\tilde a(s)$, an alternative valid action $a'$, and an invalid action $a_{\mathrm{bad}}$.
Thus $\mathcal{A}^*(s)=\{a^\star,a'\}$ and choosing any action outside $\mathcal{A}^*(s)$ causes failure.
Offline matching credits only $a^\star$, so
\[
M_{\mathrm{off}}(\pi)=\pi(a^\star\mid s),\qquad
\eta_\pi(s)=\pi(a'\mid s),\qquad
J(\pi)=\pi(a^\star\mid s)+\pi(a'\mid s)=1-\pi(a_{\mathrm{bad}}\mid s).
\]

Compare two policies:
\[
\pi_1(a^\star\mid s)=0.2,\ \pi_1(a'\mid s)=0.7,\ \pi_1(a_{\mathrm{bad}}\mid s)=0.1
\quad\Rightarrow\quad
M_{\mathrm{off}}(\pi_1)=0.2,\ \eta_{\pi_1}(s)=0.7,\ J(\pi_1)=0.9,
\]
\[
\pi_2(a^\star\mid s)=0.4,\ \pi_2(a'\mid s)=0.1,\ \pi_2(a_{\mathrm{bad}}\mid s)=0.5
\quad\Rightarrow\quad
M_{\mathrm{off}}(\pi_2)=0.4,\ \eta_{\pi_2}(s)=0.1,\ J(\pi_2)=0.5.
\]
Here the offline score increases ($0.2\to0.4$), while the off-demo validity mass changes dramatically ($0.7\to0.1$) and the true success probability drops sharply ($0.9\to0.5$).
This happens because $M_{\mathrm{off}}$ only tracks probability on the single demonstrated action $a^\star$; it cannot distinguish whether probability mass is reallocated from other valid alternatives $a'$ (uncredited) to invalid actions $a_{\mathrm{bad}}$.
\end{example}

\textbf{(iii) Offline overfitting can reduce online robustness.}
Example~\ref{exa:policy_comparison} also illustrates another failure mode: maximizing an offline demo-matching score can encourage demo-specific behavior. Because $M_{\mathrm{off}}(\pi)$ rewards only matching the single demonstrated action $\tilde a(s)$, a policy may increase $M_{\mathrm{off}}(\pi)$ by concentrating probability mass on $\tilde a(s)$ while reducing exploration of other valid alternatives.
This overfitting reduces behavioral diversity and weakens recovery strategies that are essential for interactive agents.
In long-horizon GUI navigation, the impact is amplified by error accumulation.
A small early mistake can move the agent to states that are poorly covered by the offline data, where the policy has not learned robust correction behaviors.
This increases state-distribution shift (larger $C(\pi)$) and can lower the overall success probability $J(\pi)$.
Consequently, it is possible for $M_{\mathrm{off}}(\pi)$ to improve while $J(\pi)$ decreases.

Taken together, these failure modes explain why offline one-step matching can be poorly predictive of online success in multi-step GUI navigation:
offline evaluation measures demo-matching under $d_\mu$, whereas online success depends on long-horizon validity under $\{d_{\pi,t}\}$ and is confounded by unobserved (and potentially unstable) off-demo validity mass.

\subsubsection{KL regularization improves predictability}\label{ap:KL_improves_predictability}
Many RLVR pipelines drop the KL regularization term for efficiency \citep{yu2025dapo,liu2025understanding,zhou2025guig1,yang2025gta1}.
In our setting, however, KL regularization is important because step-wise offline matching is only partially verifiable: matches to the demonstrated action provide reliable positive signal, while non-matches are ambiguous.
Below we connect KL regularization (to a reference policy) to the two quantities that govern offline-to-online predictability: the occupancy mismatch $C(\pi)$ and the off-demo validity mass $\bar\eta_\pi$.

\begin{lemma}{KL regularization controls distribution shift}{kl_occupancy}
Let $\pi_{\mathrm{ref}}$ be a reference policy and assume a per-state KL constraint
\[
\mathrm{KL}\!\left(\pi(\cdot\mid s)\,\|\,\pi_{\mathrm{ref}}(\cdot\mid s)\right)\le \varepsilon,
\qquad \forall s\in\mathcal{S}.
\]
Then Pinsker's inequality implies
\[
\mathrm{TV}\!\left(\pi(\cdot\mid s),\pi_{\mathrm{ref}}(\cdot\mid s)\right)\le \sqrt{\varepsilon/2},
\qquad \forall s\in\mathcal{S}.
\]
Consequently, the induced state visitation distributions satisfy
\[
\|d_{\pi,t}-d_{\pi_{\mathrm{ref}},t}\|_1 \le t\sqrt{2\varepsilon},
\qquad \forall t\in[H].
\]
Moreover, if $d_\mu(s)\ge \rho>0$ whenever $d_\mu(s)>0$ and $d_{\pi_{\mathrm{ref}},t}\ll d_\mu$, then
\[
\sup_{s:\,d_\mu(s)>0}\frac{d_{\pi,t}(s)}{d_\mu(s)}
\le
\sup_{s:\,d_\mu(s)>0}\frac{d_{\pi_{\mathrm{ref}},t}(s)}{d_\mu(s)}
+\frac{\|d_{\pi,t}-d_{\pi_{\mathrm{ref}},t}\|_1}{\rho}
\le
C(\pi_{\mathrm{ref}})+\frac{t\sqrt{2\varepsilon}}{\rho}.
\]
In particular, KL regularization bounds how much the occupancy mismatch $C(\pi)$ can grow relative to $\pi_{\mathrm{ref}}$.
\end{lemma}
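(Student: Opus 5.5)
The plan has three steps: pass from per-state KL to per-state total variation, then to a bound on state-occupancy drift via a telescoping (simulation-lemma) argument, and finally to the pointwise density-ratio bound using the floor $\rho$. The first step is immediate. Pinsker's inequality gives $\mathrm{TV}(\pi(\cdot\mid s),\pi_{\mathrm{ref}}(\cdot\mid s))\le\sqrt{\mathrm{KL}/2}\le\sqrt{\varepsilon/2}$ for every $s$. Equivalently, $\|\pi(\cdot\mid s)-\pi_{\mathrm{ref}}(\cdot\mid s)\|_1\le\sqrt{2\varepsilon}$, with TV being half the $\ell_1$ distance.

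For the occupancy bound I will argue by induction on $t$. Both policies start from the same initial distribution, so $d_{\pi,1}=d_{\pi_{\mathrm{ref}},1}$. Write the one-step recursion $d_{\pi,t+1}(s')=\sum_{s,a}d_{\pi,t}(s)\pi(a\mid s)P(s'\mid s,a)$ and add and subtract $d_{\pi_{\mathrm{ref}},t}(s)\pi(a\mid s)$. This splits $d_{\pi,t+1}-d_{\pi_{\mathrm{ref}},t+1}$ into two terms:
\begin{itemize}
\item the propagated error $\sum_{s,a}(d_{\pi,t}-d_{\pi_{\mathrm{ref}},t})(s)\,\pi(a\mid s)P(\cdot\mid s,a)$, whose $\ell_1$ norm is at most $\|d_{\pi,t}-d_{\pi_{\mathrm{ref}},t}\|_1$ because the Markov kernel is an $\ell_1$ contraction;
\item the fresh error $\sum_s d_{\pi_{\mathrm{ref}},t}(s)\sum_a(\pi-\pi_{\mathrm{ref}})(a\mid s)P(\cdot\mid s,a)$, whose $\ell_1$ norm is at most $\mathbb{E}_{s\sim d_{\pi_{\mathrm{ref}},t}}\|\pi(\cdot\mid s)-\pi_{\mathrm{ref}}(\cdot\mid s)\|_1\le\sqrt{2\varepsilon}$.
\end{itemize}
Summing gives $\|d_{\pi,t}-d_{\pi_{\mathrm{ref}},t}\|_1\le(t-1)\sqrt{2\varepsilon}\le t\sqrt{2\varepsilon}$. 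If the paper indexes the first state as $t=1$ with no prior action, this is even slightly stronger than stated; otherwise the $t$ count is exact. This is the step I expect to need the most care, mainly in keeping the state-indexing convention consistent with how $d_{\pi,t}$ is defined and in checking that the histories-as-states convention keeps the kernel Markov.

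For the ratio bound, fix $s$ with $d_\mu(s)>0$. Then
\[
\frac{d_{\pi,t}(s)}{d_\mu(s)}\le\frac{d_{\pi_{\mathrm{ref}},t}(s)}{d_\mu(s)}+\frac{|d_{\pi,t}(s)-d_{\pi_{\mathrm{ref}},t}(s)|}{d_\mu(s)}.
\]
Bound the second term by $\|d_{\pi,t}-d_{\pi_{\mathrm{ref}},t}\|_1/\rho$, using that a single coordinate is at most the $\ell_1$ norm and that $d_\mu(s)\ge\rho$. Take the supremum over $s$, and bound the first term by its maximum over $t$, which is $C(\pi_{\mathrm{ref}})$ (finite because $d_{\pi_{\mathrm{ref}},t}\ll d_\mu$). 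Substituting the occupancy bound gives the displayed chain. Maximizing over $t\in[H]$ then yields $C(\pi)\le C(\pi_{\mathrm{ref}})+H\sqrt{2\varepsilon}/\rho$, which is Eq~\ref{eq:C_growth_main}.

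The supremum in $C(\pi)$ runs only over $\mathrm{supp}(d_\mu)$. Mass of $d_{\pi,t}$ outside that support is not controlled by this ratio, but it is controlled by the same $\ell_1$ bound, which is consistent with the lemma as stated.
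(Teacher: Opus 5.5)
Your proposal is correct and follows the paper's proof essentially step for step. You use Pinsker for per-state TV (equivalently $\ell_1$) control, then the one-step bound $\|d_{\pi,t+1}-d_{\pi_{\mathrm{ref}},t+1}\|_1\le\|d_{\pi,t}-d_{\pi_{\mathrm{ref}},t}\|_1+\sqrt{2\varepsilon}$ iterated from a common initial distribution, then the pointwise ratio bound via $d_\mu(s)\ge\rho$ and $|x_s|\le\|x\|_1$.

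The differences are cosmetic. You add and subtract $d_{\pi_{\mathrm{ref}},t}(s)\pi(a\mid s)$ where the paper inserts $d_{\pi,t}P_{\pi_{\mathrm{ref}}}$; this is immaterial given the uniform per-state KL bound. Your indexing from $t=1$ gives $(t-1)\sqrt{2\varepsilon}$, whereas the paper starts from $d_{\pi,0}$ and gets $t\sqrt{2\varepsilon}$.
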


\begin{proof}
\textbf{Step 1: From KL to per-state TV.}
By Pinsker's inequality, for every $s$,
\[
\mathrm{TV}\!\left(\pi(\cdot\mid s),\pi_{\mathrm{ref}}(\cdot\mid s)\right)
\le
\sqrt{\tfrac12\,\mathrm{KL}\!\left(\pi(\cdot\mid s)\,\|\,\pi_{\mathrm{ref}}(\cdot\mid s)\right)}
\le
\sqrt{\varepsilon/2}.
\]
Equivalently, for every measurable set $A\subseteq\mathcal{A}$,
\begin{equation}
\label{eq:tv_event_bound_app}
\big|\pi(A\mid s)-\pi_{\mathrm{ref}}(A\mid s)\big|
\le
\mathrm{TV}\!\left(\pi(\cdot\mid s),\pi_{\mathrm{ref}}(\cdot\mid s)\right)
\le
\sqrt{\varepsilon/2}.
\end{equation}

\textbf{Step 2: One-step propagation bound.}
Let $P_\pi$ denote the (time-homogeneous) state-transition operator induced by $\pi$:
\[
(P_\pi \nu)(s') \triangleq \sum_{s\in\mathcal{S}} \nu(s)\sum_{a\in\mathcal{A}} \pi(a\mid s)\,P(s'\mid s,a),
\]
and similarly define $P_{\pi_{\mathrm{ref}}}$.
For any distribution $\nu$ over states, consider $\|\nu P_\pi - \nu P_{\pi_{\mathrm{ref}}}\|_1$.
For each $s'$, define
\[
P_\pi(s'\mid s)\triangleq \sum_a \pi(a\mid s)P(s'\mid s,a),
\qquad
P_{\pi_{\mathrm{ref}}}(s'\mid s)\triangleq \sum_a \pi_{\mathrm{ref}}(a\mid s)P(s'\mid s,a).
\]
Then
\[
\|\nu P_\pi - \nu P_{\pi_{\mathrm{ref}}}\|_1
=
\left\|\sum_s \nu(s)\big(P_\pi(\cdot\mid s)-P_{\pi_{\mathrm{ref}}}(\cdot\mid s)\big)\right\|_1
\le
\sum_s \nu(s)\,\big\|P_\pi(\cdot\mid s)-P_{\pi_{\mathrm{ref}}}(\cdot\mid s)\big\|_1.
\]
Moreover, for each fixed $s$,
\[
\big\|P_\pi(\cdot\mid s)-P_{\pi_{\mathrm{ref}}}(\cdot\mid s)\big\|_1
=
\left\|\sum_a \big(\pi(a\mid s)-\pi_{\mathrm{ref}}(a\mid s)\big)P(\cdot\mid s,a)\right\|_1
\le
\sum_a \big|\pi(a\mid s)-\pi_{\mathrm{ref}}(a\mid s)\big|
=
\|\pi(\cdot\mid s)-\pi_{\mathrm{ref}}(\cdot\mid s)\|_1.
\]
Using $\|\cdot\|_1 = 2\,\mathrm{TV}(\cdot,\cdot)$ for distributions,
\[
\big\|P_\pi(\cdot\mid s)-P_{\pi_{\mathrm{ref}}}(\cdot\mid s)\big\|_1
\le
2\,\mathrm{TV}\!\left(\pi(\cdot\mid s),\pi_{\mathrm{ref}}(\cdot\mid s)\right)
\le
2\sqrt{\varepsilon/2}
=
\sqrt{2\varepsilon}.
\]
Therefore, for any $\nu$,
\begin{equation}
\label{eq:one_step_shift_app}
\|\nu P_\pi - \nu P_{\pi_{\mathrm{ref}}}\|_1 \le \sqrt{2\varepsilon}.
\end{equation}

\textbf{Step 3: Telescoping over $t$ steps.}
Let $d_{\pi,t}$ and $d_{\pi_{\mathrm{ref}},t}$ denote the state distributions at step $t$ under $\pi$ and $\pi_{\mathrm{ref}}$, respectively.
Using the recursion $d_{\pi,t+1}=d_{\pi,t}P_\pi$ and $d_{\pi_{\mathrm{ref}},t+1}=d_{\pi_{\mathrm{ref}},t}P_{\pi_{\mathrm{ref}}}$, we have
\[
\|d_{\pi,t+1}-d_{\pi_{\mathrm{ref}},t+1}\|_1
=
\|d_{\pi,t}P_\pi-d_{\pi_{\mathrm{ref}},t}P_{\pi_{\mathrm{ref}}}\|_1
\le
\underbrace{\|d_{\pi,t}P_\pi-d_{\pi,t}P_{\pi_{\mathrm{ref}}}\|_1}_{\le\sqrt{2\varepsilon}\ \text{by \eqref{eq:one_step_shift_app}}}
+
\underbrace{\|d_{\pi,t}P_{\pi_{\mathrm{ref}}}-d_{\pi_{\mathrm{ref}},t}P_{\pi_{\mathrm{ref}}}\|_1}_{\le \|d_{\pi,t}-d_{\pi_{\mathrm{ref}},t}\|_1}.
\]
Thus,
\[
\|d_{\pi,t+1}-d_{\pi_{\mathrm{ref}},t+1}\|_1
\le
\|d_{\pi,t}-d_{\pi_{\mathrm{ref}},t}\|_1 + \sqrt{2\varepsilon}.
\]
Iterating this inequality and using $\|d_{\pi,0}-d_{\pi_{\mathrm{ref}},0}\|_1=0$ (same initial distribution) yields
\[
\|d_{\pi,t}-d_{\pi_{\mathrm{ref}},t}\|_1 \le t\sqrt{2\varepsilon}.
\]

\textbf{Step 4: Bounding occupancy mismatch relative to $d_\mu$.}
Assume $d_{\pi_{\mathrm{ref}},t}\ll d_\mu$ and $d_\mu(s)\ge \rho>0$ whenever $d_\mu(s)>0$.
For any $s$ with $d_\mu(s)>0$,
\[
\frac{d_{\pi,t}(s)}{d_\mu(s)}
=
\frac{d_{\pi_{\mathrm{ref}},t}(s)}{d_\mu(s)}+\frac{d_{\pi,t}(s)-d_{\pi_{\mathrm{ref}},t}(s)}{d_\mu(s)}
\le
\frac{d_{\pi_{\mathrm{ref}},t}(s)}{d_\mu(s)}+\frac{|d_{\pi,t}(s)-d_{\pi_{\mathrm{ref}},t}(s)|}{\rho}.
\]
Taking supremum over $s$ with $d_\mu(s)>0$ and using
$\sup_s |x_s|\le \sum_s |x_s| = \|x\|_1$ gives
\[
\sup_{s:\,d_\mu(s)>0}\frac{d_{\pi,t}(s)}{d_\mu(s)}
\le
\sup_{s:\,d_\mu(s)>0}\frac{d_{\pi_{\mathrm{ref}},t}(s)}{d_\mu(s)}
+\frac{\|d_{\pi,t}-d_{\pi_{\mathrm{ref}},t}\|_1}{\rho}
\le
C(\pi_{\mathrm{ref}})+\frac{t\sqrt{2\varepsilon}}{\rho},
\]
which completes the proof.
\end{proof}

\begin{lemma}{KL limits off-demo validity mass under single-demo supervision}{kl_shrinks_ambiguity}
Fix a state $s$ and let $\tilde a=\tilde a(s)$ be the demonstrated action.
Assume $\mathrm{KL}(\pi(\cdot\mid s)\,\|\,\pi_{\mathrm{ref}}(\cdot\mid s))\le \varepsilon$ and that the reference policy is demo-concentrated:
$\pi_{\mathrm{ref}}(\tilde a\mid s)\ge 1-\delta(s)$.
Then
\[
1-\pi(\tilde a\mid s)\le \delta(s)+\sqrt{\varepsilon/2},
\qquad
\eta_\pi(s)=\pi\!\left(\mathcal{A}^*(s)\setminus\{\tilde a\}\mid s\right)\le 1-\pi(\tilde a\mid s),
\]
and therefore
\[
\eta_\pi(s)\le \delta(s)+\sqrt{\varepsilon/2},
\qquad
\bar\eta_\pi\le \bar\delta+\sqrt{\varepsilon/2},
\quad
\bar\delta\triangleq \mathbb{E}_{s\sim d_\mu}[\delta(s)].
\]
\end{lemma}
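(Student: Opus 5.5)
The argument runs in three short steps, all at a fixed state $s$, followed by an averaging step. First, convert the KL constraint into a bound on total variation, exactly as in Step 1 of the proof of Lemma~\ref{lemma:kl_occupancy}. Pinsker's inequality gives
\[
\mathrm{TV}\!\left(\pi(\cdot\mid s),\pi_{\mathrm{ref}}(\cdot\mid s)\right)\le\sqrt{\varepsilon/2}.
\]
Applying the event form of the TV bound to the singleton $A=\{\tilde a\}$ then yields $\pi_{\mathrm{ref}}(\tilde a\mid s)-\pi(\tilde a\mid s)\le\sqrt{\varepsilon/2}$. Rearranging this and using the demo-concentration assumption $\pi_{\mathrm{ref}}(\tilde a\mid s)\ge 1-\delta(s)$ gives
\[
1-\pi(\tilde a\mid s)\le 1-\pi_{\mathrm{ref}}(\tilde a\mid s)+\sqrt{\varepsilon/2}\le \delta(s)+\sqrt{\varepsilon/2}.
\]
This is the first claimed inequality.

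Second, bound the off-demo validity mass by monotonicity of probability. Since $\mathcal{A}^*(s)\setminus\{\tilde a\}\subseteq\mathcal{A}\setminus\{\tilde a\}$, we have
\[
\eta_\pi(s)\le \pi(\mathcal{A}\setminus\{\tilde a\}\mid s)=1-\pi(\tilde a\mid s).
\]
Chaining this with the first step gives the pointwise bound $\eta_\pi(s)\le\delta(s)+\sqrt{\varepsilon/2}$.

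Third, obtain the averaged statement. The bound on $\eta_\pi(s)$ holds at every state because the KL constraint is assumed for all $s$. Taking expectation over $s\sim d_\mu$ and using linearity gives $\bar\eta_\pi\le\bar\delta+\sqrt{\varepsilon/2}$. The additive constant passes through the expectation unchanged since $d_\mu$ is a probability distribution.

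No step is a genuine obstacle; the only point needing care is the first. We need the one-sided singleton bound $|\pi(A\mid s)-\pi_{\mathrm{ref}}(A\mid s)|\le\mathrm{TV}$ rather than the $\ell_1$ form. Using the $\ell_1$ form would lose a factor of $2$ and give $\sqrt{2\varepsilon}$ instead of $\sqrt{\varepsilon/2}$. We can cite equation~\eqref{eq:tv_event_bound_app} directly for this.
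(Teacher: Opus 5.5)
Your proposal is correct and follows essentially the same route as the paper: Pinsker, then the event form of the TV bound, then monotonicity via $\mathcal{A}^*(s)\setminus\{\tilde a\}\subseteq\{a\neq\tilde a\}$, then averaging over $d_\mu$. The only difference is that you apply the event bound to the singleton $\{\tilde a\}$ while the paper applies it to the complement $\{a\neq\tilde a\}$, which gives the same inequality. Your explicit remark that the averaged bound needs the KL constraint at ($d_\mu$-almost) every state, not just at one fixed $s$, is a point the paper's proof leaves implicit.
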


\begin{proof}
By Pinsker's inequality,
\[
\mathrm{TV}\!\left(\pi(\cdot\mid s),\pi_{\mathrm{ref}}(\cdot\mid s)\right)\le \sqrt{\varepsilon/2}.
\]
Let $E\triangleq\{a\neq \tilde a\}$. Using the standard event bound for total variation distance,
\[
\pi(E\mid s)\le \pi_{\mathrm{ref}}(E\mid s)
+\mathrm{TV}\!\left(\pi(\cdot\mid s),\pi_{\mathrm{ref}}(\cdot\mid s)\right)
\le \big(1-\pi_{\mathrm{ref}}(\tilde a\mid s)\big)+\sqrt{\varepsilon/2}
\le \delta(s)+\sqrt{\varepsilon/2}.
\]
Since $\pi(E\mid s)=1-\pi(\tilde a\mid s)$, we obtain
$1-\pi(\tilde a\mid s)\le \delta(s)+\sqrt{\varepsilon/2}$.

Next, because $\mathcal{A}^*(s)\setminus\{\tilde a\}\subseteq E$, we have
\[
\eta_\pi(s)=\pi\!\left(\mathcal{A}^*(s)\setminus\{\tilde a\}\mid s\right)\le \pi(E\mid s)
\le \delta(s)+\sqrt{\varepsilon/2}.
\]
Taking expectation over $s\sim d_\mu$ yields
\[
\bar\eta_\pi
=
\mathbb{E}_{s\sim d_\mu}[\eta_\pi(s)]
\le
\mathbb{E}_{s\sim d_\mu}[\delta(s)] + \sqrt{\varepsilon/2}
=
\bar\delta+\sqrt{\varepsilon/2},
\]
which completes the proof.
\end{proof}

\paragraph{Takeaway.}
The offline-to-online bound shows that predictability depends on (i) distribution shift along the policy's own trajectories (captured by $C(\pi)$) and (ii) the unobserved probability mass on valid-but-uncredited actions under single-demo verification (captured by $\bar\eta_\pi$).
Lemma~\ref{lemma:kl_occupancy} and Lemma~\ref{lemma:kl_shrinks_ambiguity} explain why KL regularization helps in this setting:
a KL trust region simultaneously limits state-distribution drift (controlling $C(\pi)$) and prevents the policy from moving too much probability mass away from the demonstrated action (controlling $\bar\eta_\pi$).
As a result, conservative KL-regularized optimization keeps training in a regime where improvements in the offline matching score $M_{\mathrm{off}}(\pi)$ are more likely to reflect genuine improvements in online success $J(\pi)$.